\numberwithin{equation}{section}
\pgfplotsset{compat=newest}
\tikzset{
    %
    png export/.style={
        external/system call/.add={}{; convert -density 600 -transparent white "\image.pdf" "\image.png"},
        /pgf/images/external info,
        /pgf/images/include external/.code={%
            \includegraphics
            [width=\pgfexternalwidth,height=\pgfexternalheight]
            {##1.png}%
        },
    },
    png export,
}
\newtheorem{theorem}{Theorem}[section]
\newtheorem{lemma}[theorem]{Lemma}
\newtheorem{proposition}[theorem]{Proposition}
\newtheorem{definition}[theorem]{Definition}
\newtheorem{remark}[theorem]{Remark}
\newtheorem{assumption}[theorem]{Assumption}
\newsavebox{\fminipagebox}
\NewDocumentEnvironment{fminipage}{m O{\fboxsep}}
 {\par\kern#2\noindent\begin{lrbox}{\fminipagebox}
  \begin{minipage}{#1}\ignorespaces}
 {\end{minipage}\end{lrbox}%
  \makebox[#1]{%
    \kern\dimexpr-\fboxsep-\fboxrule\relax
    \fbox{\usebox{\fminipagebox}}%
    \kern\dimexpr-\fboxsep-\fboxrule\relax
  }\par\kern#2
 }
\def\letters{a,b,c,d,e,f,g,h,i,j,k,l,m,n,o,p,q,r,s,t,u,v,w,x,y,z}
\def\Letters{A,B,C,D,E,F,G,H,I,J,K,L,M,N,O,P,Q,R,S,T,U,V,W,X,Y,Z}
\Letters \do{%
  \expandafter\edef\csname\@l bb\endcsname{\noexpand\ensuremath{%
  \noexpand\mathbb{\@l}}}%
  \expandafter\edef\csname\@l bf\endcsname{{\noexpand\bf \@l}}%
  \expandafter\edef\csname\@l cal\endcsname{\noexpand\ensuremath{%
  \noexpand\mathcal{\@l}}}%
  \expandafter\edef\csname\@l eu\endcsname{\noexpand\ensuremath{%
  \noexpand\EuScript{\@l}}}%
  \expandafter\edef\csname\@l frak\endcsname{\noexpand\ensuremath{%
  \noexpand\mathfrak{\@l}}}%
  \expandafter\edef\csname\@l rm\endcsname{{\noexpand\rm \@l}}%
  \expandafter\edef\csname\@l scr\endcsname{\noexpand\ensuremath{%
  \noexpand\mathscr{\@l}}}%
}
\letters \do{%
  \expandafter\edef\csname\@l bf\endcsname{{\noexpand\bf \@l}}%
  \expandafter\edef\csname\@l frak\endcsname{\noexpand\ensuremath{%
  \noexpand\mathfrak{\@l}}}%
  \expandafter\edef\csname\@l scr\endcsname{\noexpand\ensuremath{%
  \noexpand\mathscr{\@l}}}%
}
\definecolor{shadecolor}{rgb}{0.6, 0.6, 0.6} 
\definecolor{red}{rgb}{1,0,0.2} 
\definecolor{darkgreen}{rgb}{0, 0.6, 0}
\newcommand{\isdef}{\mathrel{\mathrel{\mathop:}=}}
\newcommand{\N}{\mathbb{N}}
\newcommand{\HS}{\operatorname{HS}}
\newcommand{\Sig}{\mathnormal{\Sigma}}
\newcommand{\op}{\text{op}}
\newcommand{\tr}{\text{tr}}
\newcommand{\bs}{\boldsymbol}
\DeclareMathOperator{\spn}{span}
\begin{document}
\title{Kernel-Based Nonparametric Tests For Shape Constraints}
 \thanks{The author is grateful to Paul Schneider, Michael Multerer, Patrick Gagliardini, Damir Filipovi{\'c}, Markus Pelger, and Emanuele Luzzi for helpful comments.}
\author{Rohan Sen}
\address{
Rohan Sen,
USI Lugano,
Via Buffi 6, 6900 Lugano, Switzerland.}
\email{rohan.sen@usi.ch}

\begin{abstract}
   We propose a kernel-based nonparametric framework for mean-variance optimization that enables inference on economically motivated shape constraints in finance, including positivity, monotonicity, and convexity. Many central hypotheses in financial econometrics are naturally expressed as shape relations on latent functions (e.g., term premia, CAPM relations, and the pricing kernel), yet enforcing such constraints during estimation can mask economically meaningful violations; our approach therefore separates learning from validation by first estimating an unconstrained solution and then testing shape properties. We establish statistical properties of the regularized sample estimator and derive rigorous guarantees, including asymptotic consistency, a functional central limit theorem, and a finite-sample deviation bound achieving the Monte Carlo rate up to a regularization term. Building on these results, we construct a joint Wald-type statistic to test shape constraints on finite grids. An efficient algorithm based on a pivoted Cholesky factorization yields scalability to large datasets. Numerical studies, including an options-based asset-pricing application, illustrate the usefulness of the proposed method for evaluating monotonicity and convexity restrictions.
\end{abstract}

\maketitle
\vspace{0.6em}
\noindent\textit{MSC 2020:} 62G10, 62G20, 62P05, 46E22; \textit{JEL:} C12, C13, C14, C58.\\
\noindent\textit{Keywords:} Kernel methods, nonparametric estimation, positivity/monotonicity/convexity tests, mean-variance optimization.

\vspace{0.8em}

\section{Introduction}\label{sec:introduction}
Many modern learning and decision problems require not only accurate prediction of the level of an unknown function but also reliable control of its local behavior --  
 positivity, monotonicity, convexity, and other shape features that are naturally expressed through derivatives. Applications where shape constraints play an important role include economics and asset pricing (\citet{Rochet1998, linnshiveshumway, rosenberg2002empirical, aitsahalia2000nonparametric,  jackwerth2000recovering}), optimal transport problems (\citet{makkuva20a}), to name a few. In risk-sensitive tasks, it is often desirable to optimize a concave performance functional that depends on the value and the derivatives of an unknown function, while simultaneously quantifying uncertainty in those derivative functionals. While there is a substantial literature on estimating functions under shape constraints, far fewer contributions develop formal statistical tests to determine whether such constraints actually hold in the population, especially in flexible RKHS environments. In financial econometrics, for instance, many central economically motivated hypotheses are expressed as shape restrictions on latent functions or some underlying financial variables. For example, the liquidity preference hypothesis predicts higher expected returns for bonds with longer times to maturity; the Capital Asset Pricing Model (CAPM) implies higher expected returns for stocks with higher betas; and standard asset pricing models imply that the stochastic discount factor is declining in market returns and convex-shaped; see \citet{Richardson1992, Bakshi2010, Patton2010, Romano2013}. Empirically, such statements are typically estimated under strong parametric or structural assumptions. However, imposing shape constraints at the estimation stage can introduce artifacts that conceal economically meaningful violations of these restrictions. This motivates a flexible, data-driven learning approach that makes minimal assumptions beyond smoothness so that salient structures can arise directly from the data and be verified via statistical tests for such shape properties.

Motivated by this, we propose a nonparametric testing framework based on \emph{reproducing kernel Hilbert spaces (RKHS)}, in which sufficiently smooth regression functions are embedded in an RKHS. The central idea is to exploit derivative-reproducing properties and interpret derivative evaluations as bounded linear functionals; see \citet{zhou2008derivative}. This framework supports a rigorous study of both the asymptotic and finite-sample behavior of the resulting optimal sample estimator. Formulating the learning problem in an RKHS not only provides computational convenience through representer theorems but also facilitates a unified treatment of function values and their derivatives by viewing derivative evaluations as bounded linear functionals, thereby enabling joint function and shape estimation.

\subsection{Related work} Our work is connected to several strands of the existing literature. 
The first concerns shape-constrained regression, where the estimator is required to satisfy properties such as positivity, monotonicity, or convexity; see, for example, \citet{Groeneboom2014, Sen2011, nonparametric20, muzellec2022, SDP_JMLR} and references therein. A second, more closely related line of research studies the estimation of such shape restrictions within regression models; see \citet{Silvapulle2001} and the works cited therein. In the econometrics literature, parametric tests have been proposed for models with inequality constraints, e.g., \citet{Shapiro1985, Wolak1987, Wolak1989, Andrews1998}, while nonparametric tests for shape restrictions have been explored in \citet{Ghoshal2000, Hall2000, Juditsky2002, Birke2013}. These nonparametric procedures are typically built on local averaging via kernel smoothing and largely overlook issues of computational scalability. Kernel-based techniques have also begun to play a growing role in financial econometrics; see \citet{kozak, Sandulescu2020, Boudabsa2021, Filipovic2022, Filipovic2024, Luzzi2025}. Another related area is statistical learning theory, where both asymptotic and finite-sample properties have been analyzed in RKHS settings, often with specifications of representer theorems; see, among others, \citet{Schlkopf2001, Cucker2001OnTM, cristianini2002support, Caponnetto2006, zhou2008derivative, Mahoney2015, Filipovic2025}. Nonetheless, this line of work is typically framed in terms of prediction and generalization, and does not directly address shape constraints in estimation. In contrast to shape-restricted estimation methods that incorporate constraints directly in the fitting step, our framework separates learning from validation: we first estimate an unconstrained RKHS predictor, then assess directional shape properties via finite-dimensional cone projections of derivative evaluations, using a plug-in covariance matrix derived from a mean-variance criterion.
\subsection{Contributions} Our contributions are as follows: 
\begin{enumerate}
    \item We formulate a general mean-variance learning problem in RKHS built from a linear functional of function values and their gradients up to a fixed order. We establish the characterizations of the population and empirical optimizers and derive a representer theorem (Theorem~\ref{thm:representer_thm}) that reduces the computation of the optimal empirical solution to a finite-dimensional system of equations.
    \item We derive rigorous guarantees for the estimator obtained by optimizing the empirical problem; see Section~\ref{sec:stat_prop}. We establish asymptotic results, including consistency guarantees and a functional central limit theorem, which in particular implies asymptotic normality of derivative evaluations of the learned function; see Theorem~\ref{thm:asymptotic_properties}. In addition, we provide a high-probability finite-sample deviation bound that explicitly quantifies how the estimation error depends on the sample size, the regularization parameter, and the complexity of the underlying RKHS; see Theorem~\ref{thm:finite_sample}. These results ensure that the proposed procedures are both computationally tractable and statistically well-founded. 
    \item We propose a Wald-type test statistic for assessing shape constraints over a finite grid that includes positivity, monotonicity, convexity, etc. The test statistic can be written as a squared Mahalanobis distance of a projected vector of derivative evaluations and admits implementation based on a non-negative least squares program; see Theorem~\ref{thm:test_statistic}. 
    \item We provide an efficient computational solution based on the pivoted Cholesky method of \citet{harbrecht2012low, filipovic2021adaptive} that can handle large datasets with large samples and can be used for testing on dense grids in Appendix~\ref{appendix:implementation}. 
    \item We demonstrate how this framework and testing procedure can be applied in both a simulated study and an empirical study; in particular, we provide a brief discussion on the problem of assessing shape properties of the stochastic discount factor in Section~\ref{subsec:empirical_study} and showcase certain empirical results on the same. 
\end{enumerate}

\subsection{Outline}  The remainder of this article is organized as follows. In Section~\ref{sec:prelims}, we set up our problem in an RKHS and derive characterizations of the optimal solutions to the population and empirical problems; additionally, we state and prove the representer theorem. In Section~\ref{sec:stat_prop}, we derive the statistical properties of the sample estimator, including consistency and asymptotic distribution, as well as finite-sample error bounds. In Section~\ref{sec:inference}, we construct the test statistic and detail the steps for inference on shape constraints. Section~\ref{sec:experiments} addresses numerical
experiments, where we showcase the efficacy of the developed methodology. In Section~\ref{sec:conclusion}, we
conclude and identify areas for future research.

\section{Preliminaries}\label{sec:prelims}
In this section, we first fix the notation and recall certain facts about operators on Hilbert spaces that will be useful for the remainder of the paper. We refer an interested reader to \citet{ReedSimon, schatten, DunfordSchwartz} for further details on the following. 
\subsection{Notation and setting} Let $\Hcal$ be a separable Hilbert space with orthonormal basis $(e_j)_{j\in\N}$. For a linear operator $A\colon\Hcal\to\Hcal$, denote its \emph{adjoint} by $A^\ast$ and set $|A|\isdef (A^\ast A)^{1/2}$. We denote by $\Bscr(\Hcal)$ the Banach space of bounded linear operators on $\Hcal$ with the \emph{operator norm} $\|A\|_{\op} \isdef \sup\{\|Af\|_\Hcal: f\in \Hcal, \|f\|_\Hcal \leq 1\}$. The space of \emph{Hilbert-Schmidt} operators $\Hscr\Sscr(\Hcal) \isdef \{A\in \Bscr(\Hcal): \|A\|_{\HS} < \infty\}$ is a Hilbert space, equipped with the inner product $\langle A, B \rangle_{\HS} \isdef \tr(B^\ast A) = \sum_{j\in\Nbb} \langle Ae_j, Be_j\rangle_\Hcal$, the sum being independent of the orthonormal basis; for separable $\Hcal$, the space $\Hscr\Sscr(\Hcal)$ is separable as well. The space of \emph{trace-class} (nuclear) operators defined as $\Tscr(\Hcal) \isdef \{A\in \Bscr(\Hcal): \tr(|A|) < \infty\}$ is a Banach space. Furthermore, we have the continuous inclusions $\Tscr(\Hcal) \subset \Hscr\Sscr(\Hcal) \subset \Bscr(\Hcal)$ with the norm bounds $\|A\|_{\op} \leq \|A\|_{\HS} \leq \tr(|A|)$. In particular, if $A\geq 0$, then $\tr(|A|) = \tr(A)$. For $f,g\in\Hcal$, we denote by  $f\otimes g\colon\Hcal\to\Hcal$ the rank-one operator that acts as
$(f\otimes g)u\isdef\langle u,g\rangle_{\Hcal}\,f$. It satisfies 
$\langle A,\,f\otimes g\rangle_{\HS}=\langle Ag,\,f\rangle_{\Hcal}$ and 
$\|f\otimes g\|_{\HS}=\|f\|_{\Hcal}\,\|g\|_{\Hcal}$. If $A$ is self-adjoint, we write $\sigma(A)$ for its \emph{spectrum} and set $\lambda_{\min}(A)\isdef\inf\sigma(A)$, $ \lambda_{\max}(A)\isdef\sup\sigma(A)$. For $s\in \Nbb$, $\Cscr^s(\Xcal)$ denotes the set of $s$ times continuously differentiable functions from $\Xcal$ to $\Rbb$.  For a function $f$ of $d$ variables, and any multi-index $\bs \alpha \isdef (\alpha_1, \ldots, \alpha_d)\in \Nbb^d$ with $|\bs \alpha| \isdef \alpha_1 + \cdots + \alpha_d \leq s$, we denote the corresponding partial derivative of $f$ (when it exists), 
\begin{equation*}
    f^{(\bs \alpha)}(\bs x) \isdef D^{\bs \alpha} f(\bs x) = \frac{ \partial^{|\bs \alpha|}}{ \partial x_1^{\alpha_1} \ldots  \partial x_d^{\alpha_d}} f(\bs x).
\end{equation*}
We define the set $\Acal_s \isdef \{\bs \alpha \in \Nbb^d: |\bs \alpha| \leq s\}$ and $m_s \isdef |\Acal_s| = \binom{s+d}{d}$. $\Cscr^s(\Xcal)$ is a Banach space equipped with the norm $\|f\|_{\Cscr^s(\Xcal)} \isdef \max_{\bs \alpha \in \Acal_s} \, \sup_{\bs x \in \Xcal} \,\, |f^{(\bs \alpha)}(\bs x)|$. Finally, we utilize the usual notions of $o_\Pbb$ and $\Ocal_\Pbb$, and refer the reader to \citet{Vaart1998} for details. 

\subsection{Reproducing kernel Hilbert spaces}\label{subsec:RKHS}
We recap a fundamental notion in statistical machine learning, namely that of a reproducing kernel Hilbert space. For more background and applications, we refer the reader to \citet{Wen05, RKHS_book,pau_rag_16, bach2024learning}. Let $\Xcal \subset \Rbb^d$ and let $\Kcal\colon \Xcal \times \Xcal \to \Rbb$ be a symmetric function such that for any finite set $\{\bs x_1, \ldots, \bs x_N\} \subset \Xcal$, the Gram matrix $\bs K \isdef \left[\Kcal(\bs x_i, \bs x_j)\right]_{i,j=1}^N \in \Rbb^{N \times N}$ is symmetric and positive semidefinite. The RKHS $\Hcal$ associated with the kernel function $\Kcal$ is defined to the completion of $\spn\{\phi(\bs x) \isdef \Kcal(\bs x, \cdot): \bs x \in \Xcal\}$ with the inner product $\langle \cdot, \cdot \rangle_\Hcal$ given by $\langle \phi(\bs x), \phi(\bs y) \rangle_\Hcal = \Kcal(\bs x, \bs y)$. In this case, $\phi(\bs x)$ acts as the unique Riesz representer of the evaluation functional at $\bs x \in \Xcal$, and we call $\Kcal$ the \emph{reproducing kernel} of $\Hcal$. The \emph{reproducing property} says that 
\begin{equation}\label{eq:reproducing_property}
    f(\bs x) = \langle f, \phi(\bs x) \rangle_\Hcal \quad \text{for any     } f\in \Hcal, \, \bs x \in \Xcal.
\end{equation}
For a sufficiently smooth kernel $\Kcal$ on any separable $\Xcal$, the RKHS $\Hcal$ is separable, see \citet[Lemma 4.3]{cristianini2002support}. Furthermore, we have the following result. 
\begin{theorem}[{\citet[Theorem 1]{zhou2008derivative}}]\label{result:zhou}
   Let $s\in \Nbb$ and $\Kcal\colon\Xcal\times \Xcal\to\Rbb$ be a Mercer kernel such that $\Kcal \in \Cscr^{2s}(\Xcal \times \Xcal)$. Then, it holds,
   \begin{enumerate}
         \item for any $\bs x \in \Xcal$ and $\bs \alpha \in \Acal_s$, it holds, $\phi^{(\bs \alpha)}(\bs x) \in \Hcal$, where $\phi^{(\bs \alpha)}(\bs x) \isdef D^{\bs \alpha}\Kcal(\bs x, \cdot)$;
       \item a reproducing property holds for the partial derivatives for any $\bs \alpha \in \Acal_s$:
       \begin{equation}
           f^{(\bs \alpha)}(\bs x) = \langle f, \phi^{(\bs \alpha)}(\bs x) \rangle_\Hcal \quad \text{for any     } f\in\Hcal, \, \bs x \in \Xcal;
       \end{equation}
       \item the inclusion $J\colon\Hcal \hookrightarrow \Cscr^s(\Xcal)$ is well-defined and compact with
       \begin{equation*}
           \|f\|_{\Cscr^s(\Xcal)} \leq \sqrt{d^s \|\Kcal\|_{\Cscr^{2s}(\Xcal \times \Xcal)}} \, \|f\|_{\Hcal} \quad \text{for any      } f \in \Hcal.
       \end{equation*}
   \end{enumerate}
\end{theorem}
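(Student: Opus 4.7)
The plan is to prove both assertions simultaneously by induction on $|\bs\alpha|$, starting from the ordinary reproducing property \eqref{eq:reproducing_property} and promoting one coordinate derivative at a time through a limiting argument on difference quotients in $\Hcal$. The guiding principle is continuity of the inner product: if a sequence of kernel-type elements converges in $\Hcal$, then the reproducing identity transfers to the limit, and this produces the derivative reproducing property as a by-product of the membership statement.

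First, I would fix a coordinate direction $e_i$ and a multi-index $\bs\beta$ with $|\bs\beta|+1\leq s$, and assume inductively that $\phi^{(\bs\beta)}(\bs x)\in\Hcal$ with $\langle f,\phi^{(\bs\beta)}(\bs x)\rangle_{\Hcal}=D^{\bs\beta}f(\bs x)$ for all $f\in\Hcal$, $\bs x\in\Xcal$. Consider the difference quotients
\[
g_h \isdef \frac{\phi^{(\bs\beta)}(\bs x+he_i)-\phi^{(\bs\beta)}(\bs x)}{h}\in\Hcal,\qquad h\ne 0,
\]
and expand
\[
\|g_h-g_{h'}\|_{\Hcal}^{2}=\|g_h\|_{\Hcal}^{2}-2\langle g_h,g_{h'}\rangle_{\Hcal}+\|g_{h'}\|_{\Hcal}^{2}.
\]
Using the inductive reproducing identity twice, combined with symmetry of $\Kcal$, each of the three inner products rewrites as a second-order mixed difference of $D^{(\bs\beta,\bs\beta)}\Kcal$ at nearby points of $\Xcal\times\Xcal$ divided by $h h'$. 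Taylor expansion under the $\Cscr^{2s}$ hypothesis shows that all three terms converge to the common limit $D^{(\bs\beta+e_i,\bs\beta+e_i)}\Kcal(\bs x,\bs x)$, so $\{g_h\}$ is Cauchy and admits a limit $g\in\Hcal$.

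Next, I would identify $g$ pointwise with $\phi^{(\bs\beta+e_i)}(\bs x)$. For each $\bs y\in\Xcal$, the ordinary reproducing property gives $g_h(\bs y)=\langle g_h,\phi(\bs y)\rangle_{\Hcal}\to\langle g,\phi(\bs y)\rangle_{\Hcal}=g(\bs y)$, whereas $g_h(\bs y)$ is simultaneously the $i$th difference quotient at $\bs x$ of the map $\bs z\mapsto D^{\bs\beta}\Kcal(\bs z,\bs y)$, which converges by $\Cscr^{2s}$ smoothness to $D^{\bs\beta+e_i}\Kcal(\bs x,\bs y)$. Hence $g=\phi^{(\bs\beta+e_i)}(\bs x)$, closing the induction on (i). For (ii), applying the inductive reproducing identity to each term in the numerator of $g_h$ yields
\[
\langle f,g_h\rangle_{\Hcal}=\frac{D^{\bs\beta}f(\bs x+he_i)-D^{\bs\beta}f(\bs x)}{h},
\]
and letting $h\to 0$, continuity of $\langle f,\cdot\rangle_{\Hcal}$ on the left and differentiability of $D^{\bs\beta}f$ on the right produce $\langle f,\phi^{(\bs\beta+e_i)}(\bs x)\rangle_{\Hcal}=D^{\bs\beta+e_i}f(\bs x)$, as required.

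The main technical obstacle is the Cauchy estimate: one must track the Taylor expansion of the $2|\bs\beta|$-fold mixed derivative $D^{(\bs\beta,\bs\beta)}\Kcal$ carefully in both slots to extract the second-order cancellation that drives $\|g_h-g_{h'}\|_{\Hcal}\to 0$. This is precisely where the hypothesis $\Kcal\in\Cscr^{2s}(\Xcal\times\Xcal)$ is used sharply, since in the deepest inductive step $|\bs\beta|=s-1$ one needs $D^{(\bs\beta,\bs\beta)}\Kcal$ to be at least $\Cscr^{2}$, matching the assumption exactly; any weaker smoothness would leave an obstruction in the remainder term of the Taylor expansion.
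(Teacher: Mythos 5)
The paper states this result as a citation from Zhou (2008) and does not prove it, so there is no in-text argument to compare against; your proposal reconstructs the inductive difference-quotient argument that is precisely Zhou's original proof, and the outline is sound. One small refinement in the final step: you phrase the right-hand limit as coming from ``differentiability of $D^{\bs\beta}f$,'' but in fact that differentiability is not an independent hypothesis — once $g_h\to g$ in $\Hcal$, continuity of $\langle f,\cdot\rangle_{\Hcal}$ forces $\langle f,g_h\rangle_{\Hcal}=\bigl(D^{\bs\beta}f(\bs x+he_i)-D^{\bs\beta}f(\bs x)\bigr)/h$ to converge, which is exactly the statement that $\partial_i D^{\bs\beta}f(\bs x)$ exists and equals $\langle f,g\rangle_{\Hcal}$. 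Reading it this way makes clear that the argument simultaneously proves membership, the reproducing identity, and the differentiability of all $f\in\Hcal$ up to order $s$, with no circularity. Your accounting of where $\Kcal\in\Cscr^{2s}$ is used sharply (the second-order Taylor expansion of $D^{(\bs\beta,\bs\beta)}\Kcal$ at the deepest level $|\bs\beta|=s-1$, together with the symmetry $\partial_1 F(\bs x,\bs x)=\partial_2 F(\bs x,\bs x)$ to kill the first-order terms) is correct.
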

\subsection{Mean-variance optimization in RKHS}\label{subsec:mean_var_optim} One of the key advantages of formulating learning problems in an RKHS framework lies in the use of \emph{representer theorems}, see \citet{kimeldorfwahba71, Schlkopf2001, bach2024learning}; these facilitate a finite-dimensional formulation of the problem at hand that may be solved with conventional linear algebra techniques. Our result, see Theorem~\ref{thm:representer_thm}, is a variant thereof. Moreover, many learning problems involve the use of gradient information for better learning ability. For sufficiently smooth kernels, derivatives can be interpreted as bounded linear functionals in the RKHS via the reproducing property, which enables us to nonparametrically model the shape constraints in learning problems.

A related class of problems seeks to maximize a \emph{concave utility} of a task-specific functional that depends on the value and derivative information of an unknown underlying function. In this setting, a \emph{mean-variance} objective provides a principled way to balance the expected performance with variability, thereby capturing risk awareness and down-weighting high-uncertainty regions. For example, one may optimize a portfolio decision rule whose payoff depends on an underlying function and its gradients; the optimal rule can then be modeled nonparametrically from function and derivative evaluations within an RKHS.

To allow a general situation as described above, consider the space $\Xcal \times \Ycal$, where $\Ycal \subset \Rbb$ and denote $\bs z \isdef (\bs x, y) \in \Xcal \times \Ycal$. For $s\in \Nbb$, let  $h \in \Hcal\subset\Cscr^s(\Xcal)$ be any smooth function. We define our target functional of interest (that depends on $h$ and its gradients up to order $s$) as: 
\begin{equation}\label{eq:target_functional}
    \Rcal(h; \bs z) \isdef \sum_{\bs \alpha \in \Acal_s} w_{\bs \alpha}(\bs z) h^{(\bs \alpha)}(\bs x) = \sum_{\bs \alpha \in \Acal_s} w_{\bs \alpha}(\bs z) \langle h, \phi^{(\bs \alpha)} (\bs x) \rangle_\Hcal = \langle h, \psi(\bs z)\rangle_\Hcal,
\end{equation}
where the weight coefficients $w_{\bs \alpha}(\bs z) \in \Rbb$ are known measurable functions of the data, and 
\begin{equation}\label{eq:psi_z}
    \psi(\bs z) \isdef \sum_{\bs \alpha \in \Acal_s} w_{\bs \alpha}(\bs z) \phi^{(\bs \alpha)}(\bs x) \in \Hcal
\end{equation}
is the random vector in the RKHS that acts as the representer of the target functional $\Rcal(\cdot \, ;\bs z)$. 
\begin{remark}\label{rem:portfolio-functional}
In many decision problems, the target object is a score that depends linearly on the level and on the gradient information of an unknown function $h$. A generic specification is given by \eqref{eq:target_functional}, where $\bs z=(\bs x,y)$ denotes observed data and the weights $w_{\bs \alpha}(\cdot)$ encode the task via the dependencies on $h$ and its gradients. In the case, the target functional only depends on the gradient values/specified derivatives only, we can consider $\Acal \isdef \{\bs \alpha \in \Acal_s: w_{\bs \alpha}\not\equiv 0\}$. Such a form allows for flexibility and covers, for example, portfolio rules or control scores that use the function value as a signal and gradient or curvature components as sensitivity adjustments. Since each $D^{\bs \alpha} h(\bs x)$ is a bounded linear functional in an RKHS with a sufficiently smooth kernel, $\mathcal{R}(h;\bs z)$ remains linear in $h$ and admits a representer $\psi(\bs z)$.
\end{remark}

Now, suppose there is an underlying probability distribution $\Pbb$ on $\Xcal \times \Ycal$. We consider a mean-variance objective as follows:
\begin{equation}\label{eq:mean_var_optim}
    \underset{h \in \Hcal}{\operatorname{argmax}} \, \, \Ebb_{\bs z \sim \Pbb}[\Rcal(h;\bs z)] - \frac{1}{2} \Vbb_{\bs z \sim \Pbb}[\Rcal(h;\bs z)].
\end{equation}
Similar to learning problems in an RKHS, we set up the above as a (Tikhonov) regularized convex problem in $\Hcal$ as follows. For $\lambda > 0$:
\begin{equation}\label{eq:population_problem}
    h_\lambda \isdef \underset{h \in \Hcal}{\operatorname{argmin}} \, \, J_\lambda(h) \isdef -\Ebb[\Rcal(h;\bs z)] + \frac{1}{2} \Vbb[\Rcal(h;\bs z)] + \frac{\lambda}{2}\|h\|_\Hcal^2,
\end{equation}
where $\Ebb[\cdot]$ and $\Vbb[\cdot]$ are taken with respect to the population distribution $\Pbb$. Now, given observations $\bs z_1, \ldots, \bs z_N \sim \Pbb$, where $\bs z_i \isdef (\bs x_i, y_i)$, consider the \emph{empirical distribution} $\widehat \Pbb \isdef \tfrac{1}{N} \sum_{i=1}^N \delta_{\bs z_i}$. Then the empirical counterpart to Problem~\ref{eq:population_problem} is given by:
\begin{equation}\label{eq:empirical_problem}
    \widehat{h}_\lambda \isdef \underset{h \in \Hcal}{\operatorname{argmin}} \, \, \widehat{J}_\lambda(h) \isdef -\widehat{\Ebb}[\Rcal(h;\bs z)] + \frac{1}{2} \widehat{\Vbb}[\Rcal(h;\bs z)] + \frac{\lambda}{2}\|h\|_\Hcal^2,
\end{equation}
where we use the notation $\widehat\Ebb[\cdot]$ and $\widehat{\Vbb}[\cdot]$ to refer, respectively, to the mean and variance with respect to the empirical distribution $\widehat \Pbb$.

\subsubsection{Formulation in RKHS} Using the embedding \eqref{eq:target_functional} of the target functional $\Rcal(\cdot;\cdot)$ in $\Hcal$, we can formulate Problems~\ref{eq:population_problem} and \ref{eq:empirical_problem} in the RKHS.  Towards that end, we first make the following assumption.
\begin{assumption}[Moment condition]\label{assumption:integrability}
We assume that $\Ebb\|\psi(\bs z)\|_\Hcal^4 < \infty$.
\end{assumption}
Define
\begin{equation}\label{eq:psi_i}
    \psi_i \isdef \psi(\bs z_i) = \sum_{\bs \alpha \in \Acal_s} w_{\bs \alpha}(\bs z_i)\, \phi^{(\bs \alpha)}(\bs x_i) \in \Hcal.
\end{equation}
We can now define the moments of the embedding with respect to the population and empirical distribution as follows:
\begin{equation}\label{eq:moments}
    \begin{split}
        \mu \isdef \Ebb[\psi_i], &\qquad \Sig \isdef \Ebb[(\psi_i - \mu) \otimes (\psi_i - \mu)],\\
        \widehat \mu \isdef \widehat{\Ebb}[\psi_i], &\qquad \widehat\Sig \isdef \widehat{\Ebb} [(\psi_i - \widehat{\mu}) \otimes (\psi_i - \widehat{\mu})].
    \end{split}
\end{equation}
It remains to show that the above quantities are well-defined. 
\begin{proposition}[Well-definedness]\label{prop:well_defined_1}
Let $\psi(\bs z)$ be as in \eqref{eq:psi_z} and satisfy Assumption~\ref{assumption:integrability}. Then the quantities in \eqref{eq:moments} are well-defined. Moreover, $\Sig$ and $\widehat \Sig$ are self-adjoint, positive, and trace-class.
\end{proposition}
We will also use the following bounds (proved in Appendix~\ref{appendix:proofs_formulation}).
\begin{lemma}[Bounds for shifted operators]\label{lemma:resolvent}
Let $\lambda>0$ and set $\Sig_\lambda\isdef \Sig+\lambda I$ and $\widehat\Sig_\lambda\isdef \widehat\Sig+\lambda I$. Then we have the following.
\begin{enumerate}
\item \emph{Operator bounds: } $\Sig_\lambda^{-1},\,\widehat\Sig_\lambda^{-1}\in\Bscr(\Hcal)$ and
\[
\|\Sig_\lambda^{-1}\|_{\op}\le \tfrac{1}{\lambda}, \qquad
\|\widehat\Sig_\lambda^{-1}\|_{\op}\le \tfrac{1}{\lambda}.
\]
\item \emph{Quadratic-form bounds:} For all $h\in\Hcal$,
\[
|\langle h,\Sig_\lambda h\rangle_\Hcal| \le \big(\|\Sig\|_{\op}+\lambda\big)\|h\|_\Hcal^2,
\qquad
|\langle h,\Sig_\lambda^{-1} h\rangle_\Hcal| \le \tfrac{1}{\lambda}\|h\|_\Hcal^2.
\]
\end{enumerate}
\end{lemma}
Using \eqref{eq:target_functional} and \eqref{eq:moments}, we obtain the mean and variance of the target functional:
\begin{equation}\label{eq:moments_target_functional}
\begin{split}
\Ebb[\Rcal(h;\bs z)] &\isdef \Ebb[\langle h, \psi_i \rangle_\Hcal] \;=\; \langle h, \Ebb[\psi_i]\rangle_\Hcal \;=\; \langle h, \mu \rangle_\Hcal,\\
\Vbb[\Rcal(h; \bs z)] &\isdef \Ebb\!\big[\langle h, \psi_i - \mu\rangle_\Hcal^2\big] \;=\; \Ebb\!\big[\langle h, \bigl((\psi_i - \mu)\otimes (\psi_i - \mu)\bigr) h\rangle_{\Hcal}\big] \;=\; \langle h, \Sig h\rangle_\Hcal,\\
\widehat{\Ebb}[\Rcal(h; \bs z)] &\isdef \widehat{\Ebb}[\langle h, \psi_i \rangle_\Hcal] \;=\; \langle h, \widehat{\Ebb}[\psi_i]\rangle_\Hcal \;=\; \langle h, \widehat\mu \rangle_\Hcal,\\
\widehat{\Vbb}[\Rcal(h; \bs z)] &\isdef \widehat{\Ebb}\!\big[\langle h, \psi_i - \widehat\mu\rangle_\Hcal^2\big] \;=\; \widehat{\Ebb}\!\big[\langle h, \bigl((\psi_i - \widehat\mu)\otimes (\psi_i - \widehat\mu)\bigr) h\rangle_{\Hcal}\big] \;=\; \langle h, \widehat\Sig h\rangle_\Hcal.
\end{split}
\end{equation}
From \eqref{eq:moments_target_functional}, we obtain equivalent RKHS formulations of the optimization problems. The population problem \eqref{eq:population_problem} becomes
\begin{equation}\label{eq:population_problem_RKHS}
    h_\lambda \isdef \underset{h \in \Hcal}{\operatorname{argmin}} \; J_\lambda(h)
    \;=\; - \langle h, \mu \rangle_\Hcal + \frac{1}{2} \langle h, \Sig h \rangle_\Hcal + \frac{\lambda}{2} \|h\|_\Hcal^2 
    \;=\; \frac{1}{2}\langle h, \Sig_\lambda h \rangle_\Hcal - \langle h, \mu \rangle_\Hcal,
\end{equation}
and the empirical problem \eqref{eq:empirical_problem} becomes
\begin{equation}\label{eq:empirical_problem_RKHS}
    \widehat{h}_\lambda \isdef \underset{h \in \Hcal}{\operatorname{argmin}} \; \widehat{J}_\lambda(h)
    \;=\; - \langle h, \widehat{\mu} \rangle_\Hcal + \frac{1}{2} \langle h, \widehat{\Sig} h \rangle_\Hcal + \frac{\lambda}{2} \|h\|_\Hcal^2 
    \;=\; \frac{1}{2}\langle h, \widehat\Sig_\lambda h \rangle_\Hcal - \langle h, \widehat\mu \rangle_\Hcal.
\end{equation}

We can now characterize the optimal solutions to Problems~\ref{eq:population_problem_RKHS} and \ref{eq:empirical_problem_RKHS} as given by the following result, see Appendix~\ref{appendix:proofs_formulation} for a proof.
\begin{proposition}[Closed-form optimal solutions]\label{proposition:optimal_solutions}
Suppose Assumption~\ref{assumption:integrability} holds. Then the optimal solutions of Problem~\ref{eq:population_problem_RKHS} and Problem~\ref{eq:empirical_problem_RKHS} are
\begin{equation}\label{eq:solution_characterizations}
   h_\lambda = \Sig_\lambda^{-1} \mu, \qquad \widehat h_\lambda = \widehat \Sig_\lambda^{-1} \widehat \mu .
\end{equation}
\end{proposition}

\subsection{Representer theorem}\label{subsec:representer_thm}
The reproducing property of the derivatives of the feature function $\phi$, cf.\ Theorem~\ref{result:zhou} allows us to represent the target functional $\Rcal(\cdot;\cdot)$ as the function $\psi$ in the RKHS $\Hcal$. Such a nonparametric representation not only facilitates the characterization of the optimizer of Problem~\ref{eq:empirical_problem} as in Proposition~\ref {proposition:optimal_solutions}, but also enables us to derive the representer theorem, see Theorem~\ref{thm:representer_thm}, which leads to a finite system of equations as in \eqref{eq:optimization_formulation}. In particular, it helps us identify the specific subspace of $\Hcal$ that contains the optimal empirical solution. Our version of the representer theorem,  Theorem~\ref{thm:representer_thm}, is a specialization of \citet[Theorem 2]{zhou2008derivative}. 
\begin{theorem}[Representer theorem]\label{thm:representer_thm}
    The optimal solution to Problem~\ref{eq:empirical_problem_RKHS} has the form
    \begin{equation}\label{eq:optimal_h}
        \widehat h_\lambda = \sum_{i=1}^N \sum_{\bs \alpha \in \Acal_s} \widehat  c_{i, \bs \alpha} \, \phi^{(\bs \alpha)}(\bs x_i).
    \end{equation}
\end{theorem}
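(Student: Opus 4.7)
The plan is a standard orthogonality-of-the-complement argument tailored to the derivative reproducing setting. Let $V \isdef \spn\{\phi^{(\bs \alpha)}(\bs x_i) : i=1,\ldots,N,\, \bs \alpha \in \Acal_s\} \subset \Hcal$, a finite-dimensional (hence closed) subspace, and write the orthogonal decomposition $\Hcal = V \oplus V^\perp$. For any $h \in \Hcal$, decompose $h = h_V + h_\perp$ with $h_V \in V$, $h_\perp \in V^\perp$. I will show the objective $\widehat J_\lambda$ depends on $h$ only through $h_V$ up to an additive $\frac{\lambda}{2}\|h_\perp\|_\Hcal^2$ term; strict convexity then forces the minimizer to lie in $V$.

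First, from \eqref{eq:psi_i} we have $\psi_i \in V$ for every $i=1,\ldots,N$, and therefore $\widehat \mu = \frac{1}{N}\sum_{i=1}^N \psi_i \in V$. The linear functional $\langle\,\cdot\,,\widehat\mu\rangle_\Hcal$ vanishes on $V^\perp$, so
\begin{equation*}
    \langle h, \widehat\mu\rangle_\Hcal = \langle h_V, \widehat\mu\rangle_\Hcal.
\end{equation*}
Next, since $\widehat\Sig = \frac{1}{N}\sum_{i=1}^N (\psi_i-\widehat\mu)\otimes(\psi_i-\widehat\mu)$ with $\psi_i - \widehat\mu \in V$, for any $u \in \Hcal$ the vector $\widehat\Sig u = \frac{1}{N}\sum_i \langle u,\psi_i-\widehat\mu\rangle_\Hcal (\psi_i-\widehat\mu)$ lies in $V$; moreover $\widehat\Sig h_\perp = 0$ because $\langle h_\perp, \psi_i - \widehat\mu\rangle_\Hcal = 0$ for each $i$. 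Combining self-adjointness of $\widehat\Sig$ with $\widehat\Sig h_V \in V$ gives
\begin{equation*}
    \langle h, \widehat\Sig h\rangle_\Hcal = \langle h_V + h_\perp, \widehat\Sig h_V\rangle_\Hcal = \langle h_V, \widehat\Sig h_V\rangle_\Hcal.
\end{equation*}

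Finally, Pythagoras yields $\|h\|_\Hcal^2 = \|h_V\|_\Hcal^2 + \|h_\perp\|_\Hcal^2$. Assembling the three identities, we obtain
\begin{equation*}
    \widehat J_\lambda(h) = -\langle h_V,\widehat\mu\rangle_\Hcal + \tfrac{1}{2}\langle h_V,\widehat\Sig h_V\rangle_\Hcal + \tfrac{\lambda}{2}\|h_V\|_\Hcal^2 + \tfrac{\lambda}{2}\|h_\perp\|_\Hcal^2 = \widehat J_\lambda(h_V) + \tfrac{\lambda}{2}\|h_\perp\|_\Hcal^2.
\end{equation*}
Since $\lambda > 0$, the map $\widehat J_\lambda$ is strictly convex and coercive, so its minimizer $\widehat h_\lambda$ is unique; the displayed identity shows any minimizer must satisfy $h_\perp = 0$, i.e.\ $\widehat h_\lambda \in V$. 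Expanding $\widehat h_\lambda$ in the spanning set of $V$ yields the claimed representation \eqref{eq:optimal_h}.

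The only delicate point is verifying $\widehat\Sig h_\perp = 0$ and the resulting quadratic-form reduction; once this is in place, the rest is the familiar decomposition argument à la Schölkopf–Herbrich–Smola, adapted to a covariance operator built from derivative feature vectors via Theorem~\ref{result:zhou}. I do not anticipate any real obstacle beyond being careful that the covariance $\widehat\Sig$ is built from vectors that already lie in $V$, which is precisely what the definition of $\psi_i$ supplies.
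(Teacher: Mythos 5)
Your proof is correct and takes essentially the same route as the paper's: define the finite-dimensional span $V$ of the derivative feature vectors, decompose $h = h_V + h_\perp$, use $\psi_i, \widehat\mu \in V$ to show the linear and quadratic terms see only $h_V$, and invoke Pythagoras for the regularizer. The only cosmetic difference is that you appeal to strict convexity and coercivity to pin down uniqueness, whereas the paper concludes directly from the identity $\widehat J_\lambda(h) = \widehat J_\lambda(h_V) + \frac{\lambda}{2}\|h_\perp\|_\Hcal^2$ that the minimizer lies in $V$; both are valid, and your intermediate step spelling out $\widehat\Sig h_\perp = 0$ and the self-adjointness-based reduction of $\langle h,\widehat\Sig h\rangle_\Hcal$ is if anything slightly more careful than the paper's version.
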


\begin{proof}[Proof of Theorem~\ref{thm:representer_thm}]
By Theorem~\ref{result:zhou}, for any $\bs \alpha \in \Acal_s$, $\phi^{(\bs \alpha)}(\bs x) \in \Hcal$. Define the subspace of $\Hcal$ spanned by the feature function and its derivative evaluations at the sample points: 
\begin{equation}\label{eq:kernel_subspace}
   \Hcal_{X} \isdef \operatorname{span} \Bigl\{\phi^{(\bs \alpha)}(\bs x_i): 1\leq i \leq N, \, \bs \alpha \in \Acal_s\Bigr\}.
\end{equation}
$\Hcal_{X}$ is a \emph{finite-dimensional closed} subspace of $\Hcal$, and therefore, we have the direct sum decomposition $\Hcal = \Hcal_{X} \oplus \Hcal_{X}^{\perp}$. Hence, for any $h\in \Hcal$, we can write $h = h_0 + h_1$ with $h_1 \perp \Hcal_X$. Using \eqref{eq:psi_i}, \eqref{eq:moments}, and \eqref{eq:kernel_subspace}, $\widehat{\mu} \in \Hcal_X$ and therefore $\psi_i - \widehat\mu \in \Hcal_X$. This implies, from \eqref{eq:moments}, that for $h\in\Hcal$ with the above direct sum decomposition, the empirical covariance operator $\widehat\Sig$ satisfies 
\begin{align*}
   \widehat{\Sig} (h_0 + h_1) =   \widehat\Sig h_0 + \widehat{\Sig}h_1 = \widehat\Sig h_0 + \widehat{\Ebb}[\langle h_1, \psi_i - \widehat\mu\rangle_\Hcal (\psi_i - \widehat\mu)] = \widehat\Sig h_0.
\end{align*}
Moreover, $\operatorname{Range}(\widehat\Sig)\subset \Hcal_X$ (since $\widehat\Sig$ is a finite sum of rank-one operators with range spanned by $\psi_i-\widehat\mu\in\Hcal_X$), hence $\widehat\Sig h_0\in\Hcal_X$ and therefore $\langle h_1,\widehat\Sig h_0\rangle_\Hcal=0$.
This shows that for any $h \in \Hcal$, the quadratic form defined by the empirical covariance operator depends on the orthogonal projection of $h$ onto the working subspace. Therefore, for any $h\in\Hcal$,
\begin{align*}
\widehat{J}_\lambda(h)
&= -\langle h_0+h_1,\widehat\mu\rangle_\Hcal
  + \tfrac12\langle h_0+h_1,\widehat\Sig(h_0+h_1)\rangle_\Hcal
  + \tfrac{\lambda}{2}\|h_0+h_1\|_\Hcal^2\\
&= -\langle h_0,\widehat\mu\rangle_\Hcal
  + \tfrac12\langle h_0,\widehat\Sig h_0\rangle_\Hcal
  + \tfrac{\lambda}{2}\|h_0\|_\Hcal^2
  + \tfrac{\lambda}{2}\|h_1\|_\Hcal^2,
\end{align*}
since $\widehat\mu\in\Hcal_X$ and $\psi_i-\widehat\mu\in\Hcal_X$ imply
$\langle h_1,\widehat\mu\rangle_\Hcal=0$ and $\widehat\Sig h_1=0$.
Hence $\widehat{J}_\lambda(h)=\widehat{J}_\lambda(h_0)+\tfrac{\lambda}{2}\|h_1\|_\Hcal^2
\ge \widehat{J}_\lambda(h_0)$. This shows that the value of the objective function for any $h\in\Hcal$ is at least as large as its orthogonal projection onto $\Hcal_X$. This leads to the expression as in \eqref{eq:optimal_h}. 
\end{proof}

\begin{remark}
    As noted in Remark~\ref{rem:portfolio-functional}, when function values or certain derivatives are not used in \eqref{eq:target_functional}, let 
$\Acal \subset \Acal_s$ denote the set of multi-indices that appear in the functional (equivalently, take $w_{\bs\alpha}\equiv 0$ for $\bs\alpha\notin\Acal$). In this case, the optimal solution is contained within the finite-dimensional space $ \operatorname{span} \bigl\{\phi^{(\bs \alpha)}(\bs x_i): 1\leq i \leq N, \, \bs \alpha \in \Acal\bigr\}$. The proof of Theorem~\ref{thm:representer_thm} is valid without modification since $\psi_i$ and $\widehat \mu$ belong to this subspace, as does the minimizer $\widehat h_\lambda$. In particular, if there is no function-value term, then no $\phi(\bs x_i)$ terms appear in the representer theorem.
\end{remark}
The explicit form of $\widehat h_\lambda$ in \eqref{eq:optimal_h} implies that the optimal solution to Problem~\ref{eq:empirical_problem} is parameterized by the optimal coefficients $\widehat c_{i, \bs \alpha}$. Thus, we need to find the optimal coefficients to evaluate the sample estimator at any point $\bs x\in \Xcal$. This is done via solving for a finite system of equations, the details of which are deferred to Appendix~\ref{appendix:implementation}.

\section{Statistical properties of sample estimator}\label{sec:stat_prop}
This section develops the statistical properties of the estimator $\widehat{h}_\lambda$ from \eqref{eq:empirical_problem_RKHS}. We first introduce a Hilbert space that will be useful for the analysis, then establish asymptotic properties and finite-sample deviation bounds in Theorems~\ref{thm:asymptotic_properties} and \ref{thm:finite_sample}, respectively.
\subsection{Setting and assumptions}\label{subsec:assumptions}
Define the Hilbert space 
\begin{equation}\label{eq:Hbb}
    \Hbb \isdef \Hcal \oplus \Hscr\Sscr(\Hcal),
\end{equation}
equipped with the inner product 
\begin{equation}\label{eq:Hbb_inner_product}
    \langle (h,\Ccal), (g,\Dcal) \rangle_\Hbb \isdef \langle h, g \rangle_\Hcal + \langle \Ccal, \Dcal \rangle_{\HS}
    \quad \text{for all } (h,\Ccal),(g,\Dcal)\in\Hbb.
\end{equation}
$\Hbb$ as defined in \eqref{eq:Hbb} is a separable Hilbert space, being the direct sum of two separable Hilbert spaces. Define the map on $\Hbb$, 
\begin{equation}\label{eq:F}
    F\colon\Hbb\to\Hcal, \qquad F(h,\Ccal) \isdef h - \Ccal h_\lambda \quad \text{for     } (h, \Ccal) \in \Hbb .
\end{equation}
Set
\begin{equation}\label{eq:centered_random_vectors}
    \widetilde\psi_i \isdef \psi_i - \mu, \qquad 
    \widetilde{\Ccal}_i \isdef \widetilde{\psi}_i \otimes \widetilde{\psi}_i - \Sig, \qquad
    \widetilde{\Sig} \isdef \widehat{\Ebb}[\widetilde{\psi}_i \otimes \widetilde{\psi}_i] = \frac{1}{N}\sum_{i=1}^N \widetilde{\psi}_i \otimes \widetilde{\psi}_i .
\end{equation}
We have the following proposition, proved in Appendix~\ref{appendix:proofs_stat_prop}.
\begin{proposition}[Moment and operator properties]\label{proposition:well_defined}
 Suppose Assumption~\ref{assumption:integrability} holds. Then
 \[
   \Ebb[\widetilde\psi_i] = 0, \qquad \Ebb[\widetilde \Ccal_i] = 0, \qquad
   \Ebb\|\widetilde{\psi}_i\|_{\Hcal}^2 < \infty, \qquad  \Ebb\|(\widetilde{\psi}_i, \widetilde\Ccal_i)\|_{\Hbb}^2 < \infty.
 \]
Moreover, $\widetilde \Ccal_i \in \Hscr\Sscr(\Hcal)$, and $\widetilde\Sig$ is self-adjoint, positive, and trace-class.
\end{proposition}
Using \eqref{eq:centered_random_vectors}, we obtain the following identities for the zero-mean processes.
\begin{equation}\label{eq:zero_mean_processes}
    \begin{split}
         \widehat\mu - \mu &= \frac{1}{N} \sum_{i=1}^N \psi_i - \mu = \frac{1}{N} \sum_{i=1}^N (\psi_i - \mu) = \frac{1}{N}\sum_{i=1}^N \widetilde{\psi}_i,\\
     \widetilde{\Sig} - \Sig &= \frac{1}{N} \sum_{i=1}^N \widetilde{\psi}_i \otimes \widetilde{\psi}_i - \Sig 
     = \frac{1}{N} \sum_{i=1}^N\!\big(\widetilde{\psi}_i \otimes \widetilde{\psi}_i - \Sig\big) 
     = \frac{1}{N}\sum_{i=1}^N \widetilde{\Ccal}_i .
    \end{split}
\end{equation}

\subsection{Asymptotic properties}\label{subsec:asymptotics}
To show the asymptotic results, we start by making the following assumption.
\begin{assumption}\label{assumption:iid}
We assume that $\psi_i \isdef \psi(\bs z_i)\in\Hcal$ are independent and identically distributed.
\end{assumption}
\begin{remark}[Independence]\label{remark:iid}
Assumption~\ref{assumption:iid} is not used in the well-definedness results
(see Propositions~\ref{prop:well_defined_1} and \ref{proposition:well_defined});
it is invoked only for the asymptotic and finite-sample analyses below.
If the observations $\{\bs z_i\}_{i=1}^N$ are i.i.d.\ and the weights and feature-map derivatives
are measurable, then $\psi_i=\psi(\bs z_i)$ are i.i.d.\ as well, so Assumption~\ref{assumption:iid} holds.
\end{remark}
From Lemma~\ref{lemma:difference}, we have the following decomposition:
\begin{equation}\label{eq:difference}
    \widehat{h}_\lambda - h_\lambda = \widehat{\Sig}_\lambda^{-1}((\widehat\mu - \mu) - (\widetilde{\Sig} - \Sig)h_\lambda) + r_N, \qquad r_N \isdef \widehat{\Sig}_\lambda^{-1}(\widetilde{\Sig} - \widehat{\Sig})h_\lambda.
\end{equation}
Equation~\ref{eq:difference} shows that the error decomposes as the sum of a main fluctuation and a remainder term. In what follows, we seek to show that the fluctuation term is asymptotically Gaussian, see Proposition~\ref{proposition:asymptotic_gauss}, while the remainder term decays as $o_\Pbb(N^{-1/2})$, see Lemma~\ref{lemma:remainder_term}. To show the former, we first use a functional \emph{central limit theorem (CLT)} applicable for i.i.d.\ sequences in separable Hilbert spaces, see \citet[Theorem 2.7]{Bosq2000}, and then use the \emph{continuous mapping theorem (CMT)}. Towards that end,  we now state and prove the following.
\begin{proposition}[Asymptotic gaussianity]\label{proposition:asymptotic_gauss}
    Suppose Assumptions~\ref{assumption:integrability} and \ref{assumption:iid} hold true. Then,
   \begin{equation}\label{eq:CLT_2}
    \sqrt{N}\Bigl((\widehat\mu - \mu) - (\widetilde{\Sig} - \Sig)h_\lambda\Bigr) \overset{d}{\longrightarrow} \Ncal_{\Hcal}(0, \Qcal_\lambda),
\end{equation}
where 
\begin{equation}\label{eq:Q_lambda}
    \Qcal_\lambda \isdef \Ebb[F(\widetilde{\psi}_i, \widetilde{\Ccal}_i) \otimes F(\widetilde{\psi}_i, \widetilde{\Ccal}_i)] = \Ebb[(\widetilde{\psi}_i - \widetilde{\Ccal}_i h_\lambda) \otimes (\widetilde{\psi}_i - \widetilde{\Ccal}_i h_\lambda)]. 
\end{equation}
\end{proposition}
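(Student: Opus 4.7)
The plan is to rewrite the quantity inside the parentheses as a normalized sum of i.i.d., zero-mean vectors in the direct-sum Hilbert space $\Hbb$, invoke the CLT for i.i.d.\ sequences in a separable Hilbert space, and then push the weak limit through the bounded linear map $F$ via the continuous mapping theorem. From the identities in \eqref{eq:zero_mean_processes},
$$
\sqrt{N}\bigl(\widehat\mu-\mu,\;\widetilde\Sig-\Sig\bigr) \;=\; \frac{1}{\sqrt N}\sum_{i=1}^N \bigl(\widetilde\psi_i,\;\widetilde\Ccal_i\bigr) \;\in\; \Hbb,
$$
while the linearity of $F$ gives $F\bigl(\widehat\mu-\mu,\,\widetilde\Sig-\Sig\bigr) = (\widehat\mu-\mu) - (\widetilde\Sig-\Sig)h_\lambda$, so it suffices to establish the joint CLT on $\Hbb$ and then apply $F$.

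For the CLT step, the summands $(\widetilde\psi_i,\widetilde\Ccal_i)$ are i.i.d.\ in the separable Hilbert space $\Hbb$, centered (so $\Ebb[(\widetilde\psi_1,\widetilde\Ccal_1)]=0$) and square-integrable ($\Ebb\|(\widetilde\psi_1,\widetilde\Ccal_1)\|_\Hbb^2<\infty$), all by Proposition~\ref{proposition:well_defined}. The Hilbert-space CLT for i.i.d.\ sequences (e.g.\ \citet[Theorem 2.7]{Bosq2000}) therefore yields
$$
\sqrt N\,\bigl(\widehat\mu-\mu,\,\widetilde\Sig-\Sig\bigr) \;\overset{d}{\longrightarrow}\; \Ncal_{\Hbb}\bigl(0,\,\Tcal\bigr), \qquad \Tcal \isdef \Ebb\bigl[(\widetilde\psi_1,\widetilde\Ccal_1)\otimes(\widetilde\psi_1,\widetilde\Ccal_1)\bigr],
$$
where $\Tcal$ is trace-class on $\Hbb$ (its trace equals the finite second moment). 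Since $F\colon\Hbb\to\Hcal$ is bounded linear by Lemma~\ref{lemma:F}, hence continuous, the continuous mapping theorem delivers
$$
\sqrt N\,\bigl((\widehat\mu-\mu) - (\widetilde\Sig-\Sig)h_\lambda\bigr) \;\overset{d}{\longrightarrow}\; \Ncal_{\Hcal}\bigl(0,\,F\,\Tcal\,F^{\ast}\bigr).
$$

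It remains to identify the limit covariance. By the push-forward formula for covariances under bounded linear maps, or equivalently by the direct substitution $F(h,\Ccal)=h-\Ccal h_\lambda$,
$$
F\,\Tcal\,F^{\ast} \;=\; \Ebb\bigl[F(\widetilde\psi_1,\widetilde\Ccal_1)\otimes F(\widetilde\psi_1,\widetilde\Ccal_1)\bigr] \;=\; \Ebb\bigl[(\widetilde\psi_1 - \widetilde\Ccal_1 h_\lambda)\otimes(\widetilde\psi_1 - \widetilde\Ccal_1 h_\lambda)\bigr] \;=\; \Qcal_\lambda,
$$
matching \eqref{eq:Q_lambda}. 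The main obstacle is really just verifying the hypotheses of the infinite-dimensional CLT, namely separability of $\Hbb$ (which follows from separability of $\Hcal$ and of $\Hscr\Sscr(\Hcal)$, as noted in Section~\ref{subsec:assumptions}) and the finite second moment in $\Hbb$ (supplied by Proposition~\ref{proposition:well_defined} from the $L^4$ hypothesis $\Ebb\|\psi_1\|_\Hcal^4 < \infty$); once these are in place, the rest is a routine composition of the Hilbert-space CLT with the CMT on a bounded linear map, and no delicate tightness argument is required.
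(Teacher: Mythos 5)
Your proof is correct and follows essentially the same route as the paper: express $\sqrt{N}(\widehat\mu-\mu,\widetilde\Sig-\Sig)$ as a normalized i.i.d.\ sum in $\Hbb$ with finite second moment (via Proposition~\ref{proposition:well_defined}), invoke the Hilbert-space CLT of \citet[Theorem 2.7]{Bosq2000}, and push through the bounded linear map $F$ by the continuous mapping theorem. The only additions are the explicit push-forward covariance identity $F\,\Tcal\,F^{\ast}$ and the remark on trace-class, which the paper leaves implicit but which do not change the argument.
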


\begin{proof}
    From Proposition~\ref{proposition:well_defined}, $\Ebb\|(\widetilde\psi_i, \widetilde\Ccal_i)\|_{\Hbb}^2 < \infty$. The sample mean is, by \eqref{eq:zero_mean_processes}, given by
    \begin{equation}\label{eq:sample_mean}
        \Bar{S}_N \isdef \frac{1}{N} \sum_{i=1}^N (\widetilde{\psi}_i, \widetilde{\Ccal}_i) = (\widehat\mu- \mu, \widetilde\Sig - \Sig). 
    \end{equation} 
    By \citet[Theorem 2.7]{Bosq2000},
    \begin{equation}\label{eq:CLT_1}
    \sqrt{N}\Bar{S}_N = \sqrt{N}(\widehat\mu-\mu, \widetilde{\Sig} - \Sig) \overset{d}{\longrightarrow} \Ncal_{\Hbb}(0, \Gamma), \qquad \Gamma \isdef \Ebb[(\widetilde{\psi}_i, \widetilde{\Ccal}_i) \otimes (\widetilde{\psi}_i, \widetilde{\Ccal}_i)].
\end{equation}
The map $F\colon\Hbb\to\Hcal$ in \eqref{eq:F} is bounded and linear, see Lemma~\ref{lemma:F}. We can therefore use $F$ to apply the CMT to \eqref{eq:CLT_1} to obtain
 \begin{equation*}
    \sqrt{N}\Bigl((\widehat\mu - \mu) - (\widetilde{\Sig} - \Sig)h_\lambda\Bigr) \overset{d}{\longrightarrow} \Ncal_{\Hcal}\Bigl(0, \Qcal_\lambda\Bigr),
\end{equation*}
where $ \Qcal_\lambda \isdef \Ebb[F(\widetilde{\psi}_i, \widetilde{\Ccal}_i) \otimes F(\widetilde{\psi}_i, \widetilde{\Ccal}_i)] = \Ebb[(\widetilde{\psi}_i - \widetilde{\Ccal}_i h_\lambda) \otimes (\widetilde{\psi}_i - \widetilde{\Ccal}_i h_\lambda)]$.
\end{proof}
Proposition~\ref{proposition:asymptotic_gauss} facilitates us to derive the asymptotic distribution of the main fluctuation term in \eqref{eq:difference}. We are now ready to state the following theorem that specifies the asymptotic properties of the sample estimator $\widehat h_\lambda$.
\begin{theorem}[Asymptotic properties of sample estimator]\label{thm:asymptotic_properties}
    Suppose Assumptions~\ref{assumption:integrability} and \ref{assumption:iid} hold true. Then, we have the following.
    \begin{enumerate}
        \item Asymptotic consistency: $\widehat{h}_\lambda \overset{a.s.}{\longrightarrow} h_\lambda$ as $N \to \infty$.
        \item Asymptotic distribution: $\sqrt{N}\left(\widehat{h}_\lambda - h_\lambda\right) \overset{d}{\longrightarrow} \Ncal_\Hcal(0, \Ccal_\lambda), \qquad \Ccal_\lambda \isdef \Sig_\lambda^{-1} \Qcal_\lambda \Sig_\lambda^{-1}$.
    \end{enumerate}
\end{theorem}

\begin{proof}
\noindent $(1)$ We have $\|\widehat\mu-\mu\|_{\Hcal}\overset{a.s.}{\longrightarrow}0$ and
$\|\widehat\Sig-\Sig\|_{\op}\le \|\widehat\Sig-\Sig\|_{\HS}\overset{a.s.}{\longrightarrow}0$
from Lemma~\ref{lemma:consistency_results}. By
Lemma~\ref{lemma:resolvent}, $\Sig_\lambda^{-1},\widehat\Sig_\lambda^{-1}\in\Lscr(\Hcal)$ and
$\|\Sig_\lambda^{-1}\|_{\op}, \, \|\widehat\Sig_\lambda^{-1}\|_{\op} \le 1/\lambda$.
Using the resolvent identity,
\[
\widehat\Sig_\lambda^{-1}-\Sig_\lambda^{-1}
=\widehat\Sig_\lambda^{-1}\,(\Sig_\lambda-\widehat\Sig_\lambda)\,\Sig_\lambda^{-1}
=\widehat\Sig_\lambda^{-1}\,(\Sig-\widehat\Sig)\,\Sig_\lambda^{-1},
\]
we obtain the operator-norm bound
\[
\|\widehat\Sig_\lambda^{-1}-\Sig_\lambda^{-1}\|_{\op}
\le \|\widehat\Sig_\lambda^{-1}\|_{\op}\,\|\widehat\Sig-\Sig\|_{\op}\,\|\Sig_\lambda^{-1}\|_{\op}
\le \frac{1}{\lambda^2}\,\|\widehat\Sig-\Sig\|_{\op}
\overset{a.s.}{\longrightarrow}0.
\]
Therefore, 
\begin{equation}\label{eq:convergence_Sig_lambda_inverse}
    \widehat\Sig_\lambda^{-1}\overset{a.s.}{\longrightarrow}\Sig_\lambda^{-1}.
\end{equation}
Finally, using $\widehat h_\lambda=\widehat\Sig_\lambda^{-1}\widehat\mu$ and
$h_\lambda=\Sig_\lambda^{-1}\mu$, we have
\[
\|\widehat h_\lambda-h_\lambda\|_{\Hcal}
\le \|\widehat\Sig_\lambda^{-1}\|_{\op}\,\|\widehat\mu-\mu\|_{\Hcal}
+\|\widehat\Sig_\lambda^{-1}-\Sig_\lambda^{-1}\|_{\op}\,\|\mu\|_{\Hcal}.
\]
By Lemma~\ref{lemma:resolvent}, $\|\widehat\Sig_\lambda^{-1}\|_{\op}\le 1/\lambda$, and since
$\|\widehat\mu-\mu\|_{\Hcal}\overset{a.s.}{\longrightarrow} 0$ and
$\|\widehat\Sig_\lambda^{-1}-\Sig_\lambda^{-1}\|_{\op}\overset{a.s.}{\longrightarrow}0$, it follows that
$\|\widehat h_\lambda-h_\lambda\|_{\Hcal}\overset{a.s.}{\longrightarrow}0$, i.e.,
$\widehat h_\lambda\overset{a.s.}{\longrightarrow}h_\lambda$ as $N\to\infty$.

\noindent $(2)$ $\sqrt{N}\, r_N \overset{\Pbb}{\longrightarrow} 0$ from Lemma~\ref{lemma:remainder_term}, while $\sqrt{N}((\widehat\mu - \mu) - (\widetilde{\Sig} - \Sig)h_\lambda) \overset{d}{\longrightarrow} \Ncal_{\Hcal}(0, \Qcal_\lambda)$ from \eqref{eq:CLT_2}. Again, $\widehat{\Sig}_\lambda^{-1} \overset{\Pbb}{\longrightarrow}\Sig_\lambda^{-1}$ follows from \eqref{eq:convergence_Sig_lambda_inverse}. Hence, by \emph{Slutsky's theorem}, 
    \begin{equation}
        \begin{split}
            \sqrt{N}\left(\widehat{h}_\lambda - h_\lambda\right) &= \underbrace{\widehat\Sig_\lambda^{-1}}_{\overset{\Pbb}{\longrightarrow} \Sig_\lambda^{-1}} \cdot \underbrace{\sqrt{N}\left((\widehat\mu - \mu) - (\widetilde\Sig - \Sig)h_\lambda\right)}_{\overset{d}{\longrightarrow} \Ncal_\Hcal(0, \Qcal_\lambda)} + \underbrace{\sqrt{N}\, r_N}_{\overset{\Pbb}{\longrightarrow} 0} \\
            &\overset{d}{\longrightarrow} \Ncal_\Hcal(0, \Sig_\lambda^{-1} \Qcal_\lambda \Sig_\lambda^{-1}).
        \end{split}
    \end{equation}
\end{proof}

\subsection{Finite-sample properties}\label{appendix_subsec:finite_sample}
We state the main result of this section below.
\begin{theorem}[Finite-sample deviation bound]\label{thm:finite_sample}
 Suppose Assumptions~\ref{assumption:integrability} and \ref{assumption:iid} hold. Then, with probability at least $1-\delta$,
 \[
     \|\widehat{h}_\lambda - h_\lambda\|_\Hcal \leq C_{FS}\!\left(\delta, \|h_\lambda\|_\Hcal\right)\,\lambda^{-1} N^{-1/2},
 \]
 where
 \[
     C_{FS}(\delta, t) \isdef \sqrt{1+t^2}\,\sqrt{\frac{2\,\Ebb\|(\widetilde{\psi}_i,\widetilde{\Ccal}_i)\|_{\Hbb}^2}{\delta}}
     \;+\; 2t\,\frac{\Ebb\|\widetilde\psi_i\|_\Hcal^2}{\delta}.
 \]
\end{theorem}

\begin{proof}
Since $(\widehat\mu-\mu) - (\widetilde\Sig-\Sig)h_\lambda = F(\Bar{S}_N)$, \eqref{eq:difference} gives
\[
\|\widehat h_\lambda - h_\lambda\|_\Hcal
\le \|\widehat\Sig_\lambda^{-1} F(\Bar{S}_N)\|_\Hcal + \|r_N\|_\Hcal.
\]
Using \eqref{eq:remainder_inequality} and Lemmas~\ref{lemma:resolvent} and \ref{lemma:F},
\[
\|\widehat h_\lambda - h_\lambda\|_\Hcal
\le \|\widehat\Sig_\lambda^{-1}\|_{\op}\,\|F\|_{\op}\,\|\Bar{S}_N\|_{\Hbb}
\;+\; \|r_N\|_\Hcal
\le \frac{\sqrt{1+\|h_\lambda\|_\Hcal^2}}{\lambda}\,\|\Bar{S}_N\|_{\Hbb}
\;+\; \frac{\|h_\lambda\|_\Hcal}{\lambda}\,\|\widehat\mu-\mu\|_\Hcal^2.
\]
Fix $\delta\in(0,1)$. By Lemma~\ref{lemma:SN_bound},
\[
\Pbb\!\left(\|\Bar{S}_N\|_{\Hbb} > \sqrt{\frac{2\,\Ebb\|(\widetilde\psi_i,\widetilde\Ccal_i)\|_{\Hbb}^2}{N\delta}}\right)\le \frac{\delta}{2},
\]
and by Lemma~\ref{lemma:convergence_rate},
\[
\Pbb\!\left(\|\widehat\mu-\mu\|_\Hcal^2 > \frac{2\,\Ebb\|\widetilde\psi_i\|_\Hcal^2}{N\delta}\right)\le \frac{\delta}{2}.
\]
A union bound yields that, with probability at least $1-\delta$,
\begin{align*}
\|\widehat h_\lambda - h_\lambda\|_\Hcal
&\le \frac{\sqrt{1+\|h_\lambda\|_\Hcal^2}}{\lambda}\,
\sqrt{\frac{2\,\Ebb\|(\widetilde\psi_i,\widetilde\Ccal_i)\|_{\Hbb}^2}{N\delta}}
\;+\; \frac{\|h_\lambda\|_\Hcal}{\lambda}\,\frac{2\,\Ebb\|\widetilde\psi_i\|_\Hcal^2}{N\delta}\\
&\le \frac{1}{\lambda\sqrt{N}}\!\left(
\sqrt{1+\|h_\lambda\|_\Hcal^2}\,\sqrt{\frac{2\,\Ebb\|(\widetilde\psi_i,\widetilde\Ccal_i)\|_{\Hbb}^2}{\delta}}
\;+\; 2\|h_\lambda\|_\Hcal\,\frac{\Ebb\|\widetilde\psi_i\|_\Hcal^2}{\delta}
\right)\\
&= C_{FS}(\delta,\|h_\lambda\|_\Hcal)\,\lambda^{-1} N^{-1/2}.
\end{align*}
\end{proof}

\begin{remark}
Theorem~\ref{thm:finite_sample} shows that the estimation error admits a high-probability bound of order $\Ocal_\Pbb(\lambda^{-1}N^{-1/2})$. An important aspect is that the regularization hyperparameter $\lambda$ is kept fixed. In implementation, we cross-validate it and thereafter keep it fixed. The constant $C_{FS}$ depends on the confidence level $\delta$, the size of the population solution $\|h_\lambda\|_\Hcal$, the variance term $\Ebb\|\widetilde\psi_i\|_\Hcal^2$, and the joint variance $\Ebb\|(\widetilde\psi_i,\widetilde\Ccal_i)\|_{\Hbb}^2$. The rate matches the classical Monte Carlo $N^{-1/2}$ rate, with an additional $\lambda^{-1}$ factor from regularization, and complements the asymptotic results in Theorem~\ref{thm:asymptotic_properties}.
\end{remark}

\section{Statistical Inference for shape constraints}\label{sec:inference}
In this section, we describe statistical inference for shape constraints of the sample estimator $\widehat{h}_\lambda$, with a focus on \emph{directional} tests. Although our framework admits full multi-index differentiation on $\Xcal\subset\Rbb^d$, in many applications it is natural to assess shape restrictions along a fixed coordinate direction in the covariate space: classic constraints in a fixed coordinate direction with small $s$: positivity corresponds to order $0$, monotonicity to first order, and convexity to second order in the chosen direction. 
\subsection{Test statistic} We start by choosing any derivative order $\bs \alpha \in \Acal_s$, followed by deriving the necessary asymptotic results which help us arrive at the asymptotic distribution of the test statistic. The construction of the test statistic proceeds in the same way for any $\bs \alpha \in \Acal_s$. Hence, in what follows, we define the relevant quantities without attributing to the derivative order.  

\subsubsection{Setting}
Recall that the sample estimator $\widehat{h}_\lambda \in \Hcal \subset \Cscr^s(\Xcal)$ for some fixed $s\in \Nbb$. Choose $\bs \alpha \in \Acal_s$ and consider any finite testing grid $\Zcal=\{\bs z_j\}_{j=1}^n\subset\Xcal$. Here $\bs z_j\in\Xcal$ denotes an input location (not an observation pair $(\bs x_i,y_i)$). The vectors of derivative evaluations are given as:
\begin{equation}\label{eq:theta}
    \bs \theta \isdef \big[h_\lambda^{(\bs \alpha)}(\bs   z_j)\big]_{j=1}^n \in \Rbb^n, \qquad \widehat{\bs \theta} \isdef \big[\widehat h_\lambda^{(\bs \alpha)}(\bs   z_j)\big]_{j=1}^n \in \Rbb^n.
\end{equation}
From the reproducing property of the derivatives, see Theorem~\ref{result:zhou}, it follows that for each $\bs   z_j$, the evaluation of the partial derivatives may be represented via $\phi^{(\bs \alpha)}(\bs   z_j)$ as $h^{\bs \alpha}(\bs   z_j) = \langle h, \phi^{(\bs \alpha)}(\bs   z_j)\rangle_\Hcal$ for any $h\in \Hcal$. We define the corresponding population and sample quantities:
\begin{equation}\label{eq:u}
    u_j \isdef \Sig_\lambda^{-1} \phi^{(\bs \alpha)}(\bs   z_j), \qquad \widehat u_j \isdef \widehat \Sig_\lambda^{-1} \phi^{(\bs \alpha)}(\bs   z_j), \quad 1 \leq j \leq n.
\end{equation}
Now, we define the following:
\begin{equation}\label{eq:sample_Q}
    \widehat{F}_i \isdef (\psi_i - \widehat{\mu}) - \bigl((\psi_i - \widehat{\mu}) \otimes (\psi_i - \widehat{\mu}) - \widehat\Sig\bigr)\widehat{h}_\lambda, \qquad \widehat\Qcal_\lambda \isdef \frac{1}{N} \sum_{i=1}^N \widehat{F}_i \otimes \widehat{F}_i.
\end{equation}
Note from \eqref{eq:CLT_2} that the population analogue of $\widehat\Qcal_\lambda$ is $\Qcal_\lambda = \Ebb[F_i \otimes F_i]$, where
\begin{equation}\label{eq:F_i}
    F_i \isdef F(\widetilde \psi_i, \widetilde\Ccal_i) = \widetilde \psi_i - \widetilde\Ccal_i h_\lambda = (\psi_i - \mu) - \bigl((\psi_i - \mu) \otimes (\psi_i - \mu) - \Sig\bigr) h_\lambda.
\end{equation}
With these quantities at hand, we define the $n \times n$ \emph{covariance matrices} with pairwise entries 
\begin{equation}\label{eq:Omega}
\begin{split}
     [\bs{\Omega}_\lambda]_{k, j} &\isdef \langle u_k, \Qcal_\lambda u_j \rangle_\Hcal = \Ebb\left[\langle F_i, u_k\rangle_\Hcal \, \langle F_i, u_j \rangle_\Hcal \right], \\
     [\widehat{\bs \Omega}_\lambda]_{k, j} &\isdef \langle \widehat{u}_k, \widehat{\Qcal}_\lambda \widehat{u}_j \rangle_\Hcal = \frac{1}{N}\sum_{i=1}^N \langle \widehat{F}_i, \widehat{u}_k\rangle_\Hcal \,  \langle \widehat{F}_i, \widehat{u}_j \rangle_\Hcal.
\end{split}
\end{equation}
Finally, we define the bounded linear operator that evaluates the derivative at the grid points, 
\begin{equation}\label{eq:second_derivative_evaluation}
    \Scal_n\colon \Hcal\to\Rbb^n, \qquad \Scal_n(h) \isdef \Bigl[\langle  h, \phi^{(\bs \alpha)}(\bs   z_j) \rangle_\Hcal\Bigr]_{j=1}^n \in \Rbb^n,
\end{equation}
whose adjoint is given as 
\begin{equation}\label{eq:adjoint_second_derivative_evaluation}
    \Scal_n^\ast\colon\Rbb^n\to\Hcal, \qquad \Scal_n^\ast(\bs \omega) \isdef \sum_{j=1}^n \omega_j \phi^{(\bs \alpha)}(\bs   z_j).
\end{equation}

\subsubsection{Asymptotic properties}
We first establish the large-sample behavior of the derivative evaluations on the grid. This result underpins inference for shape constraints.
\begin{proposition}[Asymptotic distribution of derivative evaluations]\label{proposition:asymptotic_convexity}
    Let $\bs \theta, \, \widehat{\bs \theta}$ be defined as in \eqref{eq:theta}. Suppose Assumptions~\ref{assumption:integrability} and \ref{assumption:iid} hold true. Then,  
    \begin{equation}\label{eq:CLT_3}
        \sqrt{N}\Bigl(\widehat{\bs \theta} - \bs \theta\Bigr)\overset{d}{\longrightarrow} \Ncal_n(\bs 0,\bs \Omega_\lambda).
    \end{equation}
\end{proposition}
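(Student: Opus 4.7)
The plan is to recognize $\widehat{\bs \theta}-\bs \theta$ as the image of $\widehat h_\lambda - h_\lambda$ under the bounded linear evaluation operator $\Scal_n$ from \eqref{eq:second_derivative_evaluation}, and then invoke the functional CLT in Proposition~\ref{proposition:asymptotic_properties} combined with the continuous mapping theorem. By the derivative reproducing property (Theorem~\ref{result:zhou}) applied at each grid node $\bs \xi_j$, one has $h^{(\bs \alpha)}(\bs \xi_j)=\langle h,\phi^{(\bs \alpha)}(\bs \xi_j)\rangle_\Hcal$ for every $h\in\Hcal$, so
\begin{equation*}
\widehat{\bs \theta}-\bs \theta = \Scal_n\bigl(\widehat h_\lambda - h_\lambda\bigr),
\end{equation*}
and $\Scal_n$ is bounded because each $\phi^{(\bs \alpha)}(\bs \xi_j)\in\Hcal$ by Theorem~\ref{result:zhou}.

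Proposition~\ref{proposition:asymptotic_properties}(ii) delivers $\sqrt{N}(\widehat h_\lambda - h_\lambda)\overset{d}{\longrightarrow}\Ncal_\Hcal(0,\Ccal_\lambda)$ with $\Ccal_\lambda=\Sig_\lambda^{-1}\Qcal_\lambda\Sig_\lambda^{-1}$. Since a continuous linear map pushes a Hilbert-space Gaussian to a finite-dimensional Gaussian whose covariance is obtained by sandwiching with the operator and its adjoint, the CMT yields
\begin{equation*}
\sqrt{N}\bigl(\widehat{\bs \theta}-\bs \theta\bigr) \;\overset{d}{\longrightarrow}\; \Ncal_n\bigl(\bs 0,\; \Scal_n \Ccal_\lambda \Scal_n^\ast\bigr).
\end{equation*}

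It then suffices to identify the limiting covariance with $\bs \Omega_\lambda$. Using the explicit adjoint in \eqref{eq:adjoint_second_derivative_evaluation}, so $\Scal_n^\ast e_j = \phi^{(\bs \alpha)}(\bs \xi_j)$, together with the self-adjointness of $\Sig_\lambda^{-1}$ (so that $\Sig_\lambda^{-1}\phi^{(\bs\alpha)}(\bs\xi_j) = u_j$ from \eqref{eq:u}), a direct computation of the $(k,j)$ entry gives
\begin{equation*}
[\Scal_n\Ccal_\lambda \Scal_n^\ast]_{k,j} = \bigl\langle \phi^{(\bs\alpha)}(\bs\xi_k),\, \Sig_\lambda^{-1}\Qcal_\lambda \Sig_\lambda^{-1}\phi^{(\bs\alpha)}(\bs\xi_j)\bigr\rangle_\Hcal = \langle u_k,\, \Qcal_\lambda u_j\rangle_\Hcal = [\bs\Omega_\lambda]_{k,j},
\end{equation*}
matching \eqref{eq:Omega} exactly.

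I do not anticipate a genuine obstacle here: the whole argument is a push-forward of a Hilbert-space Gaussian limit through the bounded linear operator $\Scal_n$, and the finiteness of the grid $\Gcal$ ensures the target space is Euclidean so no additional operator-theoretic machinery is needed beyond Proposition~\ref{proposition:asymptotic_properties}. The only place requiring modest care is the covariance identification, where the self-adjointness of $\Sig_\lambda^{-1}$ must be used to absorb this factor into the definition of $u_j$, producing precisely the quadratic form $\langle u_k,\Qcal_\lambda u_j\rangle_\Hcal$ appearing in \eqref{eq:Omega}.
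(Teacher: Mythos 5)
Your proposal is correct and follows essentially the same route as the paper: express $\widehat{\bs\theta}-\bs\theta$ as $\Scal_n(\widehat h_\lambda - h_\lambda)$ via the derivative reproducing property, push the Hilbert-space Gaussian limit from Proposition~\ref{proposition:asymptotic_properties}(ii) through the bounded linear map $\Scal_n$ by CMT, and identify $\Scal_n\Ccal_\lambda\Scal_n^\ast$ with $\bs\Omega_\lambda$ entrywise using the adjoint $\Scal_n^\ast$ and the self-adjointness of $\Sig_\lambda^{-1}$. Your explicit remark about self-adjointness being the step that lets $\Sig_\lambda^{-1}$ be absorbed into $u_k$ is a nice clarification that the paper leaves implicit, but the argument is otherwise identical.
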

\begin{proof}
By \eqref{eq:second_derivative_evaluation} and Theorem~\ref{result:zhou}, we have 
\begin{equation*}
    \Scal_n(\widehat h_\lambda - h_\lambda) = \Bigl[\langle \widehat h_\lambda - h_\lambda, \phi^{(\bs \alpha)}(\bs    z_j) \rangle_\Hcal\Bigr]_{j=1}^n = \Bigl[ \widehat h^{(\bs \alpha)}_\lambda(\bs   z_j) -  h^{(\bs \alpha)}_\lambda(\bs  z_j)\Bigr]_{j=1}^n \in \Rbb^{n}.
\end{equation*}
From (2) of Theorem~\ref{thm:asymptotic_properties}, we have $\sqrt{N}\Bigl(\widehat h_\lambda - h_\lambda\Bigr) \overset{d}{\longrightarrow} \Ncal_\Hcal(0, \Ccal_\lambda)$ where $\Ccal_\lambda = \Sig_\lambda^{-1} \Qcal_\lambda\Sig_\lambda^{-1}$. Using CMT, we obtain
\begin{equation*}
    \sqrt{N}\Bigl(\widehat{\bs \theta} - \bs \theta\Bigr) = \sqrt{N}\Bigl[ \widehat h^{(\bs \alpha)}_\lambda(\bs  z_j) -   h^{(\bs \alpha)}_\lambda(\bs    z_j)\Bigr]_{j=1}^n = \sqrt{N} \, \Scal_n(\widehat h_\lambda - h_\lambda) \overset{d}{\longrightarrow} \Ncal_n(\bs 0, \Scal_n \Ccal_\lambda \Scal_n^\ast).
\end{equation*}
It remains to show that $\Scal_n \Ccal_\lambda \Scal_n^\ast = \bs \Omega_\lambda$. First note that $\Scal_n \Ccal_\lambda \Scal_n^\ast\colon\Rbb^n \to \Rbb^n$ is a bounded linear operator, hence we can represent its action as an $n \times n$ matrix with respect to the canonical basis of $\Rbb^n$. Let the (canonical) basis vectors be denoted as $\{\bs e_j\}_{j=1}^n \in \Rbb^n$. Then, for $1 \leq k, \, j \leq n$, 
\begin{align*}
    [\Scal_n \Ccal_\lambda \Scal_n^\ast]_{k,j} = \langle \bs e_k, \Scal_n \Ccal_\lambda\Scal_n^\ast \, \bs e_j \rangle_{\Rbb^n} = \langle \Scal_n^\ast \, \bs e_k, \Ccal_\lambda\Scal_n^\ast \, \bs e_j \rangle_{\Hcal} = \langle \phi^{(\bs \alpha)}(\bs    z_k), \Ccal_\lambda\phi^{(\bs \alpha)}(\bs    z_j)\rangle_{\Hcal}.
\end{align*}
Using the definition of $\Ccal_\lambda$ (see Theorem~\ref{thm:asymptotic_properties}) and $u_j$ from \eqref{eq:u}, 
\begin{align*}
     [\Scal_n \Ccal_\lambda \Scal_n^\ast]_{k,j} &= \langle \phi^{(\bs \alpha)}(\bs    z_k), \Sig_\lambda^{-1}\Qcal_\lambda \Sig_\lambda^{-1} \phi^{(\bs \alpha)}(\bs    z_j)\rangle_{\Hcal} \\
     &=  \langle \Sig_\lambda^{-1}\phi^{(\bs \alpha)}(\bs    z_k), \Qcal_\lambda \Sig_\lambda^{-1} \phi^{(\bs \alpha)}(\bs    z_j)\rangle_{\Hcal} = \langle u_k, \Qcal_\lambda u_j \rangle_{\Hcal}
\end{align*}
which proves the claim that $\Scal_n \Ccal_\lambda \Scal_n^\ast = \bs \Omega_\lambda$.
\end{proof}
To make Proposition~\ref{proposition:asymptotic_convexity} feasible in practice, we need a consistent estimator of the asymptotic covariance matrix $\bs \Omega_\lambda$. The following result shows that the plug-in estimator $\widehat{\bs \Omega}_\lambda$ converges to its population analogue.
\begin{theorem}[Consistency of covariance estimator]\label{thm:consistency}
Let $\bs \Omega_\lambda, \,  \widehat{\bs \Omega}_\lambda$ be defined as in \eqref{eq:Omega}. Under the assumptions of Proposition~\ref{proposition:well_defined}, it holds,
    \begin{equation}
        \widehat{\bs \Omega}_\lambda \overset{a.s.}{\longrightarrow} \bs \Omega_\lambda \quad \text{as} \quad N \to \infty.
    \end{equation}
\end{theorem}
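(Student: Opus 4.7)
The plan is to reduce the asserted convergence in $\Rbb^{n\times n}$ to entry-wise almost sure convergence (all matrix norms are equivalent on a finite-dimensional space), and for each fixed pair $(k,j)$ to split the difference via the bilinear representation in \eqref{eq:Omega}:
\begin{equation*}
[\widehat{\bs\Omega}_\lambda]_{k,j} - [\bs\Omega_\lambda]_{k,j} = \langle \widehat{u}_k - u_k,\, \widehat{\Qcal}_\lambda \widehat{u}_j\rangle_\Hcal + \langle u_k,\, (\widehat{\Qcal}_\lambda - \Qcal_\lambda)\widehat{u}_j\rangle_\Hcal + \langle u_k,\, \Qcal_\lambda(\widehat{u}_j - u_j)\rangle_\Hcal.
\end{equation*}
Applying Cauchy--Schwarz and controlling $\|\widehat{\Qcal}_\lambda\|_{\op}$ and $\|\widehat{u}_j\|_\Hcal$ (bounded a.s.\ once the pieces below are in place) reduces the theorem to two ingredients: (i) $\widehat{u}_j \overset{a.s.}{\longrightarrow} u_j$ in $\Hcal$ for each $j=1,\dots,n$, and (ii) $\widehat{\Qcal}_\lambda \overset{a.s.}{\longrightarrow} \Qcal_\lambda$ in Hilbert--Schmidt (hence operator) norm. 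Ingredient (i) is immediate from the convergence $\widehat{\Sig}_\lambda^{-1} \overset{a.s.}{\longrightarrow} \Sig_\lambda^{-1}$ in operator norm (shown inside the proof of Proposition~\ref{proposition:asymptotic_properties}) combined with the continuity of $A \mapsto A\phi^{(\bs\alpha)}(\bs\xi_j)$.

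For ingredient (ii), I would introduce the \emph{oracle} version $\widetilde{\Qcal}_\lambda^{(N)} \isdef \tfrac{1}{N}\sum_{i=1}^N F_i \otimes F_i$ built from the i.i.d.\ population summands $F_i$ from \eqref{eq:F_i}, and split via triangle inequality
\begin{equation*}
\|\widehat{\Qcal}_\lambda - \Qcal_\lambda\|_{\HS} \leq \|\widehat{\Qcal}_\lambda - \widetilde{\Qcal}_\lambda^{(N)}\|_{\HS} + \|\widetilde{\Qcal}_\lambda^{(N)} - \Qcal_\lambda\|_{\HS}.
\end{equation*}
The second summand vanishes a.s.\ by the SLLN for i.i.d.\ sequences in the separable Hilbert space $\Hscr\Sscr(\Hcal)$, since $\Ebb\|F_1\otimes F_1\|_{\HS} = \Ebb\|F_1\|_\Hcal^2 < \infty$ under the standing fourth-moment hypothesis $\Ebb\|\psi_1\|_\Hcal^4 < \infty$ of Proposition~\ref{proposition:well_defined}.

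The main obstacle is controlling the plug-in remainder $\|\widehat{\Qcal}_\lambda - \widetilde{\Qcal}_\lambda^{(N)}\|_{\HS}$, which carries the sample-dependent errors in $\widehat\mu, \widehat\Sig, \widehat h_\lambda$. Writing $\widehat{F}_i = F_i + E_i$, the identity $\widehat{F}_i\otimes\widehat{F}_i - F_i \otimes F_i = F_i \otimes E_i + E_i \otimes F_i + E_i \otimes E_i$, together with $\|f\otimes g\|_{\HS} = \|f\|_\Hcal\|g\|_\Hcal$ and Cauchy--Schwarz, gives
\begin{equation*}
\|\widehat{\Qcal}_\lambda - \widetilde{\Qcal}_\lambda^{(N)}\|_{\HS} \leq \tfrac{1}{N}\sum_{i=1}^N \|E_i\|_\Hcal^2 + 2\Bigl(\tfrac{1}{N}\sum_{i=1}^N \|E_i\|_\Hcal^2\Bigr)^{1/2}\Bigl(\tfrac{1}{N}\sum_{i=1}^N \|F_i\|_\Hcal^2\Bigr)^{1/2}.
\end{equation*}
The scalar SLLN yields $\tfrac{1}{N}\sum_i \|F_i\|_\Hcal^2 \to \Ebb\|F_1\|_\Hcal^2$ a.s., so the task reduces to proving $\tfrac{1}{N}\sum_i \|E_i\|_\Hcal^2 \to 0$ a.s. To this end I would expand $E_i$ into a finite linear combination of products $r_N \cdot p_i(\widetilde\psi_i)$, where $r_N$ is one of the a.s.-vanishing residuals $\widehat\mu - \mu$, $\widehat\Sig - \Sig$, $\widehat h_\lambda - h_\lambda$ (or an interaction thereof), and $p_i(\widetilde\psi_i)$ is an at-most-quadratic polynomial in $\widetilde\psi_i$ (typical terms being $\widetilde\psi_i$ or $(\widetilde\psi_i\otimes\widetilde\psi_i)\widehat h_\lambda$). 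The residual factors vanish a.s.\ by Lemma~\ref{lemma:consistency_results} and Proposition~\ref{proposition:asymptotic_properties}, while $\tfrac{1}{N}\sum_i\|\widetilde\psi_i\|_\Hcal^{2}$ and $\tfrac{1}{N}\sum_i\|\widetilde\psi_i\|_\Hcal^{4}$ are $O(1)$ a.s.\ by the SLLN under the fourth-moment hypothesis. Each summand in the expansion of $\tfrac{1}{N}\sum_i\|E_i\|_\Hcal^2$ thus factors as a vanishing scalar times a bounded empirical moment, and is $o(1)$ a.s. The bookkeeping is mechanical but the only substantive step; combining this with the oracle SLLN and ingredient (i) closes the proof.
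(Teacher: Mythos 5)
Your proposal is correct and follows essentially the same route as the paper. The entry-wise three-term decomposition of $[\widehat{\bs\Omega}_\lambda]_{k,j}-[\bs\Omega_\lambda]_{k,j}$ is exactly the paper's; ingredient (i) ($\widehat u_j\to u_j$ a.s.\ via $\widehat\Sig_\lambda^{-1}\to\Sig_\lambda^{-1}$) matches the paper's \eqref{eq:convergence_u}; and ingredient (ii) is the paper's Lemma~\ref{lemma:consistency_Q}, which uses the same oracle $\tfrac{1}{N}\sum_i F_i\otimes F_i$, the same triangle-inequality split, and the same reduction to $\tfrac{1}{N}\sum_i\|\Delta_i\|_\Hcal^2\overset{a.s.}{\longrightarrow}0$ (your $E_i$ is the paper's $\Delta_i$, handled in Lemma~\ref{lemma:mean_squared_error_F}). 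The only cosmetic difference is the tensor split: you use $\widehat F_i\otimes\widehat F_i - F_i\otimes F_i = F_i\otimes E_i + E_i\otimes F_i + E_i\otimes E_i$, while the paper uses $\Delta_i\otimes\widehat F_i + F_i\otimes\Delta_i$; after Cauchy--Schwarz both yield equivalent bounds in terms of the same empirical moments, so this changes nothing of substance.
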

\begin{proof}
    For any $1 \leq k, j \leq n$, we have $\left|[\widehat{\bs \Omega}_\lambda]_{k, j} - [\bs \Omega_{\lambda}]_{k, j}\right| = \Big|\langle \widehat{u}_k, \widehat{\Qcal}_\lambda \widehat{u}_j \rangle_\Hcal - \langle u_k, \Qcal_\lambda u_j \rangle_\Hcal\Big|$. We can decompose the error as $ \langle \widehat{u}_k, \widehat{\Qcal}_\lambda \widehat{u}_j \rangle_\Hcal - \langle u_k, \Qcal_\lambda u_j \rangle_\Hcal = \langle \widehat{u}_k - u_k, \widehat{\Qcal}_\lambda \widehat{u}_j \rangle_\Hcal + \langle u_k, (\widehat{\Qcal}_\lambda - \Qcal_\lambda) \widehat{u}_j \rangle_\Hcal + \langle u_k, \Qcal_\lambda (\widehat{u}_j - u_j) \rangle_\Hcal$. Hence, using the triangle inequality, 
    \begin{equation*}
     \left|[\widehat{\bs \Omega}_\lambda]_{k, j} - [\bs \Omega_{\lambda}]_{k, j}\right| \leq \underbrace{\left|\langle \widehat{u}_k - u_k, \widehat{\Qcal}_\lambda \widehat{u}_j \rangle_\Hcal\right|}_{(I)} + \underbrace{\left|\langle u_k, (\widehat{\Qcal}_\lambda - \Qcal_\lambda) \widehat{u}_j \rangle_\Hcal\right|}_{(II)} + \underbrace{\left|\langle u_k, \Qcal_\lambda (\widehat{u}_j - u_j) \rangle_\Hcal\right|}_{(III)}. 
    \end{equation*}
    Now, using \eqref{eq:convergence_Sig_lambda_inverse} and the definition of $\widehat u_j, \, u_j$ from \eqref{eq:u}, we have for any $j=1,\ldots, n$,
    \begin{equation}\label{eq:convergence_u} 
        \|\widehat u_j - u_j\|_\Hcal = \|\big(\widehat \Sig_\lambda^{-1} - \Sig_\lambda^{-1}\big) \phi^{(\bs \alpha)}(\bs   z_j)\|_\Hcal \leq \underbrace{\|\widehat \Sig_\lambda^{-1} - \Sig_\lambda^{-1}\|_{\op}}_{\overset{a.s.}{\longrightarrow} 0} \; \underbrace{\|\phi^{(\bs \alpha)}(\bs   z_j)\|_\Hcal}_{< \infty} \overset{a.s.}{\longrightarrow} 0.
    \end{equation}
    From Lemma~\ref{lemma:consistency_Q}, $\|\widehat{\Qcal}_\lambda - \Qcal_\lambda\|_{\op} \leq \|\widehat{\Qcal}_\lambda - \Qcal_\lambda\|_{\HS} \overset{a.s.}{\longrightarrow} 0$, while Lemma~\ref{lemma:asymptotic_boundedness} gives $\|u_j\|_\Hcal, \, \|\widehat u_j\|_\Hcal, \, \|\Qcal_\lambda\|_{\HS} < \infty$, and $\|\widehat \Qcal_\lambda\|_{\HS} = \Ocal(1)$ a.s. $N \to \infty$. Hence, 
    \begin{equation*}
    \begin{split}
         (I) &\leq \|\widehat{u}_k - u_k\|_\Hcal \, \|\widehat\Qcal_\lambda\|_{\op} \, \|\widehat{u}_j\|_\Hcal \overset{a.s.}{\longrightarrow} 0, \\
         (II) &\leq \|u_k\|_\Hcal \, \|\widehat{\Qcal}_\lambda - \Qcal_\lambda\|_{\op} \, \|\widehat{u}_j\|_\Hcal \overset{a.s.}{\longrightarrow} 0, \\
         (III) &\leq \|u_k\|_\Hcal \, \|\Qcal_\lambda\|_{\op} \, \|\widehat{u}_j - u_j\|_\Hcal \overset{a.s.}{\longrightarrow} 0.
    \end{split}
    \end{equation*}
    Therefore, for any $1 \leq k,j \leq n$, it holds, $[\widehat{\bs \Omega}_\lambda]_{k,j} \overset{a.s.}{\longrightarrow} [\bs \Omega_\lambda]_{k,j}$. Since the grid size $n$ is fixed, hence, $\widehat{\bs \Omega}_\lambda  \overset{a.s.}{\longrightarrow} \bs \Omega_\lambda$ as $N \to \infty$.
\end{proof}

\subsubsection{Test statistic} \label{subsubsec:test_stat}
Theorem~\ref{thm:consistency} shows the consistency of the finite-sample  $n \times n$ covariance matrix $\widehat{\bs \Omega}_\lambda$. As a final step towards constructing the test statistic, we define the following. We refer an interested reader to \citet{Silvapulle2001} for further details.
\begin{definition}[Chi-bar-squared distribution]\label{def:chi_bar}
    Let $\Mcal \subset \Rbb^n$ be a closed convex cone, and let $\bs Z \sim \Ncal_n(\bs 0, \bs V)$ where $\bs V$ is a symmetric and positive definite matrix. Then, $\Bar{\chi}^2(\bs V, \Mcal)$ is defined to be the random variable having the same distribution as 
    \begin{equation}\label{eq:chi_bar}
        \bs Z^\top \bs V^{-1} \bs Z - \underset{\bs x \in \Mcal}{\min} \, \, (\bs Z- \bs x)^\top \bs V^{-1} (\bs Z - \bs x).
    \end{equation}
\end{definition}
Denote by $\Mcal^\circ \isdef \{\bs x: \langle \bs x, \bs y\rangle_{\bs V^{-1}} \leq 0 \text{     for all $\bs y \in \Mcal$} \}$ the polar cone of $\Mcal$, where we define the (Mahalanobis) inner product as
\begin{equation*}
   \langle \bs x, \bs y\rangle_{\bs V^{-1}} \isdef \bs x^\top \bs V^{-1} \bs y \quad \text{for any      } \bs x, \, \bs y \in \Rbb^n. 
\end{equation*}  
Denote by $\Pi^{\bs V^{-1}}_\Mcal(\bs Z)$ the \emph{orthogonal projection} of $\bs Z$ onto $\Mcal$ under the inner product $\langle \cdot, \cdot\rangle_{\bs V^{-1}}$. We then have the following result, see \citet[Proposition 3.4.1]{Silvapulle2001}.
\begin{proposition}\label{prop:chi_bar_squared}
     Let $\Mcal \subset \Rbb^n$ be a closed convex cone, and let $\bs Z \sim \Ncal_n(\bs 0, \bs V)$ where $\bs V$ is a symmetric and positive definite matrix. Then 
    \begin{enumerate}
        \item $\|\bs Z\|^2_{\bs V^{-1}} = \|\Pi^{\bs V^{-1}}_\Mcal(\bs Z)\|_{\bs V^{-1}}^2 + \|\bs Z - \Pi^{\bs V^{-1}}_{\Mcal}(\bs Z) \|_{\bs V^{-1}}^2$.
        \item $\Pi^{\bs V^{-1}}_{\Mcal^\circ}(\bs Z) = \bs Z - \Pi^{\bs V^{-1}}_{\Mcal}(\bs Z)$.
        \item $\|\Pi^{\bs V^{-1}}_\Mcal(\bs Z)\|_{\bs V^{-1}}^2 \sim \Bar{\chi}^2(\bs V, \Mcal)$ and $\|\bs Z - \Pi^{\bs V^{-1}}_{\Mcal}(\bs Z) \|_{\bs V^{-1}}^2 \sim \Bar{\chi}^2(\bs V, \Mcal^\circ)$.
    \end{enumerate}
\end{proposition}
We now state the following fundamental result, see \citet[Theorem 3.4.2]{Silvapulle2001}.
\begin{theorem}[Chi-bar-squared distribution]\label{thm:chi_bar_distribution}
Let $\Mcal$ be a closed convex cone in $\Rbb^n$ and let $\bs V \in \Rbb^{n \times n}$ be a symmetric and positive definite matrix. Then the distribution of $\Bar{\chi}^2(\bs V, \Mcal)$ is given by 
\begin{equation}
 \Pbb\left(\Bar{\chi}^2(\bs V, \Mcal) \leq c\right) = \sum_{j=0}^n  w_j(n, \bs V, \Mcal) \, \Pbb(\chi^2_j \leq c),
\end{equation}
where $w_j(n, \bs V, \Mcal) \geq 0$ for $0 \leq j \leq n$ and $\sum_{j=0}^n w_j(n, \bs V, \Mcal)= 1$. 
\end{theorem}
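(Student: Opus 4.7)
The plan is to reduce to the standard Gaussian case via whitening and then to decompose $\Rbb^n$ according to the facial structure of $\Mcal$. First, set $\bs W \isdef \bs V^{-1/2}\bs Z$, so that $\bs W\sim \Ncal_n(\bs 0,\bs I)$ and $\Mcal' \isdef \bs V^{-1/2}\Mcal$ is again a closed convex cone. The substitution $\bs y = \bs V^{-1/2}\bs x$ converts the $\bs V^{-1}$-inner product into the Euclidean one, giving $\Pi^{\bs V^{-1}}_\Mcal(\bs Z) = \bs V^{1/2}\Pi_{\Mcal'}(\bs W)$ and hence
\begin{equation*}
   \|\Pi^{\bs V^{-1}}_\Mcal(\bs Z)\|_{\bs V^{-1}}^2 = \|\Pi_{\Mcal'}(\bs W)\|^2.
\end{equation*}
By \eqref{eq:chi_bar_characterization}, it therefore suffices to prove that $\|\Pi_{\Mcal'}(\bs W)\|^2$ admits a nonnegative mixture representation in terms of $\chi_0^2,\chi_1^2,\ldots,\chi_n^2$.

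Next, I would stratify $\Rbb^n$ by the faces of $\Mcal'$. For any face $F$ with linear span $L_F$ and normal cone $N_F$ (at any point in the relative interior $\mathrm{ri}(F)$), set $C_F \isdef \{\bs w\in \Rbb^n:\Pi_{\Mcal'}(\bs w)\in \mathrm{ri}(F)\}$. Two convex-analytic facts will drive the argument: (a) $N_F \subset L_F^\perp$ and $C_F = \mathrm{ri}(F)+N_F$; (b) on each $C_F$ the nonlinear cone projection agrees with the linear orthogonal projection onto $L_F$, i.e.\ $\Pi_{\Mcal'}(\bs w) = \Pi_{L_F}(\bs w)$, with $j\isdef \dim F$ the dimension of the ``active'' face. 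These $C_F$ partition $\Rbb^n$ up to a Lebesgue-null boundary, which will give the mixture decomposition.

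Now fix $F$ with $\dim F = j\geq 1$ and decompose $\bs W = \bs W_F + \bs W_F^\perp$ with $\bs W_F \isdef \Pi_{L_F}(\bs W)$ and $\bs W_F^\perp \isdef \Pi_{L_F^\perp}(\bs W)$; these are independent standard Gaussians on the two orthogonal subspaces. Because $N_F \subset L_F^\perp$ and $\mathrm{ri}(F)\subset L_F$, the event $\{\bs W\in C_F\}$ factors as $\{\bs W_F\in \mathrm{ri}(F)\}\cap\{\bs W_F^\perp\in N_F\}$. The key probabilistic step is the polar decomposition for standard Gaussians on $L_F$: since $\mathrm{ri}(F)$ is a cone, $\{\bs W_F\in \mathrm{ri}(F)\}$ depends only on the direction $\bs W_F/\|\bs W_F\|$, which is independent of $\|\bs W_F\|^2\sim \chi_j^2$. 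Combining these two independences,
\begin{equation*}
   \Pbb\!\bigl(\|\Pi_{\Mcal'}(\bs W)\|^2\leq c,\;\bs W\in C_F\bigr) \;=\; \Pbb(\chi_j^2\leq c)\cdot \Pbb(\bs W\in C_F).
\end{equation*}
The apex case $F=\{\bs 0\}$ gives $C_F=(\Mcal')^\circ$ and $\Pi_{\Mcal'}(\bs w)=\bs 0$ thereon, consistent with the degenerate $\chi_0^2$ distribution. Summing over all faces and grouping by dimension yields
\begin{equation*}
   \Pbb\!\bigl(\Bar{\chi}^2(\bs V,\Mcal)\leq c\bigr) = \sum_{j=0}^n w_j\,\Pbb(\chi_j^2\leq c), \qquad w_j \isdef \!\!\sum_{F:\,\dim F=j}\!\!\Pbb(\bs W\in C_F)\ge 0,
\end{equation*}
with $\sum_{j=0}^n w_j = 1$ by the partition property.

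The hard part will be the convex-analytic step. For polyhedral $\Mcal'$, the partition $\{C_F\}$ and the linearization of $\Pi_{\Mcal'}$ on each piece follow from elementary KKT/complementary-slackness reasoning on the active face. For general closed convex cones, however, one must invoke the theory of exposed faces, normal cones and a cone-valued Moreau decomposition to justify both the face stratification and the measurability of the events $\{\bs W \in C_F\}$ (and the well-definedness of $w_j$ when uncountably many faces share the same dimension). Once this convex-geometric scaffolding is in place, the remainder of the proof is a clean assembly of independence of orthogonal Gaussian components together with the norm--direction independence for standard Gaussians.
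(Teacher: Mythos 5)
This theorem is not proved in the paper: it is imported verbatim from \citet[Theorem 3.4.2]{Silvapulle2001}, so there is no internal proof to compare against. Your sketch is a sound reconstruction of the classical argument in the \emph{polyhedral} case, which is in fact the only case the paper actually uses (Theorem~\ref{thm:test_statistic} invokes it for $\Mcal = \Rbb^n_{+}$ and its polar, both polyhedral). The whitening step is correct: $\bs x \mapsto \bs V^{-1/2}\bs x$ is an isometry from $(\Rbb^n,\langle\cdot,\cdot\rangle_{\bs V^{-1}})$ onto Euclidean $\Rbb^n$, it carries $\Mcal$ to $\Mcal'=\bs V^{-1/2}\Mcal$ and the weighted metric projection to the Euclidean one, whence $\|\Pi^{\bs V^{-1}}_\Mcal(\bs Z)\|_{\bs V^{-1}}^2=\|\Pi_{\Mcal'}(\bs W)\|_2^2$ with $\bs W\sim\Ncal_n(\bs 0,\bs I)$. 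The facial stratification $C_F=\mathrm{ri}(F)+N_F$ into orthogonal complementary pieces, the linearization $\Pi_{\Mcal'}=\Pi_{L_F}$ on $C_F$, and the radius--direction independence for spherically symmetric $\bs W_F$ then assemble into the mixture exactly as you describe when the cone is polyhedral, since finitely many such pieces partition $\Rbb^n$.

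The gap is the general (non-polyhedral) case, and it is larger than your closing paragraph suggests. For a general closed convex cone the family of faces of a given dimension may be uncountable, and each individual $C_F$ may have Lebesgue measure zero; the definition $w_j=\sum_{F:\dim F=j}\Pbb(\bs W\in C_F)$ then collapses to $0$ for those $j$. So the ``grouping by dimension'' step is not merely in need of better convex-geometric scaffolding or measurability hygiene --- the weights have to be defined differently, not the same sum executed more carefully. The standard repair is to replace the face-by-face sum by the conic (spherical) intrinsic volumes of $\Mcal'$, obtained e.g.\ by polyhedral approximation or via a Gaussian Steiner formula for the squared projection length, and then to verify the mixture identity against that definition; see Shapiro's 1985 treatment or the conic kinematic formula literature. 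For the polyhedral cones actually appearing in the paper your argument is complete and correct, but a proof of the theorem in the generality stated would need this different route for the weights.
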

We also put the following result, which characterizes an orthogonal projection in Hilbert spaces.
\begin{theorem}[{\citet[Theorem 3.16]{Bauschke2017}}]\label{thm:projection_theorem}
    Let $\Mcal$ be a non-empty closed convex subset of a Hilbert space $\Hscr$. Then for any $u \in \Hscr$, the orthogonal projection $\Pi_\Mcal(u)$ (under the $\Hscr$-inner product) is well-defined and unique and satisfies
    \begin{equation}\label{eq:variational_inequality}
        \langle u - \Pi_\Mcal(u), v - \Pi_\Mcal(u) \rangle_\Hscr \leq 0 \quad \text{for any     } v \in \Mcal.
\end{equation}
\end{theorem}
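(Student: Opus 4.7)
The plan is to prove this classical projection theorem via three standard steps: existence, uniqueness, and the variational characterization. Since $\Mcal$ is closed and convex and $\Hscr$ is a Hilbert space, all three will rest on the parallelogram identity, which is really the ingredient doing the work (and explains why the result fails in a general Banach space).

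For existence, I would use the textbook Cauchy-sequence argument: set $d \isdef \inf_{v\in\Mcal}\|u - v\|_\Hscr$, pick a minimizing sequence $(v_n)\subset\Mcal$ with $\|u-v_n\|_\Hscr \to d$, and apply the parallelogram identity to $u - v_n$ and $u - v_m$. Using that $(v_n + v_m)/2 \in \Mcal$ by convexity (so its distance from $u$ is at least $d$), the identity forces $\|v_n - v_m\|_\Hscr^2 \to 0$, so $(v_n)$ is Cauchy. Completeness of $\Hscr$ plus closedness of $\Mcal$ then provides a limit $\Pi_\Mcal(u)\in\Mcal$ attaining the infimum. Uniqueness follows from the same identity applied to any two minimizers $v_1, v_2\in\Mcal$: combined with $(v_1+v_2)/2\in\Mcal$, it immediately yields $\|v_1 - v_2\|_\Hscr^2 \leq 0$.

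For the variational inequality, I would do a first-order perturbation along a convex combination. Fix any $v \in \Mcal$ and set $v_t \isdef (1-t)\Pi_\Mcal(u) + tv$, which lies in $\Mcal$ for $t \in [0,1]$ by convexity. Minimality of $\Pi_\Mcal(u)$ gives $\|u - v_t\|_\Hscr^2 \geq \|u - \Pi_\Mcal(u)\|_\Hscr^2$ on this interval. Expanding the left-hand side as the quadratic $\|u-\Pi_\Mcal(u)\|_\Hscr^2 - 2t\langle u-\Pi_\Mcal(u), v-\Pi_\Mcal(u)\rangle_\Hscr + t^2\|v-\Pi_\Mcal(u)\|_\Hscr^2$, dividing by $t>0$ and letting $t\downarrow 0$ recovers the desired inequality \eqref{eq:variational_inequality}.

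The argument is essentially routine; the only real subtlety is the parallelogram-law manipulation that establishes the Cauchy property (and simultaneously uniqueness). The variational step is unobstructed since the convex combination $v_t$ stays in $\Mcal$ automatically, so the one-sided directional derivative at $t=0^+$ can be taken entirely within the feasible set, and no Lagrange-multiplier machinery is needed.
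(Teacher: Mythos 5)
Your proposal is a correct and complete proof of the Hilbert projection theorem via the classical route (parallelogram law for existence/uniqueness of the minimizer, first-order perturbation along $v_t=(1-t)\Pi_\Mcal(u)+tv$ for the variational inequality). The paper does not prove this result itself — it imports it verbatim from \citet[Theorem 3.16]{Bauschke2017} as a known fact — so there is no in-paper proof to compare against, but your argument is exactly the standard one given in that reference and is sound at every step.
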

For a fixed derivative order $\bs \alpha\in \Acal_s$ and a grid $\Zcal\subset \Xcal$, we test the one-sided composite cone restriction given by the \emph{positivity constraint} of the $\bs \alpha$-derivative evaluation at the grid points:
\begin{equation}\label{eq:hypothesis_test}
        H_0: \bs \theta = [h_\lambda^{(\bs \alpha)}(\bs   z_j)]_{j=1}^n\in \Rbb^n_{+} \; \text{vs.} \; H_1: \, \text{there exists some $j \in \{1, \ldots,n\}$ such that $h_\lambda^{(\bs \alpha)}(\bs   z_j) < 0$}. 
    \end{equation}
The least favorable null is given by $\bs \theta = \bs 0$ (all inequalities binding), that is, the boundary of the positive orthant $\Rbb^{n}_{+}$. 
To rule out degeneracy of the covariance, we assume $\bs\Omega_\lambda \ \text{is positive definite}$. Equivalently, no nontrivial linear combination of $u_j=\Sig_\lambda^{-1}\phi^{(\bs\alpha)}(\bs z_j)$ lies in the null space of $\Qcal_\lambda$. Under this mild condition and $N\ge n$, $\widehat{\bs\Omega}_\lambda$ is invertible with probability tending to one. 

We now state the following theorem, which defines the test statistic and shows its asymptotic distribution, see Appendix~\ref{appendix:aux_proofs} for a proof.
\begin{theorem}[Test statistic]\label{thm:test_statistic}
    Define the test statistic 
    \begin{equation}\label{eq:test_statistic}
        W_N \isdef \underset{\bs c \in \Rbb^{n}_{+}}{\min} \, \, 
        N(\widehat{\bs \theta} - \bs c)^\top \, \widehat{\bs \Omega}_\lambda^{-1}
        (\widehat{\bs \theta} - \bs c).
    \end{equation}
    Under the least favorable null $H_0:\bs \theta = \bs 0$, we have
    \begin{equation}
        W_N \overset{d}{\longrightarrow} W \sim \Bar{\chi}^2(\bs \Omega_\lambda, (\Rbb^n_{+})^\circ).
    \end{equation}
    Moreover, 
    \[
        W = \Bar{\chi}^2(\bs \Omega_\lambda, (\Rbb^n_{+})^\circ)
        = \chi_n^2 - \Bar{\chi}^2(\bs \Omega_\lambda, \Rbb^n_{+})
    \]
    with equality holding almost surely.
\end{theorem}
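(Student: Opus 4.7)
\textbf{Proof proposal for Theorem~\ref{thm:test_statistic}.}

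The plan is to reduce $W_N$ to a continuous functional of $(\sqrt{N}\widehat{\bs\theta}, \widehat{\bs\Omega}_\lambda)$, apply a continuous mapping / Slutsky argument to identify its limit in distribution, and then recognize the limit as the squared distance of a centered Gaussian to the positive orthant under the $\bs\Omega_\lambda^{-1}$ inner product, which by \eqref{eq:chi_bar_characterization} is $\bar\chi^2(\bs\Omega_\lambda,(\Rbb^n_+)^\circ)$. First, under the least favorable null $\bs\theta = \bs 0$, Proposition~\ref{proposition:asymptotic_convexity} yields $\sqrt{N}\,\widehat{\bs\theta}\overset{d}{\longrightarrow}\bs Z\sim\Ncal_n(\bs 0, \bs\Omega_\lambda)$, and Theorem~\ref{thm:consistency} gives $\widehat{\bs\Omega}_\lambda\overset{a.s.}{\longrightarrow}\bs\Omega_\lambda$; invoking the CMT on the matrix-inversion map (justified since $\bs\Omega_\lambda$ is positive definite, so $\widehat{\bs\Omega}_\lambda$ is eventually positive definite a.s.) gives $\widehat{\bs\Omega}_\lambda^{-1}\overset{a.s.}{\longrightarrow}\bs\Omega_\lambda^{-1}$.

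Next, since $\Rbb^n_+$ is a cone, the rescaling $\bs c \mapsto \bs c/\sqrt{N}$ is a bijection of $\Rbb^n_+$ onto itself, so
\begin{equation*}
    W_N \;=\; \min_{\bs c'\in\Rbb^n_+}\bigl(\sqrt{N}\,\widehat{\bs\theta} - \bs c'\bigr)^\top \widehat{\bs\Omega}_\lambda^{-1}\bigl(\sqrt{N}\,\widehat{\bs\theta} - \bs c'\bigr) \;=\; g\bigl(\sqrt{N}\,\widehat{\bs\theta},\,\widehat{\bs\Omega}_\lambda^{-1}\bigr),
\end{equation*}
where $g(\bs y, \bs M) \isdef \min_{\bs c\in\Rbb^n_+}(\bs y-\bs c)^\top \bs M(\bs y-\bs c)$. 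The map $g$ is jointly continuous on $\Rbb^n\times\Scal_n^{++}$: for each positive definite $\bs M$, the problem is the squared $\bs M$-norm distance from $\bs y$ to the closed convex cone $\Rbb^n_+$, whose argmin depends continuously on $(\bs y,\bs M)$ by standard projection-stability arguments. Applying the CMT and Slutsky's theorem to the pair $(\sqrt{N}\widehat{\bs\theta},\widehat{\bs\Omega}_\lambda^{-1})$ therefore gives $W_N\overset{d}{\longrightarrow} g(\bs Z,\bs\Omega_\lambda^{-1}) = \|\bs Z - \Pi^{\bs\Omega_\lambda^{-1}}_{\Rbb^n_+}(\bs Z)\|_{\bs\Omega_\lambda^{-1}}^2$, which by the second identity of \eqref{eq:chi_bar_characterization} has the distribution $\bar\chi^2(\bs\Omega_\lambda,(\Rbb^n_+)^\circ)$.

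For the second equality, I will invoke the Moreau-type decomposition associated with the closed convex cone $\Rbb^n_+$ under the inner product $\langle\cdot,\cdot\rangle_{\bs\Omega_\lambda^{-1}}$: writing $\bs Z_1 \isdef \Pi^{\bs\Omega_\lambda^{-1}}_{\Rbb^n_+}(\bs Z)$ and $\bs Z_2 \isdef \bs Z - \bs Z_1$, the variational inequality from Theorem~\ref{thm:projection_theorem} gives $\bs Z_2 \in (\Rbb^n_+)^\circ$ and $\langle \bs Z_1,\bs Z_2\rangle_{\bs\Omega_\lambda^{-1}}=0$ almost surely. Pythagoras then yields $\|\bs Z\|_{\bs\Omega_\lambda^{-1}}^2 = \|\bs Z_1\|_{\bs\Omega_\lambda^{-1}}^2 + \|\bs Z_2\|_{\bs\Omega_\lambda^{-1}}^2$ a.s.; the left-hand side is $\bs Z^\top\bs\Omega_\lambda^{-1}\bs Z\sim\chi_n^2$ since $\bs Z\sim\Ncal_n(\bs 0,\bs\Omega_\lambda)$, while \eqref{eq:chi_bar_characterization} identifies $\|\bs Z_1\|_{\bs\Omega_\lambda^{-1}}^2 \sim \bar\chi^2(\bs\Omega_\lambda,\Rbb^n_+)$ and $\|\bs Z_2\|_{\bs\Omega_\lambda^{-1}}^2 \sim \bar\chi^2(\bs\Omega_\lambda,(\Rbb^n_+)^\circ)$, giving the almost-sure identity claimed.

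The main obstacle I anticipate is the joint continuity of $g(\bs y,\bs M)$ in both arguments: continuity in $\bs y$ for fixed $\bs M$ is classical nonexpansiveness of the metric projection, but perturbing the Mahalanobis metric simultaneously with $\bs y$ requires a stability estimate for the projection operator under a varying inner product. One clean route is to write $g(\bs y,\bs M) = \min_{\bs c\in\Rbb^n_+}\|\bs M^{1/2}(\bs y-\bs c)\|_2^2$, reduce to the case of the Euclidean projection onto the (closed convex) image $\bs M^{1/2}\Rbb^n_+$, and use that the map $\bs M\mapsto \bs M^{1/2}$ is continuous on $\Scal_n^{++}$; the remaining minor technicalities (eventual positive definiteness of $\widehat{\bs\Omega}_\lambda$ a.s., measurability of $W_N$) are routine.
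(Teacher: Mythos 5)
Your proposal is correct and follows essentially the same route as the paper: rescaling $W_N$ by the cone property so it becomes a continuous functional of $(\sqrt{N}\widehat{\bs\theta},\widehat{\bs\Omega}_\lambda^{-1})$, invoking joint weak convergence plus a joint-continuity lemma for the projection error (the paper proves this via Weyl's perturbation theorem, boundedness, and the variational inequality in Lemmas~\ref{lemma:continuity_projection}--\ref{lemma:minimizer_continuity}, whereas you sketch an alternative through continuity of $\bs M\mapsto\bs M^{1/2}$ — both workable), and then identifying the limit as $\bar\chi^2(\bs\Omega_\lambda,(\Rbb^n_+)^\circ)$ with the Moreau/Pythagoras decomposition giving the a.s.\ identity. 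Your explicit derivation of $\langle\bs Z_1,\bs Z_2\rangle_{\bs\Omega_\lambda^{-1}}=0$ and $\bs Z_2\in(\Rbb^n_+)^\circ$ from the variational inequality fills in a step the paper cites without elaboration, but the argument is the same.
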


Under the least favorable null, the Wald-type statistic given by $W_N$ is the distance of the centered, scaled estimate $\sqrt{N}\widehat{\bs \theta}$
to the closed, convex cone given by the positive orthant $\Rbb^n_{+}$. Tests for the opposite sign (e.g., monotonically decreasing or concavity) are obtained by applying the positivity test to the sign-flipped vector \(-\bs \theta\). Unlike the usual form of the Wald test, here we have a one-sided test. The test statistic $W_N$ measures the \emph{projection error} under the Mahalanobis distance $\|\cdot\|_{\widehat{\bs \Omega}_\lambda^{-1}}$, and its limit law describes how far $\bs \theta$ is from the feasibility region, which is given by the composite null hypothesis. Under $H_0$, the asymptotic distribution of $W_N$ depends on which inequalities are binding for $\bs \theta$. As the entries get more strictly positive (when $\bs \theta$ moves into the interior of $\Rbb^{n}_{+}$), the test statistic $W_N$ gets stochastically smaller: the largest (least favorable case) occurs when all the constraints are binding, that is, all the entries of $\bs \theta$ are zero. In particular, for any $\bs \theta \in \Rbb^n_{+}$, $\Pbb_{\bs \theta}(W_N \geq c) \leq \Pbb_{\bs \theta = \bs 0}(W_N \geq c)$. So, we calibrate the critical values (or $p$-values) at the least favorable null $\bs \theta = \bs 0$. 

The asymptotic distribution of $W_N$ stated in Theorem~\ref{thm:test_statistic} has the form as given in Theorem~\ref{thm:chi_bar_distribution} and hence, to obtain the $p$-values, we need to calculate $ \Pbb\left(\Bar{\chi}^2(\bs \Omega_\lambda, (\Rbb^n_{+})^\circ) \geq c\right)$. In practice, the tail-probability is estimated via a Monte Carlo replication, see \cite[Section 3.5]{Silvapulle2001}, such that the test statistic is solved via a nonnegative least squares problem, see Appendix~\ref{appendix:test_statistic}. 
\begin{remark}
    In applications such as asset pricing, the grid $\Zcal$ may correspond to economically meaningful points, e.g., moneyness levels, return quantiles, or values of a macroeconomic state variable, and the vector $\bs \theta$ collects local economic implications. The statistic $W_N$ in Theorem~\ref{thm:test_statistic} then implements a global test of an economic restriction rather than a pointwise check, and randomizing or resampling grids may be used to summarize robustness to grid placement; see \cite{Luzzi2025} for further details.
\end{remark}

\section{Numerical experiments}\label{sec:experiments}
\subsection{Simulation study}\label{subsec:simulation}
We study the finite-sample behavior of the one-sided Wald test using a limit experiment aligned with Section~\ref{sec:inference}. Fix a grid size $n$ and a sample size $N$. Under the least-favorable null, set $\bs\theta=\bs 0$ and generate
\[
    \widehat{\bs \theta} \;=\; \bs \theta + \bs \Omega^{1/2}\bs Z/\sqrt{N},
    \qquad \bs Z\sim\Ncal(\bs 0,\bs I_n),
\]
so that $\sqrt{N}\,\widehat{\bs \theta}\overset{d}{\longrightarrow}\Ncal(\bs 0,\bs \Omega)$ for a positive-definite covariance $\bs \Omega$. The test statistic is
\[
  W_N \;=\; N \min_{\bs c\in\Rbb^n_{+}}\,(\widehat{\bs \theta}-\bs c)^\top \widehat{\bs \Omega}^{-1}(\widehat{\bs \theta}-\bs c),
\]
where $\widehat{\bs \Omega}$ is a plug-in covariance.

\subsubsection*{Covariance designs}
We consider three ground-truth designs:
\begin{enumerate}
    \item Identity: $\bs \Omega=\bs I_n$;
    \item Decaying spectrum: $\bs \Omega=\bs U\operatorname{diag}(\bs \lambda)\bs U^\top$ with decreasing $\lambda_j$;
    \item Spiked spectrum: a few large spikes in an evenly spaced bulk.
\end{enumerate}

\subsubsection*{Alternatives (equal-$\ell_2$ signal)}
To assess power, we violate positivity on a random support of size
\[
k_{\mathrm{mild}}=\lfloor 0.05\,n\rfloor,\quad
k_{\mathrm{mod}}=\lfloor 0.10\,n\rfloor,\quad
k_{\mathrm{strong}}=\lfloor 0.25\,n\rfloor.
\]
For each case, we set the total signal $S=c\sqrt{\log n}$ and use per-coordinate shifts
\[
\delta=S/\sqrt{k}\qquad(\text{so } \|\bs\theta\|_2=S),
\]
then flip the sign on those coordinates to enforce a one-sided violation. The constants $c_\mathrm{mild},c_\mathrm{mod},c_\mathrm{strong}$ are tuned to avoid saturation. 

\subsubsection*{Computation and calibration}
Projection onto $\Rbb^n_{+}$ is solved by the projected gradient descent (tolerance $10^{-9}$, at most $4000$ iterations). We vary $(n,N)\in\{50,100\}\times\{500,1250,1500\}$. Chi-bar critical values and $p$-values are obtained by Monte Carlo calibration under the least-favorable null. The Monte Carlo replications for obtaining the critical values scale with $n$ (200, and 500 replications per point for $n=50,100$, respectively), while the chi-bar calibration uses $10000, 12000$ draws, respectively, for $n=50,100$. 

\subsubsection*{Results} Each panel in Figure~\ref{fig:size-power} reports the empirical rejection probability at the nominal level $\alpha=0.05$ as a function of the sample size $N$: the curve under $H_0$ tracks size (with the dashed line marking the 5\% target), while the remaining curves report power under increasingly strong violations (mild, moderate, strong). We stress-test robustness across three dependence regimes for the moment covariance, $\bs \Omega\in\{\bs I_n,\text{SVD-decay},\text{SVD-spike}\}$, and two grid sizes ($n=50, 100$). We see that the test is well calibrated across all covariance designs, since the empirical size is well controlled near $\alpha=0.05$ for every $(n,N)$. Power increases monotonically in $N$ and with violation strength, approaching one rapidly for the moderate and strong alternatives, and becoming substantial even under the mild alternative with increasing sample size. Under equal-$\ell_2$ scaling, the behavior is comparable across $n\in\{50,100\}$, indicating robustness to grid size and to the spectrum of $\bs \Omega$, and thus supporting the use of these shape-restriction tests in the subsequent empirical analysis.

\begin{figure}[htbp]
\centering
\includegraphics[width=\textwidth]{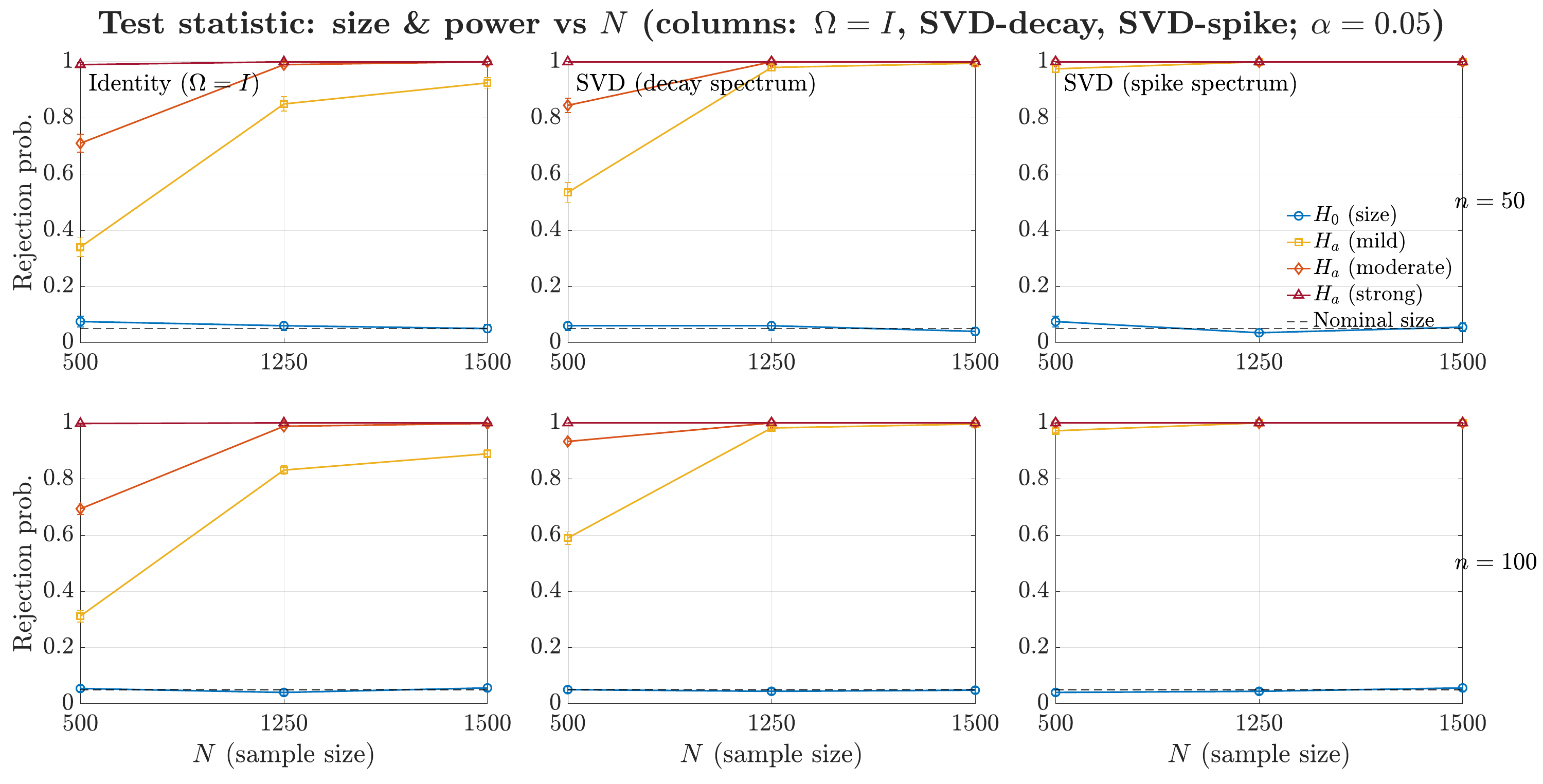}
\caption[Performance of test statistic: size and power vs.\ $N$]{\small Performance of the test statistic: size and power vs.\ $N$. Columns: covariance designs ($\bs \Omega=\bs I_n$, SVD-decay, SVD-spike). Rows: grid sizes $n\in\{50,100\}$. Curves show empirical size ($H_0$) and power under mild/moderate/strong violations with equal-$\ell_2$ scaling; the dashed line marks the nominal size $\alpha=0.05$.}
\label{fig:size-power}
\end{figure}

\subsection{Empirical study}\label{subsec:empirical_study}
An interesting application of the proposed methodology is in asset pricing, where estimating the stochastic discount factor (SDF) and uncovering its shape properties poses a substantial challenge. In this section, we describe how to implement the methodology and conduct the shape test for a nonparametric estimation and assessment of the SDF’s shape features; for further details, see \cite{Luzzi2025}. Exploiting the classical duality between the \citet{hansen1991implications} minimum-variance discount factor and the maximum Sharpe-ratio payoff, the recovery of the SDF can be recast as a minimum-variance/Sharpe-optimal portfolio selection problem defined over an appropriate class of nonlinear claims in the options and underlying markets. Specifically, we expand the admissible payoff space from linear returns to nonlinear claims of the form $h(S_T/S_t)$, assumed twice differentiable almost everywhere, where $S_T/S_t$ denotes the forward gross return on the S\&P 500 from time $t$ to $T \ge t$. In a no-arbitrage setting, any twice-differentiable claim $h$ can be replicated using the \cite{carr2001optimal} spanning representation, producing a delta-hedged component determined by $h'(1)$ along with a portfolio of out-of-the-money options whose strike-by-strike weights are governed by $h''(\cdot)$. After discretization in terms of moneyness $m_{it} \isdef K_i/S_t$, this leads to a feasible approximation 
\begin{align*}
     h(R_{t+1})- 
    \underbrace{h(1)}_{\text{Cash}}
    &\approx 
    \underbrace{h'(1)(R_{t+1}-1)}_{\text{Hedge component}} \nonumber \\
    &+ 
    \underbrace{\sum_{i=1}^{N_K(t)}h''\left(m_{it}\right)(\chi_{\{m_{it}>1\}}(m_{it} - R_{t+1}) + \chi_{\{m_{it}\leq 1\}}(R_{t+1} - m_{it}))w_{it}}_{\text{Option Portfolio}},
\end{align*}
where $R_{t+1} \isdef S_T/S_t$, $w_{it}$ are quadrature weights, and $N_K(t)$ is the number of strike prices (and thus contracts) in period $t$. To connect this to the pricing kernel, we target the (projected) SDF onto returns and other conditioning variables $v_t$ at time $t$ and define excess-payoff returns 
\begin{equation*}
    \Rcal_{t+1}(h) \isdef h(R_{t+1}; v_t) - \Ebb^\Qbb_t[h(R_{t+1}; v_t)],
\end{equation*}
where $\Ebb^\Qbb_t[\cdot]$ is the conditional expectation at time $t$ taken with respect to the risk-neutral distribution $\Qbb$. Minimizing the mean-variance functional $-\Ebb^\Pbb[R_{t+1}(h)] + \tfrac{1}{2} \Vbb^\Pbb[R_{t+1}(h)]$ over twice-differentiable claims $h$ recovers (up to sign) the maximum-Sharpe SDF. Methodologically, the Sharpe-optimal strategy is short the SDF, so we parameterize the (forward) minimum-variance discount factor as $M_{t+1} \isdef 1- h(R_{t+1};v_t)$, with regularization shrinking 
$h$ toward zero to preserve a mean-one SDF normalization. We then \enquote{learn} $h$
nonparametrically by solving a regularized mean-variance problem of the form in Problem~\ref{eq:empirical_problem_RKHS} over an appropriate RKHS containing twice-differentiable functions. All statistical properties stated in Theorems~\ref{thm:asymptotic_properties} and \ref{thm:finite_sample} remain valid. In addition, the monotonicity and convexity hypotheses of the SDF are directly testable via the joint Wald statistic described in Section~\ref{sec:inference}. We present certain results using this testing procedure and refer an interested reader to \citet{Luzzi2025} for a detailed analysis of this nonparametric recovery, along with empirical examinations of its shape constraints.

\subsubsection*{Monotonicity test} 
Figure~\ref{fig:ECDF_monotonicity} shows the empirical CDFs (ECDFs) of the $p$-values for the monotonicity tests for both datasets. For the baseline monthly options data (left panel), the curves lie on $p=1$ for all four volatility states; this indicates strong evidence of no rejection of a monotonically decreasing SDF across volatility states. In contrast, for 0DTEs (right panel), the ECDFs at low and low-mid volatility levels concentrate near zero, leading to rejections of the monotonically decreasing behavior of the SDF; while at mid-high level, it coincides with the critical level, and it shifts right and aligns with no rejection for the highest level. This points to evidence of nonmonotonic behavior and the presence of a possible U-shaped pattern. This behavior is also seen from Table~\ref{table:monotonicity} that summarize the same test results. An additional insight is seen from the logarithm of the test statistic. Notice that it does not vary considerably across the volatility states. However, we notice that while that for the monthlies is comparatively much smaller than 0DTEs, indicating that the SDF for the monthlies satisfies the null hypothesis (in this case: monotonically decreasing) comfortably, while that for the 0DTEs  is either rejected/close to being rejected. 

\begin{figure}[htbp]
\centering
\includegraphics[width =0.8\textwidth]{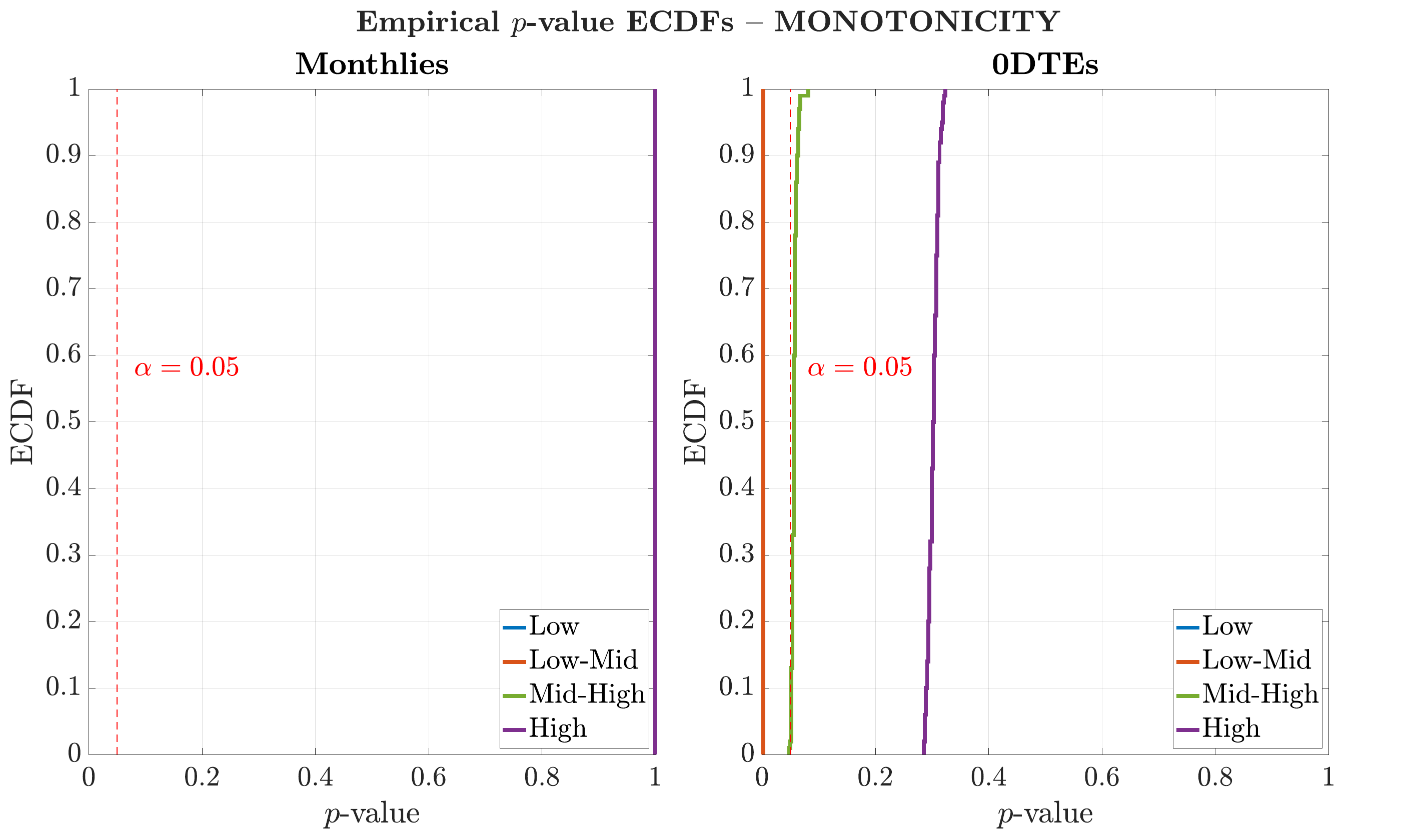}
\caption{\small Empirical CDFs of $p$-values for the monotonicity test computed on 100 random grids of 100 points, drawn uniformly over the moneyness range of OTM monthly option data (2000--2022) and 0DTE option data (2014-2022), stratified by volatility quartiles. Monthly options (left) and 0DTE options (right), at level $\alpha = 0.05$, marked by the vertical red dashed line.}
\label{fig:ECDF_monotonicity}
\end{figure}

\begin{table}[htbp]
\centering
\begin{tabular}{llrrr}
\toprule
\multicolumn{5}{c}{\textbf{Monotonicity}} \\
\midrule
Dataset & Volatility & Accept.\ rate (\%) & Median $\log_{10}(\text{test stat})$ & Median $\log_{10}(\mathrm{crit})$ \\
\midrule
\multirow{4}{*}{Monthlies}
 & Low      & 100.000 & -15.654 & -15.654 \\
 & Low-Mid  & 100.000 & -15.654 & -15.654 \\
 & Mid-High & 100.000 & -15.654 & -15.654 \\
 & High     & 100.000 & -15.654 & -15.654 \\
\midrule
\multirow{4}{*}{0DTEs}
 & Low      &   0.000 &   1.847 &   0.645 \\
 & Low-Mid  &   0.000 &   1.278 &   0.691 \\
 & Mid-High &  98.000 &   0.718 &   0.730 \\
 & High     & 100.000 &   0.195 &   0.685 \\
\bottomrule
\end{tabular}
\caption[Monotonicity test: monthly options and 0DTEs]{\small Monotonicity test results across volatility states (Uniform grids, $n=100$, $\alpha=0.05$). For each dataset and volatility level (low, low-mid, mid-high, high) we report the acceptance rate at level $\alpha=0.05$ (in percent), the median of the logarithm of the test statistic, and the median of $\log_{10}(\mathrm{crit}_{\alpha})$.}
\label{table:monotonicity}
\end{table}

\subsubsection{Convexity test} Figure~\ref{fig:ECDF_convexity} shows the empirical CDFs (ECDFs) of the $p$-values for the convexity tests for both datasets. For the monthly options (left panel), we fail to reject the null hypothesis for all the volatility states, except for the low case. We observe that the curves shift toward the right for increasing volatility. In contrast, for the 0DTEs (right panel), we find strong evidence of a convex shape as the ECDFs concentrate around $p=1$, across all volatility states. This behavior is also seen from the results of Table~\ref{table:convexity}. Across the different volatility levels, the empirical SDF recovered using monthly options data does not have a convex shape, but rather an almost-linear, even an ever slightly concave shape; while for the 0DTEs, it is completely the converse case, since we see a pronounced U-shaped pattern, except for the highest volatility state, where it is almost linear.

\begin{figure}[htbp]
\centering
\includegraphics[width =0.8\textwidth]{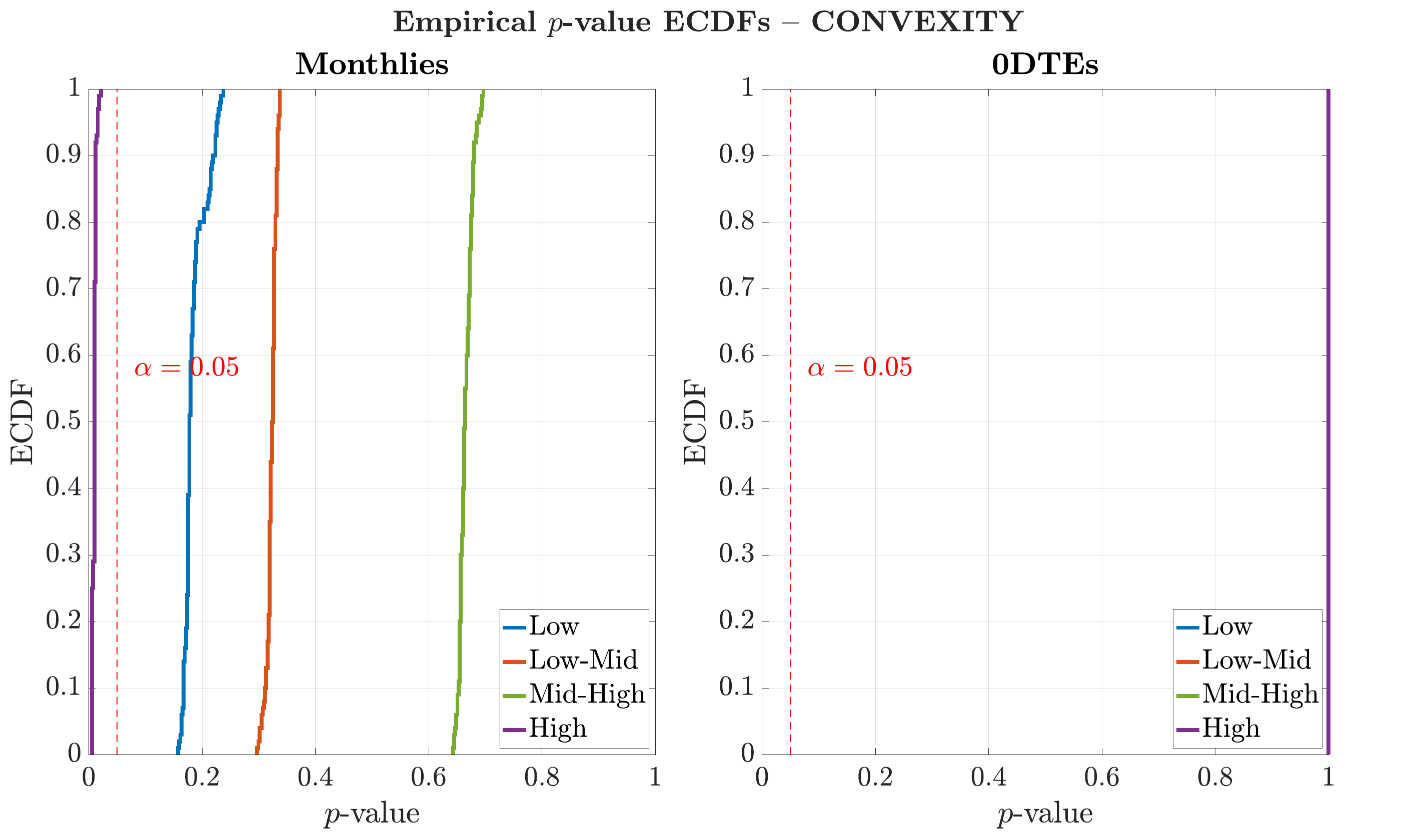}
\caption{\small Empirical CDFs of $p$-values for the convexity test computed on 100 random grids of 100 points, drawn uniformly over the moneyness range of OTM monthly option data (2000--2022) and 0DTE option data (2014-2022), stratified by volatility quartiles. Monthly options (left) and 0DTE options (right), at level $\alpha = 0.05$, marked by the vertical red dashed line.}
\label{fig:ECDF_convexity}
\end{figure}

\begin{table}[htbp]
\centering
\begin{tabular}{llrrr}
\toprule
\multicolumn{5}{c}{\textbf{Convexity}} \\
\midrule
Dataset & Volatility & Accept.\ rate (\%) & Median $\log_{10}(\text{test stat})$ & Median $\log_{10}(\mathrm{crit})$ \\
\midrule
\multirow{4}{*}{Monthlies}
 & Low      & 100.000 &  0.396 &  0.671 \\
 & Low-Mid  & 100.000 &  0.172 &  0.699 \\
 & Mid-High & 100.000 & -0.457 &  0.741 \\
 & High     &   0.000 &  0.947 &  0.717 \\
\midrule
\multirow{4}{*}{0DTEs}
 & Low      & 100.000 & -15.654 & -15.654 \\
 & Low-Mid  & 100.000 & -15.654 & -15.654 \\
 & Mid-High & 100.000 & -15.654 & -15.654 \\
 & High     & 100.000 & -15.654 & -15.654 \\
\bottomrule
\end{tabular}
\caption[Convexity test: monthly options and 0DTEs]{\small Convexity test results across volatility states (Uniform grids, $n=100$, $\alpha=0.05$). For each dataset and volatility level (low, low-mid, mid-high, high), we report the acceptance rate at level $\alpha=0.05$, the median of the logarithm of the test statistic, and the median of $\log_{10}(\mathrm{crit}_{\alpha})$.}
\label{table:convexity}
\end{table}

\subsubsection{Concavity test}
We now set the null hypothesis to concavity, to be that the SDF is \textit{concave}, and show our findings in Figure~\ref{fig:ECDF_concavity} and Table~\ref{table:concavity}. In Figure~\ref{fig:ECDF_concavity}, for the baseline monthly options data (left panel), the curves lie to the right of \(\alpha=0.05\), indicating no rejections of a concave SDF across volatility states. In contrast, for $0$DTEs (rightpanel), the ECDFs concentrate near zero, except for the highest volatility level, leading to frequent rejections of the concave behavior of the SDF. Similar findings are also obtained from Table~\ref{table:concavity}. The findings of the concavity test confirm that the SDF of the monthly options data is near-linear across all volatility levels, whereas for the 0DTEs, the SDF has a U-shape.
\begin{figure}[htbp]
\centering
\includegraphics[width =0.8\textwidth]{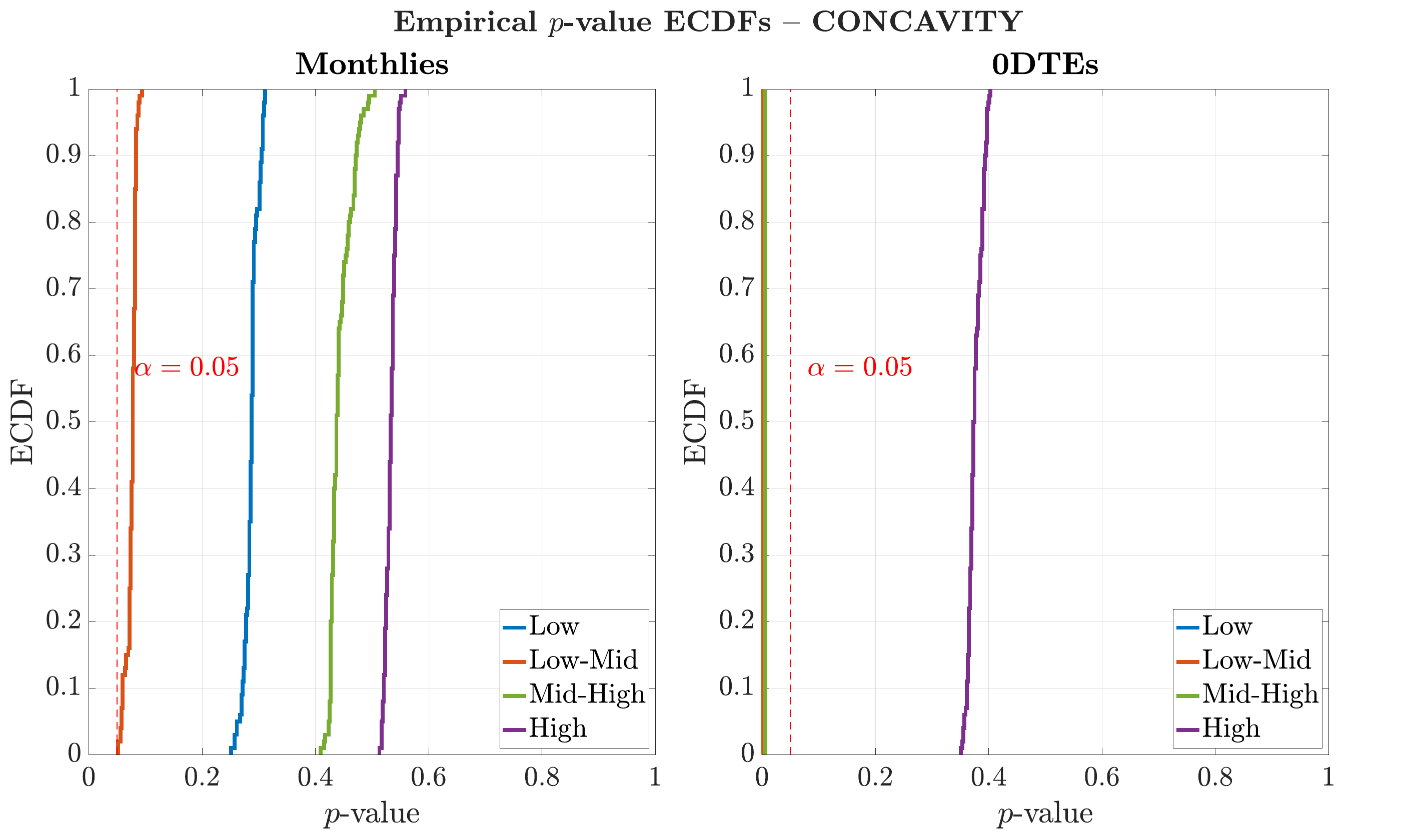}
\caption{\small Empirical CDFs of $p$-values for the concavity test computed on 100 random grids of 100 points, drawn uniformly over the moneyness range of OTM monthly option data (2000--2022) and 0DTE option data (2014-2022), stratified by volatility states. Monthly options (left) and 0DTE options (right), at level $\alpha = 0.05$, marked by the vertical red dashed line.}
\label{fig:ECDF_concavity}
\end{figure}

\begin{table}[htbp]
\centering
\begin{tabular}{llrrr}
\toprule
\multicolumn{5}{c}{\textbf{Concavity}} \\
\midrule
Dataset & Volatility & Accept.\ rate (\%) & Median $\log_{10}(\text{test stat})$ & Median $\log_{10}(\mathrm{crit})$ \\
\midrule
\multirow{4}{*}{Monthlies}
 & Low      & 100.000 &  0.247 &  0.671 \\
 & Low-Mid  & 100.000 &  0.626 &  0.699 \\
 & Mid-High & 100.000 & -0.027 &  0.741 \\
 & High     & 100.000 & -0.141 &  0.717 \\
\midrule
\multirow{4}{*}{0DTEs}
 & Low      &   0.000 &  1.965 &  0.427 \\
 & Low-Mid  &   0.000 &  1.442 &  0.420 \\
 & Mid-High &   0.000 &  1.003 &  0.475 \\
 & High     & 100.000 & -0.591 &  0.414 \\
\bottomrule
\end{tabular}
\caption[Concavity test: monthly options and 0DTEs]{\small Concavity test results across volatility states (Uniform grids, $n=100$, $\alpha=0.05$). For each dataset and volatility level (low, low-mid, mid-high, high), we report the acceptance rate at level $\alpha=0.05$, the median of the logarithm of the test statistic, and the median of $\log_{10}(\mathrm{crit}_{\alpha})$.}
\label{table:concavity}
\end{table}

\section{Conclusion and future work}\label{sec:conclusion}
We develop a nonparametric RKHS framework for mean-variance optimization in which the objective depends linearly on the function and on a finite number of its derivatives, making it well suited to asset-allocation and pricing problems built from returns and characteristics. A representer theorem shows that the empirical optimizer admits a finite-dimensional expansion, yielding a computationally tractable estimator even when derivatives enter the criterion. We prove consistency and a functional central limit theorem for the regularized optimizer and complement these with finite-sample deviation bounds that quantify how the tuning parameter stabilizes estimation under dependence and high dimension. Building on this, we introduce a joint Wald-type statistic to test economically meaningful shape restrictions by checking positivity of derivative evaluations on a user-chosen grid. Numerical experiments indicate reliable size control and rapidly improving power across diverse covariance regimes and different sparsity patterns. These results provide a practical toolkit for assessing monotone or convex restrictions implied by no-arbitrage and characteristic-based asset-pricing theories, and our options-based application delivers supportive empirical evidence for the proposed methodology.

\bibliographystyle{abbrvnat} 
\bibliography{literature}
\appendix
\section{Proofs for Section~\ref{sec:prelims}}\label{appendix:proofs_formulation}
\begin{proof}[Proof of Proposition~\ref{prop:well_defined_1}]
Since the map $\bs x \mapsto \phi^{(\bs \alpha)}(\bs x)$ is continuous, it is measurable; the weights $w_{\bs \alpha}$ are measurable real-valued functions. Because $\Acal_s$ is finite and $\Hcal$ is separable, $\psi(\bs z)$ being a finite linear combination of measurable $\Hcal$-valued maps, is strongly measurable. Under Assumption~\ref{assumption:integrability}, Lyapunov's inequality yields
\[
\Ebb\|\psi(\bs z)\|_\Hcal \le \bigl(\Ebb\|\psi(\bs z)\|_\Hcal^4\bigr)^{1/4} < \infty,
\qquad
\Ebb\|\psi(\bs z)\|_\Hcal^2 \le \bigl(\Ebb\|\psi(\bs z)\|_\Hcal^4\bigr)^{1/2} < \infty.
\]
Therefore, the Bochner integral $\mu \isdef \Ebb[\psi_i] = \Ebb[\psi(\bs z)]$ exists. Let $\widetilde \psi_i \isdef \psi_i - \mu$. The 
map $(u,v)\mapsto u\otimes v$ is continuous bilinear on $\Hcal \times \Hcal\to\Hscr\Sscr(\Hcal)$; since $\widetilde \psi_i$ is strongly measurable, $\omega \mapsto \widetilde \psi_i(\omega)\otimes \widetilde \psi_i(\omega)$ is strongly measurable as an $\Hscr\Sscr(\Hcal)$-valued random variable and satisfies
$\|\widetilde \psi_i\otimes \widetilde \psi_i\|_{\HS} = \|\widetilde \psi_i\|_\Hcal^2$. Since
$\Ebb\|\widetilde \psi_i\|_\Hcal^2 \le 2\,\Ebb\|\psi_i\|_\Hcal^2 + 2\,\|\mu\|_\Hcal^2 < \infty$,
it follows that 
\[
\Sig \isdef \Ebb\big[\widetilde \psi_i \otimes \widetilde \psi_i\big]
\]
is a well-defined covariance operator, see \cite{Baker1981}. By \citet[Theorem 1.7]{Bosq2000}, $\Sig$ is self-adjoint, positive, and trace-class. The same arguments apply to the empirical versions $\widehat \mu$ and $\widehat \Sig$ under the empirical probability distribution $\widehat \Pbb$. In particular, since each rank-one operator of the form $u\otimes u$ is self-adjoint, positive, and trace-class,
\[
\widehat \Sig = \frac{1}{N}\sum_{i=1}^N (\psi_i - \widehat \mu)\otimes(\psi_i - \widehat \mu)
\]
being a finite linear sum of such operators, also satisfies these properties.
\end{proof}


\begin{proof}[Proof of Lemma~\ref{lemma:resolvent}]
By Proposition~\ref{prop:well_defined_1}, $\Sig$ and $\widehat\Sig$ are bounded, self-adjoint, and positive on $\Hcal$. Hence
\[
\sigma(\Sig)\subset[0,\|\Sig\|_{\op}],\qquad
\sigma(\widehat\Sig)\subset[0,\|\widehat\Sig\|_{\op}].
\]
By the \emph{spectral mapping theorem},
\[
\sigma(\Sig_\lambda)=\sigma(\Sig+\lambda I)=\sigma(\Sig)+\lambda\subset[\lambda,\|\Sig\|_{\op}+\lambda],
\quad
\sigma(\widehat\Sig_\lambda)\subset[\lambda,\|\widehat\Sig\|_{\op}+\lambda].
\]
In particular, $0\notin\sigma(\Sig_\lambda)$ and $0\notin\sigma(\widehat\Sig_\lambda)$, so both inverses exist and are bounded:
\[
\Sig_\lambda^{-1},\,\widehat\Sig_\lambda^{-1}\in\Bscr(\Hcal),\qquad
\|\Sig_\lambda^{-1}\|_{\op}=\frac{1}{\inf\{\sigma(\Sig_\lambda)\}}\le \frac{1}{\lambda},\qquad
\|\widehat\Sig_\lambda^{-1}\|_{\op}=\frac{1}{\inf\{\sigma(\widehat\Sig_\lambda)\}}\le \frac{1}{\lambda}.
\]
This proves (1).

For (2), note first that $\|\Sig_\lambda\|_{\op}\le \|\Sig\|_{\op}+\|\lambda I\|_{\op}=\|\Sig\|_{\op}+\lambda$; hence for all $h\in\Hcal$,
\[
|\langle h,\Sig_\lambda h\rangle_\Hcal|\le \|\Sig_\lambda\|_{\op}\|h\|_\Hcal^2
\le \big(\|\Sig\|_{\op}+\lambda\big)\|h\|_\Hcal^2.
\]
Moreover, by (1), 
\[
|\langle h,\Sig_\lambda^{-1} h\rangle_\Hcal|
\le \|\Sig_\lambda^{-1}\|_{\op}\,\|h\|_\Hcal^2
\le \tfrac{1}{\lambda}\,\|h\|_\Hcal^2.
\]
\end{proof}


\begin{proof}[Proof of Proposition~\ref{proposition:optimal_solutions}]
For any increment $\theta\in\Hcal$,
\begin{align*}
    J_\lambda(h+ \theta) - J_\lambda(h)
    &= \frac{1}{2} \Bigl(\langle h+\theta, \Sig_\lambda (h+\theta)\rangle_\Hcal - \langle h, \Sig_\lambda h \rangle_\Hcal\Bigr) - \langle \theta, \mu \rangle_\Hcal \\
    &= \langle \theta, \Sig_\lambda h - \mu \rangle_\Hcal + \frac{1}{2}\langle \theta, \Sig_\lambda \theta \rangle_\Hcal.
 \end{align*}
Since $\Sig$ is trace-class (Proposition~\ref{prop:well_defined_1}), it is bounded; hence $\|\Sig_\lambda\|_{\op}<\infty$, see Lemma~\ref{lemma:resolvent}. Therefore, as $\|\theta\|_\Hcal \to 0$, 
\[
\frac{|J_\lambda(h + \theta) - J_\lambda(h) - \langle \theta, \Sig_\lambda h - \mu \rangle_\Hcal|}{\|\theta\|_\Hcal}
= \frac{|\langle \theta, \Sig_\lambda \theta \rangle_\Hcal|}{2\|\theta\|_\Hcal}
\le \frac{1}{2}\,\|\Sig_\lambda\|_{\op}\,\|\theta\|_\Hcal \to 0.
\]
Hence, $J_\lambda$ is Fréchet differentiable with derivative
\[
\nabla J_\lambda(h) \;=\; \Sig_\lambda h - \mu.
\]
Since $\Sig_\lambda^{-1} \in \Bscr(\Hcal)$ by Lemma~\ref{lemma:resolvent}, the first-order condition $\nabla J_\lambda(h_\lambda)=0$ yields
\[
\Sig_\lambda h_\lambda = \mu \quad\iff\quad h_\lambda = \Sig_\lambda^{-1}\mu.
\]
An identical argument gives $\widehat h_\lambda = \widehat\Sig_\lambda^{-1} \widehat\mu$.
\end{proof}

\section{Proofs for Section~\ref{sec:stat_prop}}\label{appendix:proofs_stat_prop}
We recall here certain definitions. Throughout, we use the following.
\begin{equation*}
\begin{split}
    \widehat \mu  = \widehat\Ebb[\psi_i], \qquad \mu = \Ebb[\psi_i]  &\qquad  \widetilde \psi_i = \psi_i - \mu\\
    \widehat \Sig = \widehat \Ebb[(\psi_i - \widehat \mu) \otimes (\psi_i - \widehat \mu)], \qquad &\widetilde \Sig = \widehat \Ebb[\widetilde \psi_i \otimes \widetilde \psi_i],  \qquad \widetilde \Ccal_i = \widetilde \psi_i \otimes \widetilde \psi_i - \Sig.
\end{split}
\end{equation*}

\begin{proof}[Proof of Proposition~\ref{proposition:well_defined}]
By definition, $\Ebb[\widetilde\psi_i]=\Ebb[\psi_i-\mu]=0$ and 
$\Ebb[\widetilde \Ccal_i]=\Ebb[\widetilde\psi_i\otimes \widetilde\psi_i]-\Sig=0$.
We record the standard inequality
\begin{equation}\label{inequality}
(a+b)^t \; \le \; 2^{t-1}(a^t+b^t)\qquad(t\ge 1).
\end{equation}
From $\|\widetilde{\psi}_i\|_\Hcal \le \|\psi_i\|_\Hcal + \|\Ebb[\psi_i]\|_\Hcal$ and \eqref{inequality},
\[
\|\widetilde{\psi}_i\|_\Hcal^t 
\le 2^{t-1}\Bigl(\|\psi_i\|_\Hcal^t + \|\Ebb[\psi_i]\|_\Hcal^t\Bigr)
\le 2^{t-1}\Bigl(\|\psi_i\|_\Hcal^t + \Ebb\|\psi_i\|_\Hcal^t\Bigr).
\]
Taking the expectations of both sides gives
\begin{equation}\label{eq:expectation_inequality}
\Ebb\|\widetilde{\psi}_i\|_\Hcal^t \; \le \; 2^t \, \Ebb\|\psi_i\|_\Hcal^t \qquad (t\ge 1).
\end{equation}
With Assumption~\ref{assumption:integrability} and Jensen's inequality,
\[
\Ebb\|\widetilde\psi_i\|_\Hcal^2 \le 4\,\Ebb\|\psi_i\|_\Hcal^2
\le 4\bigl(\Ebb\|\psi_i\|_\Hcal^4\bigr)^{1/2}<\infty.
\]
Next, using the triangle inequality,
\[
\|\widetilde{\Ccal}_i\|_{\HS}
= \bigl\|\widetilde{\psi}_i\otimes \widetilde{\psi}_i - \Ebb[\widetilde{\psi}_i\otimes \widetilde{\psi}_i]\bigr\|_{\HS}
\le \|\widetilde{\psi}_i\|_\Hcal^2 + \bigl\|\Ebb[\widetilde{\psi}_i\otimes \widetilde{\psi}_i]\bigr\|_{\HS}.
\]
Squaring both sides and applying \eqref{inequality} (with \(t=2\)), then Jensen's inequality,
\[
\|\widetilde{\Ccal}_i\|_{\HS}^2 
\le 2\Bigl(\|\widetilde{\psi}_i\|_\Hcal^4 + \bigl\|\Ebb[\widetilde{\psi}_i\otimes \widetilde{\psi}_i]\bigr\|_{\HS}^2\Bigr)
\le 2\Bigl(\|\widetilde{\psi}_i\|_\Hcal^4 + \Ebb\|\widetilde{\psi}_i\otimes \widetilde{\psi}_i\|_{\HS}^2\Bigr)
= 2\Bigl(\|\widetilde{\psi}_i\|_\Hcal^4 + \Ebb\|\widetilde{\psi}_i\|_{\Hcal}^4\Bigr).
\]
Taking expectations and using \eqref{eq:expectation_inequality} with \(t=4\),
\begin{equation}\label{eq:C_1_bound}
\Ebb\|\widetilde{\Ccal}_i\|_{\HS}^2 \le 4\,\Ebb\|\widetilde{\psi}_i\|_\Hcal^4
\le 64\,\Ebb\|\psi_i\|_\Hcal^4 < \infty.
\end{equation}
Therefore,
\[
\Ebb\|(\widetilde{\psi}_i,\widetilde{\Ccal}_i)\|_{\Hbb}^2
= \Ebb\|\widetilde{\psi}_i\|_\Hcal^2 + \Ebb\|\widetilde{\Ccal}_i\|_{\HS}^2 < \infty.
\]
Since $\Hscr\Sscr(\Hcal)$ is a separable Hilbert space and
$\Ebb\|\widetilde{\Ccal}_i\|_{\HS}^2<\infty$ (from \eqref{eq:C_1_bound}),
\begin{equation*}
    \Ebb\|\widetilde \Ccal_i\|_{\HS} \leq \big( \Ebb\|\widetilde \Ccal_i\|_{\HS}^2\big)^{1/2} < \infty.
\end{equation*}
Hence, $\widetilde{\Ccal}_i$ is Bochner integrable as an
$\Hscr\Sscr(\Hcal)$-valued random variable. Moreover, each rank-one operator
$u\otimes u$ is self-adjoint, positive, and trace-class. Hence, $\widetilde\Sig
= \tfrac{1}{N}\sum_{i=1}^N \widetilde{\psi}_i\otimes \widetilde{\psi}_i$
being a finite sum of such rank-one operators, is also positive, self-adjoint, and trace-class. 
\end{proof}

\begin{lemma}[Error decomposition]\label{lemma:difference}
    Consider the expression of $\widehat{h}_\lambda, \, h_\lambda$ as in \eqref{eq:solution_characterizations}. Then,
    \begin{equation*}
    \widehat{h}_\lambda - h_\lambda = \widehat{\Sig}_\lambda^{-1}((\widehat\mu - \mu) - (\widetilde{\Sig} - \Sig)h_\lambda) + r_N, \qquad r_N \isdef \widehat{\Sig}_\lambda^{-1}(\widetilde{\Sig} - \widehat{\Sig})h_\lambda.
\end{equation*}
\end{lemma}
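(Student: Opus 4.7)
The plan is to exploit the closed-form normal equations from Proposition~\ref{proposition:optimal_solutions}, namely $\Sig_\lambda h_\lambda = \mu$ and $\widehat\Sig_\lambda \widehat h_\lambda = \widehat\mu$, and then rearrange algebraically. First I would subtract the two equations to get $\widehat\Sig_\lambda \widehat h_\lambda - \Sig_\lambda h_\lambda = \widehat\mu - \mu$, add and subtract $\widehat\Sig_\lambda h_\lambda$ on the left, and use the identity $\widehat\Sig_\lambda - \Sig_\lambda = \widehat\Sig - \Sig$ (the regularizer cancels) to obtain
\begin{equation*}
\widehat\Sig_\lambda (\widehat h_\lambda - h_\lambda) = (\widehat\mu - \mu) - (\widehat\Sig - \Sig) h_\lambda.
\end{equation*}
Since $\lambda>0$, $\widehat\Sig_\lambda^{-1}\in\Bscr(\Hcal)$ is well-defined, so applying it on the left yields a first, compact form of the decomposition: $\widehat h_\lambda - h_\lambda = \widehat\Sig_\lambda^{-1}[(\widehat\mu-\mu)-(\widehat\Sig-\Sig)h_\lambda]$.

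Second, to match the statement of the lemma I would insert $\widetilde\Sig$ as an intermediate centering, writing
\begin{equation*}
(\widehat\Sig - \Sig)h_\lambda = (\widetilde\Sig - \Sig)h_\lambda + (\widehat\Sig - \widetilde\Sig)h_\lambda,
\end{equation*}
and then bring $\widehat\Sig_\lambda^{-1}(\widetilde\Sig - \widehat\Sig)h_\lambda$ to the right-hand side as $r_N$. Collecting terms produces exactly
\begin{equation*}
\widehat h_\lambda - h_\lambda = \widehat\Sig_\lambda^{-1}\bigl[(\widehat\mu - \mu) - (\widetilde\Sig - \Sig) h_\lambda\bigr] + \widehat\Sig_\lambda^{-1}(\widetilde\Sig - \widehat\Sig)h_\lambda,
\end{equation*}
which is the desired identity.

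The algebra is entirely routine; there is no real obstacle. The only structural point worth stressing in the write-up is the motivation for splitting off $\widetilde\Sig$: unlike $\widehat\Sig$, the operator $\widetilde\Sig-\Sig$ is a genuine empirical average of zero-mean i.i.d.\ Hilbert--Schmidt operators (as already recorded in \eqref{eq:zero_mean_processes}), so that the pair $(\widehat\mu-\mu,\widetilde\Sig-\Sig)$ is amenable to the joint functional CLT on $\Hbb$ used in Proposition~\ref{proposition:asymptotic_gauss}. The residual $r_N$ absorbs the $\widehat\mu$-centering correction, which by \eqref{eq:covariance_difference} equals $\widehat\Sig_\lambda^{-1}(\widehat\mu-\mu)\otimes(\widehat\mu-\mu)\,h_\lambda$ and is therefore quadratic in $\|\widehat\mu-\mu\|_\Hcal$, justifying its treatment as a lower-order remainder later in the paper. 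The only care required in the proof is bookkeeping signs and keeping the distinction between $\widetilde\Sig$ (centered at the population mean $\mu$) and $\widehat\Sig$ (centered at the empirical mean $\widehat\mu$) clear throughout.
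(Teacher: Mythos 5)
Your proof is correct and reaches the same intermediate identity $\widehat h_\lambda - h_\lambda = \widehat\Sig_\lambda^{-1}\bigl[(\widehat\mu-\mu)-(\widehat\Sig-\Sig)h_\lambda\bigr]$ followed by the same insertion of $\widetilde\Sig$; the only cosmetic difference is that you manipulate the normal equations directly (adding and subtracting $\widehat\Sig_\lambda h_\lambda$), whereas the paper subtracts the closed-form expressions $\widehat\Sig_\lambda^{-1}\widehat\mu - \Sig_\lambda^{-1}\mu$ and invokes the resolvent identity $A^{-1}-B^{-1}=A^{-1}(B-A)B^{-1}$. Both are routine and equivalent, so this is essentially the paper's approach.
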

\begin{proof} Starting from \eqref{eq:solution_characterizations}, we have the following calculation: $\widehat h_\lambda - h_\lambda= \widehat\Sig_\lambda^{-1} \widehat\mu - \Sig_\lambda^{-1}\mu = \big(\widehat\Sig_\lambda^{-1} \widehat\mu - \widehat\Sig_\lambda^{-1} \mu\big) + \big(\widehat\Sig_\lambda^{-1} \mu - \Sig_\lambda^{-1} \mu\big) = \widehat\Sig_\lambda^{-1}(\widehat \mu - \mu) + \big(\widehat\Sig_\lambda^{-1} - \Sig_\lambda^{-1}\big) \mu$. Now, for any invertible operators $A, B$, it holds, $A^{-1} - B^{-1} = A^{-1}(B-A)B^{-1}$. Hence, 
\begin{equation*}
   \widehat h_\lambda - h_\lambda=  \widehat\Sig_\lambda^{-1}(\widehat \mu - \mu) + \widehat\Sig_\lambda^{-1}\left(\Sig - \widehat{\Sig}\right)\Sig_\lambda^{-1} \mu = \widehat\Sig_\lambda^{-1}(\widehat \mu - \mu) + \widehat\Sig_\lambda^{-1}\left(\Sig - \widehat{\Sig}\right) h_\lambda. 
\end{equation*}
Defining $r_N \isdef \widehat\Sig_\lambda^{-1}(\widetilde\Sig - \widehat\Sig)h_\lambda$, we obtain, 
\begin{align*}
    \widehat h_\lambda - h_\lambda = \widehat\Sig_\lambda^{-1}\left((\widehat\mu - \mu) - (\widehat\Sig - \Sig)h_\lambda\right)= \widehat\Sig_\lambda^{-1}\left((\widehat\mu - \mu) - (\widetilde\Sig - \Sig)h_\lambda\right) + r_N,
\end{align*}
which proves the lemma.
\end{proof}

\begin{lemma}[Properties of $F$]\label{lemma:F}
    Let $F\colon\Hbb\to\Hcal$ be defined as $F(h, \Ccal) \isdef h - \Ccal h_\lambda$. Then, $F$ is a linear and bounded map that satisfies 
\begin{equation*}
    \|F\|_{\operatorname{op}} \; \leq \; \sqrt{1 + \|h_\lambda\|_{\Hcal}^2} < \infty.
\end{equation*} 
\end{lemma}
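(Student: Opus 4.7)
The plan is to verify linearity by inspection and then establish the operator norm bound via a two-step estimate: a triangle inequality in $\Hcal$ followed by a Cauchy--Schwarz step in $\Rbb^2$ that yields the factor $\sqrt{1+\|h_\lambda\|_\Hcal^2}$.

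First I would observe that $F$ is the sum of two linear maps: the projection $(h,\Ccal)\mapsto h$ of $\Hbb$ onto its first component, and the map $(h,\Ccal)\mapsto -\Ccal h_\lambda$. The latter is linear in $\Ccal$ since operator evaluation at the fixed vector $h_\lambda\in\Hcal$ is linear, and it is constant (hence linear) in $h$. So $F$ is linear on $\Hbb$.

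Next, for any $(h,\Ccal)\in\Hbb$, I would use the triangle inequality and the continuous inclusion $\Hscr\Sscr(\Hcal)\subset\Bscr(\Hcal)$ with $\|\Ccal\|_{\op}\leq\|\Ccal\|_{\HS}$ from Section~\ref{subsec:RKHS} to write
\begin{equation*}
    \|F(h,\Ccal)\|_\Hcal \leq \|h\|_\Hcal + \|\Ccal h_\lambda\|_\Hcal \leq \|h\|_\Hcal + \|\Ccal\|_{\HS}\,\|h_\lambda\|_\Hcal.
\end{equation*}
An application of the Cauchy--Schwarz inequality to the two-dimensional vectors $(\|h\|_\Hcal,\|\Ccal\|_{\HS})$ and $(1,\|h_\lambda\|_\Hcal)$ then gives
\begin{equation*}
    \|h\|_\Hcal + \|\Ccal\|_{\HS}\,\|h_\lambda\|_\Hcal \leq \sqrt{\|h\|_\Hcal^2 + \|\Ccal\|_{\HS}^2}\,\sqrt{1 + \|h_\lambda\|_\Hcal^2} = \|(h,\Ccal)\|_{\Hbb}\,\sqrt{1 + \|h_\lambda\|_\Hcal^2},
\end{equation*}
using the definition of $\|\cdot\|_\Hbb$ from \eqref{eq:Hbb_inner_product}. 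Dividing by $\|(h,\Ccal)\|_\Hbb$ and taking the supremum yields $\|F\|_{\op}\leq\sqrt{1+\|h_\lambda\|_\Hcal^2}$.

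Finally, I would note finiteness: from Proposition~\ref{proposition:optimal_solutions}, $h_\lambda = \Sig_\lambda^{-1}\mu$, and since $\|\Sig_\lambda^{-1}\|_{\op}\leq 1/\lambda$ and $\|\mu\|_\Hcal\leq\Ebb\|\psi_1\|_\Hcal<\infty$ under the moment assumption, one has $\|h_\lambda\|_\Hcal<\infty$, so the bound is finite. There is no real obstacle here; the only subtle point worth flagging explicitly is the passage from the operator norm to the Hilbert--Schmidt norm of $\Ccal$, which is needed because $\Hbb$ is equipped with the HS-norm on its second factor rather than the operator norm.
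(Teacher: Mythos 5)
Your proof is correct and follows essentially the same route as the paper's: triangle inequality on $\|h - \Ccal h_\lambda\|_\Hcal$, the bound $\|\Ccal\|_{\op}\le\|\Ccal\|_{\HS}$, and a Cauchy--Schwarz step producing the factor $\sqrt{1+\|h_\lambda\|_\Hcal^2}$. The only difference is that you make the operator-norm-to-HS-norm passage explicit (the paper uses it tacitly) and you add a brief remark on why $\|h_\lambda\|_\Hcal<\infty$, which the paper omits but which is a welcome clarification.
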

\begin{proof}
   $F$ is linear by construction. We now show that it is bounded in the operator norm..
   \begin{equation*}
        \|F\|_{\op}= \underset{\|(h, \Ccal)\|_\Hbb = 1}{\operatorname{sup}} \|h -  \Ccal h_\lambda\|_\Hcal \leq \underset{\|(h, \Ccal)\|_\Hbb = 1}{\operatorname{sup}} \|h\|_\Hcal +  \|\Ccal h_\lambda\|_\Hcal \leq \underset{\|(h,\Ccal)\|_\Hbb = 1}{\operatorname{sup}} \|h\|_\Hcal +  \|\Ccal\|_{\op} \|h_\lambda\|_\Hcal.
   \end{equation*}
   Using the inequality $a + bc \leq \sqrt{a^2 + b^2} \, \sqrt{1+ c^2}$ (this is due to Cauchy-Schwarz) applied to the expression above, we obtain 
   \begin{align*}
     \|F\|_{\op} &\leq  \underset{\|(h,\Ccal)\|_\Hbb = 1}{\operatorname{sup}} \left\{\sqrt{\|h\|_\Hcal^2 + \|\Ccal\|_{\HS}^2} \cdot \sqrt{1 + \|h_\lambda\|_\Hcal^2}\right\} \\
     &= \underset{\|(h, \Ccal)\|_\Hbb = 1}{\operatorname{sup}}\left\{\|(h, \Ccal)\|_\Hbb \cdot \sqrt{1 + \|h_\lambda\|_\Hcal^2}\right\} = \sqrt{1 + \|h_\lambda\|_\Hcal^2} < \infty.
\end{align*}
\end{proof}

\begin{lemma}[Consistency results]\label{lemma:consistency_results}
    Let $\mu, \, \Sig, \, \widehat\mu, \, \widehat\Sig$ be defined as in \eqref{eq:moments}. Under Assumptions~\ref{assumption:integrability} and \ref{assumption:iid}, it holds, 
    \begin{equation*}
        (i) \; \|\widehat\mu - \mu\|_{\Hcal} \overset{a.s.}{\longrightarrow} 0, \qquad (ii) \; \|\widehat\Sig - \Sig\|_{\HS} \overset{a.s.}{\longrightarrow} 0.
    \end{equation*}
\end{lemma}
\begin{proof}
    (i) From \eqref{eq:zero_mean_processes}, $(\widehat\mu - \mu)$ can be written as the empirical average of the zero-mean i.i.d.\ vectors 
    $\widetilde{\psi}_i$ in the separable Hilbert space $\Hcal$ satisfying $\Ebb\|\widetilde \psi_i\|_\Hcal< \infty$. The first claim now follows from the \citet[Theorem 2.7]{Bosq2000}.

    (ii) For the second claim, we begin by noting from \eqref{eq:zero_mean_processes} that $\widetilde\Sig-\Sig$ is the empirical average of zero-mean i.i.d.\ vectors $\widetilde{\Ccal}_i$ in the separable Hilbert space $\Hscr\Sscr(\Hcal)$ that satisfies $\Ebb\|\widetilde \Ccal_i\|_{\HS} < \infty$; this follows from \eqref{eq:C_1_bound}. Using \citet[Theorem 2.7]{Bosq2000}, $\|\widetilde\Sig - \Sig\|_{\HS} \overset{a.s.}{\longrightarrow} 0$. Now, we can write 
    \[
    (\psi_i - \widehat\mu) = (\psi_i - \mu) - (\widehat\mu - \mu) = \widetilde\psi_i - (\widehat\mu - \mu).
    \] 
    Therefore, 
    \begin{equation}\label{eq:decomposition}
        (\psi_i - \widehat\mu) \otimes (\psi_i - \widehat\mu) - \widetilde\psi_i\otimes \widetilde\psi_i = - \widetilde\psi_i \otimes (\widehat\mu - \mu) - (\widehat\mu - \mu) \otimes \widetilde\psi_i +  (\widehat\mu - \mu) \otimes  (\widehat\mu - \mu).
    \end{equation}
Taking the empirical expectation of both sides, and using \eqref{eq:zero_mean_processes},
\begin{equation}\label{eq:covariance_difference}
    \begin{split}
        \widehat\Sig - \widetilde\Sig &= \widehat\Ebb[(\psi_i - \widehat\mu) \otimes (\psi_i - \widehat\mu)] - \widehat\Ebb[\widetilde\psi_i \otimes \widetilde\psi_i]  \\
    &= - \widehat\Ebb[\widetilde\psi_i] \otimes (\widehat\mu - \mu) - (\widehat\mu - \mu) \otimes \widehat\Ebb[\widetilde\psi_i] + (\widehat\mu - \mu) \otimes  (\widehat\mu - \mu) \\
    &= - (\widehat\mu - \mu) \otimes  (\widehat\mu - \mu) - (\widehat\mu - \mu) \otimes  (\widehat\mu - \mu) + (\widehat\mu - \mu) \otimes  (\widehat\mu - \mu) \\
    &= - (\widehat\mu - \mu) \otimes  (\widehat\mu - \mu),
    \end{split}
\end{equation}
Hence, by CMT,
\[
\|\widehat\Sig - \widetilde\Sig\|_{\HS} = \|-(\widehat\mu - \mu) \otimes  (\widehat\mu - \mu)\|_{\HS} = \|\widehat\mu - \mu\|_{\Hcal}^2 \overset{a.s.}{\longrightarrow} 0.
\]
So, $ \|\widehat\Sig - \Sig\|_{\HS} \leq \|\widehat\Sig - \widetilde\Sig\|_{\HS} + \|\widetilde\Sig - \Sig\|_{\HS} \overset{a.s.}{\longrightarrow} 0$, which concludes the proof. 
\end{proof}

\begin{lemma}[Remainder term]\label{lemma:remainder_term}
  Let $r_N$ be defined as in \eqref{eq:difference}. Under Assumptions~\ref{assumption:integrability} and \ref{assumption:iid}, it holds, $\sqrt{N} \, r_N  \overset{\Pbb}{\longrightarrow} 0$.
\end{lemma}
\begin{proof}
From the expression in \eqref{eq:covariance_difference} and using Lemma~\ref{lemma:resolvent}, we have 
\begin{equation*}
   \|r_N\|_\Hcal = \|\widehat\Sig_\lambda^{-1}(\widetilde{\Sig} - \widehat{\Sig}) h_\lambda\|_\Hcal  \leq \|\widehat\Sig_\lambda^{-1}\|_{\op} \, \|(\widehat{\mu} - \mu) \otimes (\widehat{\mu} - \mu) \, h_\lambda\|_\Hcal \leq \tfrac{1}{\lambda} \|(\widehat{\mu} - \mu) \otimes (\widehat{\mu} - \mu) \, h_\lambda\|_\Hcal.
\end{equation*}
Again,
\begin{align*}
    \|(\widehat{\mu} - \mu) \otimes (\widehat{\mu} - \mu) \, h_\lambda\|_\Hcal &\leq \|(\widehat{\mu} - \mu) \otimes (\widehat{\mu} - \mu)\|_{\op} \, \|h_\lambda\|_\Hcal \\
    &\leq \|(\widehat{\mu} - \mu) \otimes (\widehat{\mu} - \mu)\|_{\HS} \, \|h_\lambda\|_\Hcal = \|\widehat\mu - \mu\|_\Hcal^2 \, \|h_\lambda\|_\Hcal.
\end{align*}
Therefore, 
\begin{equation}\label{eq:remainder_inequality}
    \|r_N\|_\Hcal \; \leq \; \frac{\|h_\lambda\|_\Hcal}{\lambda} \|\widehat\mu - \mu\|_\Hcal^2.
\end{equation}
As $0 < \|h_\lambda\|_\Hcal/\lambda < \infty$ and $\|\widehat \mu - \mu\|_\Hcal^2 = o_\Pbb(N^{-1/2})$ from Lemma~\ref{lemma:convergence_rate}, the conclusion follows. 
\end{proof}

\begin{lemma}[Properties of $\widehat\mu$]\label{lemma:convergence_rate}
Let $\widehat\mu, \, \mu$ be defined as in \eqref{eq:moments}. Under Assumptions~\ref{assumption:integrability} and \ref{assumption:iid}, it holds, 
\begin{equation*}
        \|\widehat\mu - \mu\|_\Hcal^2 = o_\Pbb(N^{-1/2}).
\end{equation*}
Moreover, for any $\delta \in (0,1)$, 
\begin{equation*}
    \Pbb\left(\|\widehat\mu-\mu\|_\Hcal^2 > \frac{2 \Ebb\|\widetilde \psi_1\|_\Hcal^2}{N\delta}\right) \leq \frac{\delta}{2}.
\end{equation*}
\end{lemma}

\begin{proof}
    Since $\widetilde \psi_i$ are zero-mean i.i.d.\ $\Hcal$-valued random vectors with finite second moments, 
    \begin{equation*}
        \Ebb\|\widehat\mu - \mu\|_\Hcal^2 = \frac{1}{N^2}\Ebb\big\|\sum_{i=1}^N \widetilde \psi_i\big\|_\Hcal^2 = \frac{1}{N^2} \sum_{i=1}^N \Ebb\|\widetilde\psi_i\|_{\Hcal}^2 = \frac{1}{N}\Ebb\|\widetilde\psi_i\|_\Hcal^2.
    \end{equation*}
    By \emph{Markov's inequality} applied to $\sqrt{N} \|\widehat{\mu}-\mu\|_\Hcal^2$, for any $\varepsilon > 0$, 
\begin{equation*}
    \Pbb\left(\sqrt{N} \, \|\widehat\mu - \mu\|_\Hcal^2 > \varepsilon\right) \leq \frac{\sqrt{N} \, \Ebb\|\widehat\mu - \mu\|_\Hcal^2}{\varepsilon} = \frac{1}{\sqrt{N}} \cdot \frac{\Ebb\|\widetilde\psi_i\|_\Hcal^2}{\varepsilon}.
\end{equation*}
Therefore, $\underset{N \to \infty}{\lim} \Pbb(\sqrt{N} \, \|\widehat\mu - \mu\|_\Hcal^2 > \varepsilon) = 0$, which implies $\|\widehat\mu - \mu\|_\Hcal^2 = o_\Pbb(N^{-1/2})$. 

For the second part of the claim, for any $\delta \in (0,1)$, we apply \textit{Markov's inequality} directly to $\|\widehat\mu - \mu\|_\Hcal^2 \geq 0$ to obtain
\begin{equation*}
    \Pbb\left(\|\widehat\mu - \mu\|_\Hcal^2 > \frac{2\Ebb\|\widetilde\psi_i\|_\Hcal^2}{N\delta}\right) \leq \Ebb\|\widehat\mu-\mu\|_\Hcal^2 \cdot \frac{N\delta}{2 \Ebb\|\widetilde\psi_i\|_\Hcal^2} = \frac{\delta}{2}.
\end{equation*}
\end{proof}

\begin{lemma}[Finite-sample bound for $\Bar{S}_N$]\label{lemma:SN_bound}
    Let $\Bar{S}_N = \frac{1}{N} \sum_{i=1}^N (\widetilde\psi_i, \widetilde \Ccal_i)$ where $\widetilde\psi_i, \, \widetilde \Ccal_i$ are defined in \eqref{eq:centered_random_vectors}. Under Assumptions~\ref{assumption:integrability} and \ref{assumption:iid}, it holds, for any $\delta \in (0,1)$,
    \begin{equation}
        \Pbb\left(\|\Bar{S}_N\|_{\Hbb} > \sqrt{\frac{2\Ebb\|(\widetilde\psi_i, \widetilde \Ccal_i)\|_{\Hbb}^2}{N\delta}}\right) \leq \frac{\delta}{2}.
    \end{equation}
\end{lemma}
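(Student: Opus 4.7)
The plan is to mirror the argument used in the second half of Lemma~\ref{lemma:convergence_rate}, now transplanted to the product Hilbert space $\Hbb = \Hcal \oplus \Hscr\Sscr(\Hcal)$. The idea is that $\bar{S}_N$ is the empirical mean of i.i.d.\ zero-mean $\Hbb$-valued random elements with finite second moment, so a second-moment computation plus Markov's inequality immediately delivers the deviation bound.

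First, I would verify the three structural ingredients needed: (i) the summands $(\widetilde\psi_i, \widetilde\Ccal_i)$ are i.i.d.\ in $\Hbb$, which follows because the $\psi_i$ are i.i.d.\ in $\Hcal$ and each $(\widetilde\psi_i, \widetilde\Ccal_i)$ is a fixed measurable transformation of $\psi_i$; (ii) they are centered in $\Hbb$, since Proposition~\ref{proposition:well_defined} gives $\Ebb[\widetilde\psi_i] = 0$ and $\Ebb[\widetilde\Ccal_i] = 0$, so the $\Hbb$-mean vanishes componentwise; and (iii) they are square-integrable, which is exactly the second bound in Proposition~\ref{proposition:well_defined}, $\Ebb\|(\widetilde\psi_1, \widetilde\Ccal_1)\|_\Hbb^2 < \infty$.

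Second, I would invoke weak orthogonality in the separable Hilbert space $\Hbb$ (see \citet[Definition~1.2]{Bosq2000}) to get
\begin{equation*}
\Ebb\|\bar{S}_N\|_\Hbb^2 \;=\; \frac{1}{N^2}\sum_{i=1}^N \Ebb\|(\widetilde\psi_i, \widetilde\Ccal_i)\|_\Hbb^2 \;=\; \frac{1}{N}\, \Ebb\|(\widetilde\psi_1, \widetilde\Ccal_1)\|_\Hbb^2,
\end{equation*}
in complete analogy with the computation for $\|\widehat\mu - \mu\|_\Hcal^2$ in Lemma~\ref{lemma:convergence_rate}.

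Third, I would finish by applying Markov's inequality to the nonnegative random variable $\|\bar{S}_N\|_\Hbb^2$ with threshold $t = 2\Ebb\|(\widetilde\psi_1, \widetilde\Ccal_1)\|_\Hbb^2/(N\delta)$, which yields
\begin{equation*}
\Pbb\!\left(\|\bar{S}_N\|_\Hbb^2 > t\right) \;\leq\; \frac{\Ebb\|\bar{S}_N\|_\Hbb^2}{t} \;=\; \frac{\delta}{2},
\end{equation*}
and then take square roots inside the event. There is no real obstacle: the only point to be careful about is confirming that the inner product on $\Hbb$ in \eqref{eq:Hbb_inner_product} is the direct-sum one, so that $\|\bar{S}_N\|_\Hbb^2$ decomposes as the sum of the squared $\Hcal$-norm and the squared Hilbert--Schmidt norm of the two coordinate averages, which legitimizes both the weak-orthogonality computation and the reuse of the finite-moment bound from Proposition~\ref{proposition:well_defined}.
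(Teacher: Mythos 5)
Your proposal is correct and follows essentially the same route as the paper's proof: establish that the summands are centered, i.i.d., and square-integrable in $\Hbb$, compute $\Ebb\|\Bar{S}_N\|_{\Hbb}^2 = N^{-1}\Ebb\|(\widetilde\psi_1,\widetilde\Ccal_1)\|_{\Hbb}^2$ via weak orthogonality, and conclude by Markov's inequality applied to $\|\Bar{S}_N\|_{\Hbb}^2$. No gaps; the only cosmetic difference is that the paper phrases the last step as ``choose $\varepsilon$'' rather than ``set the threshold $t$,'' which is the same thing.
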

\begin{proof}
    Since $(\widetilde\psi_i, \widetilde\Ccal_i)$ are zero-mean i.i.d.\ random vectors in the separable Hilbert space $\Hbb$, 
    \begin{equation*}
        \Ebb\|\Bar{S}_N\|_{\Hbb}^2 = \frac{1}{N^2} \Ebb\|\sum_{i=1}^N(\widetilde\psi_i, \widetilde \Ccal_i)\|_{\Hbb}^2 = \frac{1}{N^2} \sum_{i=1}^N \Ebb\|(\widetilde\psi_i, \widetilde \Ccal_i)\|_{\Hbb}^2 = \frac{1}{N}\Ebb\|(\widetilde\psi_i, \widetilde \Ccal_i)\|_{\Hbb}^2.
    \end{equation*}
Now, applying \emph{Markov's inequality}, for any $\varepsilon > 0$, 
\begin{equation*}
    \Pbb\left(\|\Bar{S}_N\|_{\Hbb} > \varepsilon\right) = \Pbb\left(\|\Bar{S}_N\|_{\Hbb}^2 > \varepsilon^2\right) \leq \frac{\Ebb\|\Bar{S}_N\|_{\Hbb}^2}{\varepsilon^2} = \frac{\Ebb\|(\widetilde\psi_i, \widetilde \Ccal_i)\|_{\Hbb}^2}{N\varepsilon^2}.
\end{equation*}
For any $\delta \in (0,1)$, choosing $\varepsilon = \sqrt{\frac{2\Ebb\|(\widetilde\psi_i, \widetilde \Ccal_i)\|_{\Hbb}^2}{N\delta}}$ gives the required inequality.
\end{proof}

\section{Lemmas for Section~\ref{sec:inference}}\label{appendix:proofs_inference}

\begin{lemma}[Asymptotic convergence of mean-squared error]\label{lemma:mean_squared_error_F}
    Let $F_i, \widehat F_i$ be defined as in \eqref{eq:F_i} and \eqref{eq:sample_Q} respectively. Define $\Delta_i \isdef \widehat F_i - F_i$. Under Assumptions~\ref{assumption:integrability} and \ref{assumption:iid}, it holds,
    \begin{equation}
        \frac{1}{N} \sum_{i=1}^N \|\Delta_i\|_\Hcal^2 \overset{a.s.}{\longrightarrow} 0.
    \end{equation}
\end{lemma}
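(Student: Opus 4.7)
My plan is to expand $\Delta_i = \widehat F_i - F_i$ into a handful of explicit terms, each of which factors as a vanishing sample-dependent scalar times a controlled moment of $\widetilde\psi_i$, so that the Cesàro average of $\|\Delta_i\|_\Hcal^2$ converges to zero almost surely. The first step is to simplify \eqref{eq:F_i} and \eqref{eq:sample_Q} using the normal equations of Proposition~\ref{proposition:optimal_solutions}: since $\Sig h_\lambda + \lambda h_\lambda = \mu$ and $\widehat\Sig \widehat h_\lambda + \lambda \widehat h_\lambda = \widehat\mu$, the covariance terms $\Sig h_\lambda$ and $\widehat\Sig \widehat h_\lambda$ collapse, giving the cleaner forms
\begin{equation*}
  F_i = \psi_i - (\widetilde\psi_i \otimes \widetilde\psi_i) h_\lambda - \lambda h_\lambda, \qquad \widehat F_i = \psi_i - \bigl((\psi_i - \widehat\mu) \otimes (\psi_i - \widehat\mu)\bigr) \widehat h_\lambda - \lambda \widehat h_\lambda.
\end{equation*}

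Writing $e \isdef \widehat\mu - \mu$ and expanding $(\psi_i - \widehat\mu) \otimes (\psi_i - \widehat\mu) = (\widetilde\psi_i - e) \otimes (\widetilde\psi_i - e)$ bilinearly, the rank-one action identity $(u \otimes v) w = \langle w, v \rangle_\Hcal u$ yields the five-term decomposition
\begin{equation*}
  \Delta_i = -\lambda(\widehat h_\lambda - h_\lambda) - \langle \widehat h_\lambda - h_\lambda, \widetilde\psi_i \rangle_\Hcal \widetilde\psi_i + \langle \widehat h_\lambda, e \rangle_\Hcal \widetilde\psi_i + \langle \widehat h_\lambda, \widetilde\psi_i \rangle_\Hcal e - \langle \widehat h_\lambda, e \rangle_\Hcal e.
\end{equation*}
Cauchy--Schwarz, the elementary inequality $\bigl(\sum_{k=1}^5 a_k\bigr)^2 \leq 5 \sum_{k=1}^5 a_k^2$, and averaging over $i$ then give
\begin{equation*}
  \frac{1}{N}\sum_{i=1}^N \|\Delta_i\|_\Hcal^2 \leq 5 \lambda^2 \|\widehat h_\lambda - h_\lambda\|_\Hcal^2 + 5 \|\widehat h_\lambda - h_\lambda\|_\Hcal^2 \, M_N^{(4)} + 10 \|\widehat h_\lambda\|_\Hcal^2 \|e\|_\Hcal^2 \, M_N^{(2)} + 5 \|\widehat h_\lambda\|_\Hcal^2 \|e\|_\Hcal^4,
\end{equation*}
where $M_N^{(p)} \isdef \frac{1}{N}\sum_{i=1}^N \|\widetilde\psi_i\|_\Hcal^p$.

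To conclude, I would invoke: \emph{(a)} Proposition~\ref{proposition:asymptotic_properties}(i) for $\|\widehat h_\lambda - h_\lambda\|_\Hcal \overset{a.s.}{\longrightarrow} 0$, which also yields $\|\widehat h_\lambda\|_\Hcal = \mathcal{O}(1)$ a.s.; \emph{(b)} Lemma~\ref{lemma:consistency_results} for $\|e\|_\Hcal \overset{a.s.}{\longrightarrow} 0$; and \emph{(c)} the scalar SLLN applied to the i.i.d.\ random variables $\|\widetilde\psi_i\|_\Hcal^p$, giving $M_N^{(p)} \overset{a.s.}{\longrightarrow} \Ebb\|\widetilde\psi_1\|_\Hcal^p < \infty$ for $p \in \{2,4\}$, where finiteness follows from the fourth-moment hypothesis of Proposition~\ref{proposition:well_defined} together with \eqref{eq:expectation_inequality}. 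Each of the four terms then vanishes almost surely. The main obstacle, and the reason the argument is non-trivial, is the quartic contribution $\|\widehat h_\lambda - h_\lambda\|_\Hcal^2 \, M_N^{(4)}$ produced by $(\widetilde\psi_i \otimes \widetilde\psi_i)(\widehat h_\lambda - h_\lambda)$: keeping $M_N^{(4)}$ bounded almost surely (not merely in probability) is precisely what forces the $\Ebb\|\psi_1\|_\Hcal^4 < \infty$ assumption maintained throughout the paper.
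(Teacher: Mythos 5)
Your proof is correct, and the algebra checks out: using the normal equations $\Sig_\lambda h_\lambda = \mu$ and $\widehat\Sig_\lambda \widehat h_\lambda = \widehat\mu$ to cancel $\Sig h_\lambda$ and $\widehat\Sig \widehat h_\lambda$ inside $F_i$ and $\widehat F_i$, then expanding $(\widetilde\psi_i - e) \otimes (\widetilde\psi_i - e)$ bilinearly and applying the rank-one action identity, does produce exactly your five-term decomposition; the $(\sum_{k=1}^5 a_k)^2 \le 5\sum a_k^2$ bound and scalar Cauchy--Schwarz then yield the stated inequality in terms of $M_N^{(2)}, M_N^{(4)}$, and each of the four resulting terms vanishes a.s.\ by the consistency results and the scalar SLLN.

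This is a genuinely different route from the paper's. The paper never invokes the normal equations; instead it works with the raw definitions in \eqref{eq:sample_Q} and \eqref{eq:F_i} and decomposes $\Delta_i$ into four operator-level pieces, $(\mu - \widehat\mu)$, a difference of rank-one operators acting on $\widehat h_\lambda$, the operator $\widetilde\Ccal_i$ acting on $(\widehat h_\lambda - h_\lambda)$, and $(\widehat\Sig - \Sig)\widehat h_\lambda$, controlling each via $\|\cdot\|_{\op} \le \|\cdot\|_{\HS}$ together with Lemma~\ref{lemma:consistency_results} and the decomposition \eqref{eq:decomposition}. What your route buys is that the $\Sig$'s disappear up front, so $\Delta_i$ is literally a sum of scalar coefficients times fixed vectors, and every step reduces to scalar Cauchy--Schwarz plus the scalar SLLN for $\|\widetilde\psi_i\|_\Hcal^p$ with $p \in \{2,4\}$; the fourth-moment role is also pinned down transparently as the source of the $M_N^{(4)}$ term. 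What the paper's route buys is that it bypasses Proposition~\ref{proposition:optimal_solutions} entirely and stays within the operator-consistency toolkit already built for Lemma~\ref{lemma:consistency_results}, which keeps the proof self-contained within the appendix machinery. One small caution on your write-up: you should explicitly invoke the deterministic ordering, i.e., hold an event of full measure on which simultaneously $\|\widehat h_\lambda - h_\lambda\|_\Hcal \to 0$, $\|e\|_\Hcal \to 0$, and $M_N^{(p)} \to \Ebb\|\widetilde\psi_1\|_\Hcal^p$, since you need all three convergences on a common set to conclude the product terms vanish; this is routine (finite intersection of probability-one events) but worth one sentence.
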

\begin{proof}
From the definition of $\Delta_i = \widehat F_i - F_i$, we obtain
\begin{align*}
\Delta_i 
&= (\psi_i - \widehat{\mu}) - \left((\psi_i - \widehat{\mu}) \otimes (\psi_i - \widehat{\mu}) - \widehat\Sig\right)\widehat{h}_\lambda - \left[(\psi_i - \mu) - \Bigl((\psi_i - \mu) \otimes (\psi_i - \mu) - \Sig\Bigr)h_\lambda\right] \\
&= (\mu - \widehat\mu) - \Bigl((\psi_i - \widehat\mu)\otimes (\psi_i - \widehat\mu) - (\psi_i - \mu) \otimes (\psi_i - \mu)\Bigr)\widehat{h}_\lambda \\
&\qquad - \Bigl((\psi_i - \mu)\otimes (\psi_i - \mu) - \Sig\Bigr) (\widehat{h}_\lambda - h_\lambda) + (\widehat\Sig - \Sig)\widehat{h}_\lambda.
\end{align*}
Therefore, we obtain the following:
\begin{align*}
\frac{1}{N}\sum_{i=1}^N \|\Delta_i\|_\Hcal^2
&\le 4\Bigg(\underbrace{\|\widehat\mu-\mu\|_\Hcal^2}_{(I)}
+ \underbrace{\frac{1}{N}\sum_{i=1}^N 
\Bigl\|
\Bigl(\bigl((\psi_i-\widehat\mu)\otimes(\psi_i-\widehat\mu)\bigr)
      -\bigl((\psi_i-\mu)\otimes(\psi_i-\mu)\bigr)\Bigr)\widehat h_\lambda
\Bigr\|_\Hcal^2}_{(II)} \\
&\quad + \underbrace{\frac{1}{N}\sum_{i=1}^N \|\big((\psi_i - \mu)\otimes (\psi_i - \mu) - \Sig\big) (\widehat h_\lambda - h_\lambda)\|_\Hcal^2}_{(III)}
+ \underbrace{\|(\widehat\Sig-\Sig)\widehat h_\lambda\|_\Hcal^2}_{(IV)}\Bigg).
\end{align*}
We now show the convergence for each of these quantities. 

\emph{Term (I).} From $(1)$ of Lemma~\ref{lemma:consistency_results}, we have 
\begin{equation}
    (I) = \|\widehat\mu - \mu\|_\Hcal^2 \overset{a.s.}{\longrightarrow} 0.
\end{equation}

\emph{Term (II).} Using the properties of the operator/Hilbert-Schmidt norm, we have
\begin{align*}
    (II) \; \leq \; \frac{\|\widehat{h}_\lambda\|_\Hcal^2}{N}\sum_{i=1}^N \|(\psi_i - \widehat\mu)\otimes (\psi_i - \widehat\mu) - (\psi_i - \mu) \otimes (\psi_i - \mu)\|_{\HS}^2.
\end{align*}
Using \eqref{eq:decomposition}, the triangle inequality and \eqref{inequality} (with $t=2$),
\[
\|(\psi_i-\widehat\mu)\otimes(\psi_i-\widehat\mu)-(\psi_i-\mu)\otimes(\psi_i-\mu)\|_{\HS}^2
\le 8\|\widehat\mu-\mu\|_\Hcal^2\|\widetilde\psi_i\|_\Hcal^2 + 2\|\widehat\mu-\mu\|_\Hcal^4.
\]
Hence
\[
(II)\le \|\widehat h_\lambda\|_\Hcal^2\left(
\frac{8\|\widehat\mu-\mu\|_\Hcal^2}{N}\sum_{i=1}^N\|\widetilde\psi_i\|_\Hcal^2
+2\|\widehat\mu-\mu\|_\Hcal^4\right).
\]
Now, by Proposition~\ref{proposition:well_defined}, $\Ebb\|\widetilde{\psi}_i\|^2_\Hcal < \infty$; hence SLLN applies: 
\[
\frac{1}{N}\sum_{i=1}^N\|\widetilde \psi_i\|_{\Hcal}^2 \overset{a.s.}{\longrightarrow} \Ebb\|\widetilde{\psi}_i\|^2_\Hcal < \infty.
\]
Moreover, $\|\widehat\mu - \mu\|_\Hcal^2, \, \|\widehat\mu - \mu\|_\Hcal^4 \overset{a.s.}{\longrightarrow} 0$ by $(i)$ of Lemma~\ref{lemma:consistency_results}. Therefore,
\begin{equation}
    (II) \; \leq \; \|\widehat h_\lambda\|_\Hcal^2\left(
\frac{8\|\widehat\mu-\mu\|_\Hcal^2}{N}\sum_{i=1}^N\|\widetilde\psi_i\|_\Hcal^2
+2\|\widehat\mu-\mu\|_\Hcal^4\right) \overset{a.s.}{\longrightarrow} 0.
\end{equation}

\emph{Term (III).} Using \eqref{eq:centered_random_vectors}, we get
\begin{align*}
   (III) \; \leq \; \frac{\|\widehat{h}_\lambda - h_\lambda\|_\Hcal^2}{N} \sum_{i=1}^N \|\widetilde \psi_i \otimes \widetilde \psi_i - \Sig\|_{\HS}^2 \; = \; \frac{\|\widehat{h}_\lambda - h_\lambda\|_\Hcal^2}{N} \sum_{i=1}^N \|\widetilde \Ccal_i\|_{\HS}^2.
\end{align*}
From \eqref{eq:C_1_bound}, $\Ebb\|\widetilde \Ccal_i\|_{\HS}^2 < \infty$; hence applying SLLN yields:
\begin{equation*}
   \frac{1}{N} \sum_{i=1}^N\|\widetilde{\Ccal}_i\|_{\HS}^2 \overset{a.s.}{\longrightarrow} \Ebb\|\widetilde{\Ccal}_i\|_{\HS}^2 < \infty.
\end{equation*}
From $(1)$ of Theorem~\ref{thm:asymptotic_properties}, $\|\widehat{h}_\lambda - h_\lambda\|_\Hcal^2 \overset{a.s.}{\longrightarrow} 0$. Hence, 
\begin{equation}
    (III) \; \leq \; \frac{\|\widehat{h}_\lambda - h_\lambda\|_\Hcal^2}{N} \sum_{i=1}^N \|\widetilde \Ccal_i\|_{\HS}^2 \overset{a.s.}{\longrightarrow} 0.
\end{equation}
\emph{Term (IV).} From $(ii)$ of Lemma~\ref{lemma:consistency_results}, $\|\widehat\Sig - \Sig\|_{\HS}^2 \overset{a.s.}{\longrightarrow} 0$. So, the fourth term satisfies 
\begin{equation}
   (IV) = \|(\widehat\Sig - \Sig)\widehat{h}_\lambda\|_{\Hcal}^2 \leq \|\widehat\Sig - \Sig\|_{\op}^2 \, \|\widehat{h}_\lambda\|_\Hcal^2 \leq \|\widehat\Sig - \Sig\|_{\HS}^2 \, \|\widehat{h}_\lambda\|_\Hcal^2 \overset{a.s.}{\longrightarrow} 0.
\end{equation}
Combining the results of convergence for the individual terms, the conclusion follows. 
\end{proof}

\begin{lemma}[(Asymptotic) boundedness]\label{lemma:asymptotic_boundedness}
    Under Assumptions~\ref{assumption:integrability} and \ref{assumption:iid}, it holds,
    \begin{equation}\label{eq:boundedness_u}
        \|u_j\|_\Hcal < \infty, \qquad  \|\widehat u_j\|_\Hcal < \infty \quad \text{for all} \quad 1 \leq j \leq n.
    \end{equation}
    Moreover, 
    \begin{equation}\label{eq:boundedness_Q}
        \|\Qcal_\lambda\|_{\HS} < \infty, \qquad \|\widehat\Qcal_\lambda\|_{\HS} = \Ocal(1) \; \text{almost surely as} \; N \to \infty.
    \end{equation}
\end{lemma}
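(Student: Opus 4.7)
The plan is to treat the four claims separately. For the bounds on $u_j$ and $\widehat u_j$, I would observe that $\Sig$ and $\widehat\Sig$ are positive self-adjoint (from Proposition~\ref{proposition:well_defined}), so $\Sig_\lambda = \Sig+\lambda I$ and $\widehat\Sig_\lambda = \widehat\Sig+\lambda I$ have spectra bounded below by $\lambda>0$. This yields the operator norm bounds $\|\Sig_\lambda^{-1}\|_{\op},\,\|\widehat\Sig_\lambda^{-1}\|_{\op}\le 1/\lambda$. Since the kernel is in $\Cscr^{2s}$, Theorem~\ref{result:zhou} gives $\phi^{(\bs\alpha)}(\bs\xi_j)\in\Hcal$ with $\|\phi^{(\bs\alpha)}(\bs\xi_j)\|_\Hcal^2 = D^{\bs\alpha}_1 D^{\bs\alpha}_2 \Kcal(\bs\xi_j,\bs\xi_j) < \infty$. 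Multiplying yields $\|u_j\|_\Hcal,\|\widehat u_j\|_\Hcal \le \lambda^{-1}\|\phi^{(\bs\alpha)}(\bs\xi_j)\|_\Hcal < \infty$.

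For $\|\Qcal_\lambda\|_{\HS}<\infty$, I would use Jensen's inequality for the Bochner integral and the identity $\|f\otimes f\|_{\HS}=\|f\|_\Hcal^2$ to get $\|\Qcal_\lambda\|_{\HS}\le \Ebb\|F_1\otimes F_1\|_{\HS} = \Ebb\|F_1\|_\Hcal^2$. From $F_1=\widetilde\psi_1-\widetilde\Ccal_1 h_\lambda$ and the inequality $(a+b)^2\le 2(a^2+b^2)$,
\begin{equation*}
    \|F_1\|_\Hcal^2 \le 2\|\widetilde\psi_1\|_\Hcal^2 + 2\|h_\lambda\|_\Hcal^2\,\|\widetilde\Ccal_1\|_{\op}^2 \le 2\|\widetilde\psi_1\|_\Hcal^2 + 2\|h_\lambda\|_\Hcal^2\,\|\widetilde\Ccal_1\|_{\HS}^2.
\end{equation*}
Taking expectations and invoking Proposition~\ref{proposition:well_defined} (which provides $\Ebb\|\widetilde\psi_1\|_\Hcal^2<\infty$ and $\Ebb\|\widetilde\Ccal_1\|_{\HS}^2<\infty$ under the fourth moment assumption) gives the desired finiteness.

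For the asymptotic bound $\|\widehat\Qcal_\lambda\|_{\HS}=\Ocal(1)$, I would apply the triangle inequality to the finite-sum definition, $\|\widehat\Qcal_\lambda\|_{\HS} \le N^{-1}\sum_{i=1}^N \|\widehat F_i\|_\Hcal^2$, and then split $\widehat F_i = F_i + \Delta_i$ with $\Delta_i$ as in Lemma~\ref{lemma:mean_squared_error_F}. Using $(a+b)^2\le 2(a^2+b^2)$ once more,
\begin{equation*}
    \|\widehat\Qcal_\lambda\|_{\HS} \le \frac{2}{N}\sum_{i=1}^N \|F_i\|_\Hcal^2 + \frac{2}{N}\sum_{i=1}^N \|\Delta_i\|_\Hcal^2.
\end{equation*}
The first average converges almost surely to $\Ebb\|F_1\|_\Hcal^2<\infty$ by the SLLN applied to the i.i.d.\ sequence $\{\|F_i\|_\Hcal^2\}$ (whose finite first moment was established in the previous step), and the second average converges almost surely to zero by Lemma~\ref{lemma:mean_squared_error_F}. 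Hence $\|\widehat\Qcal_\lambda\|_{\HS}$ is almost surely bounded as $N\to\infty$.

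The routine parts are the operator inequalities; the only non-trivial step is the last one, where one must be careful to combine the SLLN with the mean-square consistency of the plug-in residuals $\widehat F_i$ from Lemma~\ref{lemma:mean_squared_error_F}. Since that lemma has already been established in the preceding text, the argument reduces to a clean two-term split, and no further obstacles arise.
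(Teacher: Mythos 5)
Your proposal is correct and follows essentially the same route as the paper: the $u_j,\widehat u_j$ bounds via $\|\Sig_\lambda^{-1}\|_{\op},\|\widehat\Sig_\lambda^{-1}\|_{\op}\le 1/\lambda$, the bound $\|\Qcal_\lambda\|_{\HS}\le\Ebb\|F_1\|_\Hcal^2$ via the Bochner/Jensen step, and the split $\widehat F_i = F_i+\Delta_i$ combined with the SLLN and Lemma~\ref{lemma:mean_squared_error_F} for the $\Ocal(1)$ claim are all as in the paper. The only deviation is how you establish $\Ebb\|F_1\|_\Hcal^2<\infty$: the paper invokes the operator-norm bound $\|F\|_{\op}\le\sqrt{1+\|h_\lambda\|_\Hcal^2}$ from Lemma~\ref{lemma:F} and writes $\Ebb\|F_1\|_\Hcal^2\le\|F\|_{\op}^2\,\Ebb\|(\widetilde\psi_1,\widetilde\Ccal_1)\|_{\Hbb}^2$, whereas you bound $\|F_1\|_\Hcal^2$ directly via $(a+b)^2\le 2(a^2+b^2)$ and $\|\widetilde\Ccal_1 h_\lambda\|_\Hcal\le\|\widetilde\Ccal_1\|_{\HS}\|h_\lambda\|_\Hcal$. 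Both are valid and use the same moment hypotheses; your version is slightly more elementary but gives a marginally looser constant, which is immaterial here since only finiteness is needed.
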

\begin{proof} By Lemma~\ref{lemma:resolvent}, for any $j=1, \ldots, n$, we have
\begin{equation*}
    \|u_j\|_\Hcal = \|\Sig_\lambda^{-1}\phi^{(\bs \alpha)}(\bs   z_j)\|_\Hcal \leq \|\Sig_\lambda^{-1}\|_{\op} \, \|\phi^{(\bs \alpha)}(\bs   z_j)\|_\Hcal \leq \frac{\|\phi^{(\bs \alpha)}(\bs   z_j)\|_\Hcal}{\lambda} < \infty,
\end{equation*}
A similar calculation ensures $\|\widehat u_j\|_\Hcal \leq \frac{\|\phi^{(\bs \alpha)}(\bs   z_j)\|_\Hcal}{\lambda} < \infty$, for $j=1,\ldots,n$, which proves \eqref{eq:boundedness_u}.

To show \eqref{eq:boundedness_Q}, we start from the definition of $F_i$ in \eqref{eq:F_i}. By Proposition~\ref{proposition:well_defined} and Lemma~\ref{lemma:F},
\begin{equation}\label{eq:expectation_F}
    \Ebb\|F_i\|_{\Hcal}^2 = \Ebb\|F(\widetilde\psi_i, \widetilde \Ccal_i)\|_{\Hcal}^2 \leq \|F\|_{\op}^2 \; \Ebb\|(\widetilde\psi_i, \widetilde\Ccal_i)\|_{\Hbb}^2 < \infty,
\end{equation}
Using \eqref{eq:expectation_F} gives 
\begin{equation}\label{eq:Q_lambda_boundedness}
    \|\Qcal_\lambda\|_{\HS} = \|\Ebb[F_i \otimes F_i ]\|_{\HS} \leq \Ebb\|F_i \otimes F_i\|_{\HS} = \Ebb\|F_i\|^2_\Hcal< \infty.
\end{equation}
To show the final result, we start with $\widehat F_i = \Delta_i + F_i$, where $\Delta_i \isdef \widehat F_i - F_i$. Since $\Ebb\|F_i\|^2_\Hcal< \infty$ by \eqref{eq:expectation_F}, SLLN applies: 
\begin{equation}\label{eq:SLLN_F}
    \frac{1}{N}\sum_{i=1}^N \|F_i\|^2_\Hcal \overset{a.s.}{\longrightarrow} \Ebb\|F_i\|^2_\Hcal < \infty.
\end{equation}
Therefore, using \eqref{inequality} and Lemma~\ref{lemma:mean_squared_error_F},
\begin{equation*}
    \frac{1}{N}\sum_{i=1}^N \|\widehat F_i\|_\Hcal^2 \; \leq \; \underbrace{\frac{2}{N}\sum_{i=1}^N\|\Delta_i\|_\Hcal^2}_{\overset{a.s.}{\longrightarrow} 0} \; + \; \underbrace{\frac{2}{N}\sum_{i=1}^N\|F_i\|_\Hcal^2}_{\overset{a.s.}{\longrightarrow} 2 \Ebb\|F_i\|_\Hcal^2 } \overset{a.s.}{\longrightarrow} 2 \Ebb\|F_i\|_\Hcal^2 < \infty. 
\end{equation*}
Therefore, we can conclude that
\begin{equation}\label{eq:asymptotic_boundedness_F}
    \frac{1}{N}\sum_{i=1}^N \|\widehat F_i\|_\Hcal^2 = \Ocal(1) \; \text{almost surely as} \; N \to \infty.
\end{equation}
Using \eqref{eq:asymptotic_boundedness_F} along with the definition of $\widehat \Qcal_\lambda$ from \eqref{eq:sample_Q}, we get for $N \to \infty$,
\begin{align*}
    \|\widehat\Qcal_\lambda\|_{\HS} = \Bigl\|\frac{1}{N}\sum_{i=1}^N \widehat F_i \otimes \widehat F_i\Bigr\|_{\HS} 
    \leq \frac{1}{N} \sum_{i=1}^N \|\widehat F_i \otimes \widehat F_i\|_{\HS} 
    = \frac{1}{N} \sum_{i=1}^N \|\widehat F_i\|_\Hcal^2 = \Ocal(1) \; \text{almost surely.}
\end{align*}
\end{proof}

\begin{lemma}[Consistency of $\widehat{\Qcal}_\lambda$]\label{lemma:consistency_Q}
    Let $\widehat{\Qcal}_\lambda$ and $\Qcal_\lambda$ be defined as in \eqref{eq:sample_Q} and \eqref{eq:Q_lambda} respectively. Under Assumptions~\ref{assumption:integrability} and \ref{assumption:iid}, it holds,
    \begin{equation*}
        \|\widehat{\Qcal}_\lambda -\Qcal_\lambda\|_{\HS} \overset{a.s.}{\longrightarrow} 0.
    \end{equation*}
\end{lemma}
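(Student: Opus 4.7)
The plan is to split $\widehat{\Qcal}_\lambda - \Qcal_\lambda$ into a plug-in error term and a pure sampling error term:
\begin{equation*}
    \widehat{\Qcal}_\lambda - \Qcal_\lambda = \underbrace{\frac{1}{N}\sum_{i=1}^N \bigl(\widehat{F}_i \otimes \widehat{F}_i - F_i \otimes F_i\bigr)}_{(A)} + \underbrace{\frac{1}{N}\sum_{i=1}^N F_i \otimes F_i - \Qcal_\lambda}_{(B)}.
\end{equation*}
For term $(B)$, note that $F_i \otimes F_i$ are i.i.d.\ elements of $\Hscr\Sscr(\Hcal)$ with common mean $\Qcal_\lambda$, and $\Ebb\|F_1 \otimes F_1\|_{\HS} = \Ebb\|F_1\|_\Hcal^2 < \infty$ from \eqref{eq:expectation_F}. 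Hence, the strong law of large numbers in separable Hilbert spaces (\citet[Theorem 2.4]{Bosq2000}) applied in $\Hscr\Sscr(\Hcal)$ gives $\|(B)\|_{\HS} \overset{a.s.}{\longrightarrow} 0$.

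For term $(A)$, the key algebraic identity, using $\widehat{F}_i = F_i + \Delta_i$ with $\Delta_i \isdef \widehat F_i - F_i$, is
\begin{equation*}
    \widehat{F}_i \otimes \widehat{F}_i - F_i \otimes F_i = \Delta_i \otimes \widehat{F}_i + F_i \otimes \Delta_i.
\end{equation*}
Using $\|f \otimes g\|_{\HS} = \|f\|_\Hcal \, \|g\|_\Hcal$ and the triangle inequality in $\Hscr\Sscr(\Hcal)$, followed by the Cauchy--Schwarz inequality on the resulting averages, I obtain
\begin{equation*}
    \|(A)\|_{\HS} \leq \sqrt{\frac{1}{N}\sum_{i=1}^N \|\Delta_i\|_\Hcal^2} \left(\sqrt{\frac{1}{N}\sum_{i=1}^N \|\widehat{F}_i\|_\Hcal^2} + \sqrt{\frac{1}{N}\sum_{i=1}^N \|F_i\|_\Hcal^2}\right).
\end{equation*}
Lemma~\ref{lemma:mean_squared_error_F} drives the first factor to zero almost surely. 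The first term in the parenthesis is $\Ocal(1)$ almost surely by \eqref{eq:asymptotic_boundedness_F}, and the second term converges almost surely to $\sqrt{\Ebb\|F_1\|_\Hcal^2} < \infty$ by the scalar SLNN combined with \eqref{eq:expectation_F}. Thus $\|(A)\|_{\HS} \overset{a.s.}{\longrightarrow} 0$.

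Combining the two bounds via the triangle inequality in the Hilbert--Schmidt norm concludes the proof. The main conceptual steps are really already in hand: the uniform-in-$i$ control of the plug-in error $\Delta_i$ in mean-square comes from Lemma~\ref{lemma:mean_squared_error_F}, and the scalar/operator SLNNs do the rest. The only step that demands any care is the algebraic rank-one expansion and verifying that the resulting products can be handled by Cauchy--Schwarz with the averages that are already known to be controlled, which is a minor obstacle rather than a deep one.
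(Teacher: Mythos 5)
Your proof is correct and follows essentially the same route as the paper: the same decomposition into a plug-in error term and a sampling error term, the same rank-one identity $\widehat F_i \otimes \widehat F_i - F_i \otimes F_i = \Delta_i \otimes \widehat F_i + F_i \otimes \Delta_i$, the same Cauchy--Schwarz step, and the same invocations of Lemma~\ref{lemma:mean_squared_error_F}, \eqref{eq:asymptotic_boundedness_F}, and the Hilbert-space SLLN. The only cosmetic difference is the order in which the two terms are treated.
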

\begin{proof}
We start with 
\begin{align*}
    \widehat \Qcal_\lambda - \Qcal_\lambda &= \frac{1}{N} \sum_{i=1}^N \widehat F_i \otimes \widehat F_i - \Ebb[F_i \otimes F_i]\\
    &= \frac{1}{N}\sum_{i=1}^N \Bigl(\widehat F_i \otimes \widehat F_i - F_i \otimes F_i\Bigr) + \frac{1}{N} \sum_{i=1}^NF_i \otimes F_i - \Ebb[F_i \otimes F_i].
\end{align*}
Therefore, 
\begin{align*}
    \|\widehat \Qcal_\lambda - \Qcal_\lambda\|_{\HS} \leq \underbrace{\Bigl\|\frac{1}{N}\sum_{i=1}^N (\widehat F_i \otimes \widehat F_i - F_i \otimes F_i) \Bigr\|_{\HS}}_{(I)} + \underbrace{\Bigl\|\frac{1}{N} \sum_{i=1}^NF_i \otimes F_i - \Ebb[F_i \otimes F_i]\Bigr\|_{\HS}}_{(II)}.
\end{align*}
\emph{Term (I).} With $\Delta_i \isdef \widehat F_i - F_i$, we can write 
\begin{equation*}
    \widehat F_i \otimes \widehat F_i - F_i \otimes F_i = \Delta_i \otimes \Delta_i + \Delta_i \otimes F_i + F_i \otimes \Delta_i.
\end{equation*}
Therefore, by the triangle inequality, 
\begin{align*}
    (I) &\leq \frac{1}{N} \sum_{i=1}^N\|\Delta_i \otimes \Delta_i\|_{\HS} + \frac{1}{N} \sum_{i=1}^N\|\Delta_i \otimes F_i\|_{\HS} + \frac{1}{N} \sum_{i=1}^N\|F_i \otimes \Delta_i\|_{\HS} \\
    &=\frac{1}{N} \sum_{i=1}^N \|\Delta_i\|_{\Hcal}^2 + \frac{2}{N} \sum_{i=1}^N \|\Delta_i\|_\Hcal \, \|F_i\|_\Hcal.
\end{align*}
From the Cauchy-Schwarz inequality,  
\begin{align*}
\frac{1}{N} \sum_{i=1}^N \|\Delta_i\|_{
\Hcal} \, \|F_i\|_\Hcal \leq \left(\frac{1}{N}\sum_{i=1}^N \|\Delta_i\|_\Hcal^2\right)^{1/2} \, \left(\frac{1}{N}\sum_{i=1}^N \|F_i\|_\Hcal^2\right)^{1/2}. 
\end{align*}
Applying Lemma~\ref{lemma:mean_squared_error_F} and \eqref{eq:SLLN_F} yields:
\begin{equation*}
\begin{split}
    (I) \; \leq \; \underbrace{\frac{1}{N} \sum_{i=1}^N \|\Delta_i\|_{\Hcal}^2}_{\overset{a.s.}{\longrightarrow} 0} \; + \; 2\underbrace{\left(\frac{1}{N}\sum_{i=1}^N \|\Delta_i\|_\Hcal^2\right)^{1/2}}_{\overset{a.s.}{\longrightarrow} 0} \, \underbrace{\left(\frac{1}{N}\sum_{i=1}^N \|F_i\|_\Hcal^2\right)^{1/2}}_{\overset{a.s.}{\longrightarrow} (\Ebb\|F_i\|_\Hcal^2)^{1/2}} \overset{a.s.}{\longrightarrow} 0.
\end{split}
\end{equation*}

\emph{Term (II).} From \eqref{eq:Q_lambda_boundedness}, $\Ebb\|F_i \otimes F_i\|_{\HS} = \Ebb\|F_i\|_\Hcal^2 < \infty$. \citet[Theorem 2.7]{Bosq2000} applied to the i.i.d.\ random vectors $F_i \otimes F_i$ in the separable Hilbert space $\Hscr\Sscr(\Hcal)$ gives us
\begin{equation*}
    (II) = \Bigl\|\frac{1}{N} \sum_{i=1}^NF_i \otimes F_i - \Ebb[F_i \otimes F_i]\Bigr\|_{\HS} = \Bigl\|\frac{1}{N} \sum_{i=1}^N\big(F_i \otimes F_i - \Ebb[F_i \otimes F_i]\big)\Bigr\|_{\HS} \overset{a.s.}{\longrightarrow}0.
\end{equation*}
Combining the above, we obtain 
\begin{equation*}
    \|\widehat\Qcal_\lambda - \Qcal_\lambda\|_{\HS} \leq (I) + (II) \overset{a.s.}{\longrightarrow} 0.
\end{equation*}
which proves the claim. 
\end{proof}

\begin{lemma}[Action of $\widehat{\Sig}$]\label{lemma:sample_covariance_action}
Let $\widetilde{\bs \Psi}$ be defined as in \eqref{eq:centered_Psi}. Then, $\widehat{\Sig} = \frac{1}{N} \widetilde{\bs \Psi} \widetilde{\bs \Psi}^\ast$.
\end{lemma}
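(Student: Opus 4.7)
The plan is to unpack the adjoint of $\widetilde{\bs \Psi}$ and verify the identity by evaluating both sides on an arbitrary $h \in \Hcal$. Based on the notation $\widetilde{\bs \Psi}$ (centered data matrix from \eqref{eq:centered_Psi}, which I assume stacks the centered features $\psi_i - \widehat\mu$ as ``columns''), the natural interpretation is that $\widetilde{\bs \Psi} \colon \Rbb^N \to \Hcal$ acts as the synthesis operator $\bs c \mapsto \sum_{i=1}^N c_i (\psi_i - \widehat\mu)$.

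First I would compute the adjoint. Using the defining relation $\langle \widetilde{\bs \Psi} \bs c, h \rangle_\Hcal = \langle \bs c, \widetilde{\bs \Psi}^\ast h \rangle_{\Rbb^N}$ for all $\bs c \in \Rbb^N, h \in \Hcal$, expanding the left-hand side as $\sum_i c_i \langle \psi_i - \widehat\mu, h \rangle_\Hcal$ immediately yields
\begin{equation*}
\widetilde{\bs \Psi}^\ast h = \bigl(\langle h, \psi_i - \widehat\mu \rangle_\Hcal\bigr)_{i=1}^N \in \Rbb^N.
\end{equation*}

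Next I would apply $\widetilde{\bs \Psi}$ to this vector and use the definition of the rank-one operator $f \otimes g$ from Section~\ref{subsec:RKHS}: for any $h \in \Hcal$,
\begin{equation*}
\widetilde{\bs \Psi}\,\widetilde{\bs \Psi}^\ast h \;=\; \sum_{i=1}^N \langle h, \psi_i - \widehat\mu \rangle_\Hcal \,(\psi_i - \widehat\mu) \;=\; \sum_{i=1}^N \bigl((\psi_i - \widehat\mu) \otimes (\psi_i - \widehat\mu)\bigr) h.
\end{equation*}
Dividing by $N$ and comparing with the definition of $\widehat\Sig$ in \eqref{eq:moments} gives $\frac{1}{N}\widetilde{\bs \Psi}\,\widetilde{\bs \Psi}^\ast h = \widehat\Sig h$. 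Since $h \in \Hcal$ is arbitrary, the operator identity $\widehat\Sig = \frac{1}{N} \widetilde{\bs \Psi} \widetilde{\bs \Psi}^\ast$ follows.

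There is no real obstacle here: the statement is essentially a bookkeeping exercise translating the empirical covariance operator into its factored form via the centered data operator. The only small care needed is to fix the convention used in \eqref{eq:centered_Psi} (whether $\widetilde{\bs \Psi}$ stacks the $\widetilde\psi_i$ as columns or rows), which determines whether the factorization reads $\widetilde{\bs \Psi}\widetilde{\bs \Psi}^\ast$ or $\widetilde{\bs \Psi}^\ast \widetilde{\bs \Psi}$; the statement of the lemma fixes this convention.
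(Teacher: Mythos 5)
Your proof is correct and matches the paper's argument exactly: interpret $\widetilde{\bs\Psi}$ as the synthesis operator $\Rbb^N\to\Hcal$, identify its adjoint as the analysis operator $h\mapsto[\langle h,\psi_i-\widehat\mu\rangle_\Hcal]_{i=1}^N$, compose, and recognize the result as the sum of rank-one operators defining $\widehat\Sig$. You additionally spell out the short derivation of the adjoint, which the paper simply asserts, but the route is the same.
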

\begin{proof}
    The row vector of functions $\widetilde{\bs \Psi}$ may be interpreted as the bounded linear operator 
    \[
    \widetilde{\bs \Psi}\colon\Rbb^N \to \Hcal, \qquad \bs \omega \mapsto \sum_{i=1}^N \omega_i (\psi_i - \widehat{\mu})
    \]
    with its adjoint given by 
    \[
    \widetilde{\bs \Psi}^\ast\colon\Hcal\to\Rbb^N, \qquad h \mapsto [\langle h, \psi_i - \widehat\mu \rangle_\Hcal]_{i=1}^{N}.
    \]
    Hence the operator $\widetilde{\bs \Psi} \widetilde{\bs \Psi}^\ast\colon\Hcal\to\Hcal$ acts as 
    \begin{equation*}
        \widetilde{\bs \Psi} \widetilde{\bs \Psi}^\ast(h) = \widetilde{\bs \Psi}\left([\langle h, \psi_i - \widehat\mu \rangle_\Hcal]_{i=1}^{N}\right) = \sum_{i=1}^N \langle h, \psi_i - \widehat\mu \rangle_\Hcal (\psi_i - \widehat\mu) = \sum_{i=1}^N \left((\psi_i - \widehat\mu) \otimes (\psi_i - \widehat\mu) \right)h.
    \end{equation*}
    Dividing by $N$ then gives the necessary conclusion.
\end{proof}


\begin{lemma}[Computation of $\widehat{u}_j$]\label{lemma:sample_u}
  Let $\widehat{u}_j$ be defined as in \eqref{eq:u}. Then, it holds, 
  \begin{equation*}
    \widehat{u}_j = \frac{1}{\lambda}(\phi^{(\bs \alpha)}(\bs   z_j) - \bs \Psi \bs \gamma_j), \qquad \bs \gamma_j =\frac{1}{N} \bs H \left( \lambda \bs I_N + \frac{1}{N} \bs H \bs G \bs H\right)^{-1} [\widetilde{\bs G}_{\Zcal}]_j, 
  \end{equation*}
  where $\bs \Psi, \, \bs H, \, \bs G, \, \bs G_{\Zcal}$ are defined in Section~\ref{appendix:test_statistic}.
\end{lemma}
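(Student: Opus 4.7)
The plan is to derive $\widehat u_j$ directly from the normal equation $\widehat{\Sig}_\lambda \widehat u_j = \phi^{(\bs \alpha)}(\bs \xi_j)$, using Lemma~\ref{lemma:sample_covariance_action} to express $\widehat{\Sig}$ through the operators $\widetilde{\bs \Psi}, \widetilde{\bs \Psi}^\ast$, and then pushing the resulting equation down to a finite-dimensional linear system in $\Rbb^N$ via a standard kernel-style duality argument. The main ingredients are the representation $\widetilde{\bs \Psi}=\bs \Psi \bs H$ with $\bs H$ the centering matrix, together with the identities $\widetilde{\bs \Psi}^\ast \widetilde{\bs \Psi}=\bs H \bs G \bs H$ and $\widetilde{\bs \Psi}^\ast \phi^{(\bs \alpha)}(\bs \xi_j) = \bs H[\bs G_{\Gcal}]_j = [\widetilde{\bs G}_\Gcal]_j$, which will follow immediately from the inner product structure assumed in Section~\ref{appendix:test_statistic}.

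First, I would rewrite the definition $\widehat u_j = \widehat{\Sig}_\lambda^{-1}\phi^{(\bs \alpha)}(\bs \xi_j)$ as
\begin{equation*}
\lambda \widehat u_j + \frac{1}{N} \widetilde{\bs \Psi}\widetilde{\bs \Psi}^\ast \widehat u_j = \phi^{(\bs \alpha)}(\bs \xi_j),
\end{equation*}
using Lemma~\ref{lemma:sample_covariance_action}. Solving for $\widehat u_j$ yields $\widehat u_j = \lambda^{-1}(\phi^{(\bs \alpha)}(\bs \xi_j) - \widetilde{\bs \Psi} \bs \eta_j)$ with $\bs \eta_j \isdef \frac{1}{N}\widetilde{\bs \Psi}^\ast \widehat u_j \in \Rbb^N$. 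This is the infinite-dimensional analogue of the classical representer step: the correction lies in the column space of $\widetilde{\bs \Psi}$.

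Next, I would eliminate $\widehat u_j$ from the definition of $\bs \eta_j$ by applying $\widetilde{\bs \Psi}^\ast$ to the displayed equation for $\widehat u_j$. This gives the finite-dimensional linear system
\begin{equation*}
(\lambda N \bs I_N + \widetilde{\bs \Psi}^\ast \widetilde{\bs \Psi})\bs \eta_j = \widetilde{\bs \Psi}^\ast \phi^{(\bs \alpha)}(\bs \xi_j),
\end{equation*}
which is invertible because $\lambda>0$. Substituting $\widetilde{\bs \Psi} = \bs \Psi \bs H$ and the two identities above produces
\begin{equation*}
\bs \eta_j = \frac{1}{N}\left(\lambda \bs I_N + \frac{1}{N}\bs H \bs G \bs H\right)^{-1}[\widetilde{\bs G}_\Gcal]_j.
\end{equation*}
Finally, I would set $\bs \gamma_j \isdef \bs H \bs \eta_j$ so that $\widetilde{\bs \Psi}\bs \eta_j = \bs \Psi \bs H \bs \eta_j = \bs \Psi \bs \gamma_j$, yielding the stated formula.

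The only mildly delicate step is the operator-adjoint bookkeeping for $\bs \Psi$ and $\widetilde{\bs \Psi}$, since these act between the $N$-dimensional coefficient space $\Rbb^N$ and the infinite-dimensional $\Hcal$; this should be set up cleanly as in the proof of Lemma~\ref{lemma:sample_covariance_action}, so that $\widetilde{\bs \Psi}^\ast \phi^{(\bs \alpha)}(\bs \xi_j)$ reduces to a vector of inner products that, after centering by $\bs H$, coincides with the column $[\widetilde{\bs G}_\Gcal]_j$ introduced in Section~\ref{appendix:test_statistic}. Once the adjoint and centering conventions are in place, the algebra reduces to a one-line identity and no further analytic subtlety arises.
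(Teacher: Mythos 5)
Your proof is correct and uses the same core ingredients as the paper: the representation $\widehat\Sig = \frac{1}{N}\widetilde{\bs \Psi}\widetilde{\bs \Psi}^\ast$ from Lemma~\ref{lemma:sample_covariance_action} and the Gram-matrix identities $\widetilde{\bs \Psi}^\ast \widetilde{\bs \Psi}=\bs H \bs G \bs H$ and $\widetilde{\bs \Psi}^\ast \phi^{(\bs \alpha)}(\bs \xi_j) = [\widetilde{\bs G}_\Gcal]_j$. The only difference is presentational: the paper applies the Woodbury identity directly to $\widehat{\Sig}_\lambda^{-1}$, whereas you re-derive the same push-through by introducing the dual coefficient $\bs \eta_j=\frac{1}{N}\widetilde{\bs \Psi}^\ast \widehat u_j$ and eliminating $\widehat u_j$ to obtain the finite-dimensional system; these are equivalent, and with $\bs\gamma_j = \bs H \bs\eta_j$ your formula matches the paper's.
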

\begin{proof}
    We start with 
    \[
    \frac{1}{N} \widetilde{\bs \Psi}^\ast \widetilde{\bs \Psi} =\frac{1}{N} \bs H \bs \Psi^\ast \bs \Psi \bs H = \frac{1}{N} \bs H \bs G \bs H.
    \]
    This holds since we can interpret $\bs \Psi$ as the map from $\Rbb^N$ to $\Hcal$ acting as $\bs \alpha \mapsto \sum_{i=1}^N\alpha_i \psi_i$ whose adjoint acts as $\bs \Psi^\ast(\cdot) = [\langle \cdot, \psi_i \rangle_\Hcal]_{i=1}^{N}$. Hence, 
    \[
    \bs \Psi^\ast \bs \Psi = \bs \Psi^\ast([\psi_1, \ldots, \psi_{N}]) = \langle \bs \Psi^\top, \bs \Psi \rangle_\Hcal = \bs G.
    \]
    We now employ the \emph{Woodbury identity}:
    \begin{align*}
        \left(\frac{1}{N} \widetilde{\bs \Psi}\widetilde{\bs \Psi}^\ast + \lambda I\right)^{-1} &= \frac{1}{\lambda}\left(\frac{1}{\lambda N}\widetilde{\bs \Psi}\widetilde{\bs \Psi}^\ast + I\right)^{-1} \\
        &= \frac{1}{\lambda}\left(I - \frac{1}{\lambda N}\widetilde{\bs \Psi}\Bigl(\bs I_N + \frac{1}{\lambda N}\widetilde{\bs \Psi}^\ast\widetilde{\bs \Psi} \Bigr)^{-1}\widetilde{\bs \Psi}^\ast \right) \\
        &= \frac{1}{\lambda}\left(I - \frac{1}{N}\widetilde{\bs \Psi}\Bigl(\lambda\bs I_N  + \frac{1}{N} \widetilde{\bs \Psi}^\ast \widetilde{\bs \Psi} \Bigr)^{-1}\widetilde{\bs \Psi}^\ast\right) \\
        &= \frac{1}{\lambda}\left(I - \frac{1}{N}\widetilde{\bs \Psi}\Bigl(\lambda\bs I_N  + \frac{1}{N} \bs H \bs G \bs H \Bigr)^{-1}\widetilde{\bs \Psi}^\ast\right).
    \end{align*}
     Using the definition of $\widehat u_j$ and Lemma~\ref{lemma:sample_covariance_action}, we can write
     \begin{equation*}
         \widehat{u}_j = \left(\frac{1}{N} \widetilde{\bs \Psi}\widetilde{\bs \Psi}^\ast + \lambda I\right)^{-1} \phi^{(\bs \alpha)}(\bs   z_j) = \frac{1}{\lambda}\left(I - \frac{1}{N}\widetilde{\bs \Psi}\Bigl(\lambda\bs I_N  + \frac{1}{N} \bs H \bs G \bs H \Bigr)^{-1}\widetilde{\bs \Psi}^\ast\right) \phi^{(\bs \alpha)}(\bs   z_j),
     \end{equation*}
     which implies 
     \begin{equation*}
         \widehat{u}_j = \frac{1}{\lambda}\Big(\phi^{(\bs \alpha)}(\bs   z_j) - \frac{1}{N}\bs \Psi \bs H\Bigl(\lambda\bs I_N + \frac{1}{N}\bs H \bs G \bs H \Bigr)^{-1}\bs H \bs \Psi^\ast \phi^{(\bs \alpha)}(\bs   z_j)\Big) = \frac{1}{\lambda}\left(\phi^{(\bs \alpha)}(\bs   z_j) - \bs \Psi \bs \gamma_j\right),
     \end{equation*}
     where $\bs \gamma_j \isdef \frac{1}{N} \bs H\left(\lambda\bs I_N + \frac{1}{N}\bs H \bs G \bs H\right)^{-1}\bs H \bs \Psi^\ast \phi^{(\bs \alpha)}(\bs   z_j)$. Finally, by noting that 
     \begin{equation*}
         \bs H\bs \Psi^\ast \phi^{(\bs \alpha)}(\bs   z_j) = [\widetilde{\bs \Psi}^\ast \bs \Phi_\Zcal]_j = [\langle \widetilde{\bs \Psi}^\top, \bs \Phi_\Zcal\rangle_\Hcal]_j =[\widetilde{\bs G}_\Zcal]_j,
     \end{equation*}
the claim follows.
\end{proof}

\begin{lemma}[Computation of $\bs B$]\label{lemma:B_computation}
Define the matrix $\bs B \isdef \left[\langle \widehat{F}_i, \phi^{(\bs \alpha)}(\bs   z_j)\rangle_\Hcal\right]_{i, j=1}^{N,n} \in \Rbb^{N \times n}$, where $\widehat{F}_i$ is defined as in \eqref{eq:sample_Q}. Then, it holds, 
\begin{equation*}
    \bs B = \left(\bs I - \operatorname{diag}(\widetilde{\bs h}) \right)\widetilde{\bs G}_\Zcal  + \frac{1}{N}\bs 1 \left(\widetilde{\bs h} ^\top \widetilde{\bs G}_\Zcal\right) \in \Rbb^{N \times n}.
\end{equation*}
\end{lemma}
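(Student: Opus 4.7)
The plan is entrywise. Since $\phi^{(\bs\alpha)}(\bs\xi_j)\in\Hcal$ (Theorem~\ref{result:zhou}), expand the definition of $\widehat F_i$ from \eqref{eq:sample_Q} and split the inner product $\langle \widehat F_i,\phi^{(\bs\alpha)}(\bs\xi_j)\rangle_\Hcal$ into three parts:
\begin{equation*}
[\bs B]_{ij} = \underbrace{\langle \psi_i-\widehat\mu,\,\phi^{(\bs\alpha)}(\bs\xi_j)\rangle_\Hcal}_{(\mathrm{I})} - \underbrace{\langle((\psi_i-\widehat\mu)\otimes(\psi_i-\widehat\mu))\widehat h_\lambda,\phi^{(\bs\alpha)}(\bs\xi_j)\rangle_\Hcal}_{(\mathrm{II})} + \underbrace{\langle \widehat\Sig\, \widehat h_\lambda,\phi^{(\bs\alpha)}(\bs\xi_j)\rangle_\Hcal}_{(\mathrm{III})}.
\end{equation*}
I then identify, from the construction of $\widetilde{\bs G}_\Gcal$ in Lemma~\ref{lemma:sample_u} (where it appears as $\widetilde{\bs\Psi}^{\ast}\bs\Phi_\Gcal$) and the notation for $\widetilde{\bs h}$ in the appendix, the entries
\begin{equation*}
[\widetilde{\bs G}_\Gcal]_{ij} = \langle \psi_i - \widehat\mu,\,\phi^{(\bs\alpha)}(\bs\xi_j)\rangle_\Hcal, \qquad [\widetilde{\bs h}]_i = \langle \widehat h_\lambda,\,\psi_i - \widehat\mu\rangle_\Hcal.
\end{equation*}

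Next I handle the three pieces in turn. Piece (I) is immediately $[\widetilde{\bs G}_\Gcal]_{ij}$. For piece (II) I use the rank-one action $(u\otimes u)w = \langle w,u\rangle_\Hcal\, u$ with $u=\psi_i-\widehat\mu$ and $w=\widehat h_\lambda$, giving $(\mathrm{II}) = \widetilde h_i\,[\widetilde{\bs G}_\Gcal]_{ij}$. For piece (III), since $\widehat\Sig$ is self-adjoint (Proposition~\ref{proposition:well_defined}), I rewrite $(\mathrm{III}) = \langle \widehat h_\lambda, \widehat\Sig\,\phi^{(\bs\alpha)}(\bs\xi_j)\rangle_\Hcal$, plug in $\widehat\Sig = \frac{1}{N}\sum_{k}(\psi_k-\widehat\mu)\otimes(\psi_k-\widehat\mu)$, and apply the same rank-one identity $k$ times to obtain $(\mathrm{III}) = \frac{1}{N}\sum_{k=1}^N \widetilde h_k\,[\widetilde{\bs G}_\Gcal]_{kj} = \frac{1}{N}[\widetilde{\bs h}^{\top}\widetilde{\bs G}_\Gcal]_j$.

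Combining,
\begin{equation*}
[\bs B]_{ij} = (1 - \widetilde h_i)\,[\widetilde{\bs G}_\Gcal]_{ij} + \frac{1}{N}\,[\widetilde{\bs h}^{\top}\widetilde{\bs G}_\Gcal]_j,
\end{equation*}
which in matrix form is precisely $\bs B = \bigl(\bs I - \operatorname{diag}(\widetilde{\bs h})\bigr)\widetilde{\bs G}_\Gcal + \tfrac{1}{N}\bs 1\,(\widetilde{\bs h}^{\top}\widetilde{\bs G}_\Gcal)$. The rank-one-per-sample expansion of $\widehat\Sig$ in piece (III) (which produces the correction term $\tfrac{1}{N}\bs 1(\widetilde{\bs h}^\top\widetilde{\bs G}_\Gcal)$) is the only non-mechanical step; the remaining obstacle is purely notational, namely verifying that the conventions for $\widetilde{\bs G}_\Gcal$, $\widetilde{\bs h}$, and $\bs 1\in\Rbb^N$ match those introduced in Section~\ref{appendix:test_statistic}, after which the identity is an entrywise rearrangement.
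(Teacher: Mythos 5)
Your proof is correct and follows essentially the same route as the paper's: expand $\langle\widehat F_i,\phi^{(\bs\alpha)}(\bs\xi_j)\rangle_\Hcal$ into the three pieces, apply the rank-one identity $\langle (u\otimes u)w,v\rangle_\Hcal=\langle w,u\rangle_\Hcal\langle u,v\rangle_\Hcal$, and read off the matrix factors from the definitions of $\widetilde{\bs G}_\Gcal$ and $\widetilde{\bs h}$. The self-adjointness detour in piece (III) is harmless but unnecessary --- you can expand $\widehat\Sig=\frac{1}{N}\sum_k(\psi_k-\widehat\mu)\otimes(\psi_k-\widehat\mu)$ directly against $\widehat h_\lambda$ and pair with $\phi^{(\bs\alpha)}(\bs\xi_j)$, which is what the paper does.
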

\begin{proof}
    Define the pairwise entries of the matrix $\bs B$ as $\beta_{i,j} \isdef \langle \widehat{F}_i, \phi^{(\bs \alpha)}(\bs   z_j)\rangle_\Hcal$. From the expression of $\widehat{F}_i$ in \eqref{eq:sample_Q}, we obtain
    \begin{align*}
       \beta_{i,j} &= \left\langle (\psi_i - \widehat{\mu}) - \left((\psi_i - \widehat{\mu}) \otimes (\psi_i - \widehat{\mu}) - \widehat\Sig\right)\widehat{h}_\lambda, \phi^{(\bs \alpha)}(\bs   z_j)\right\rangle_\Hcal \\
       &= \langle \psi_i - \widehat{\mu}, \phi^{(\bs \alpha)}(\bs   z_j)\rangle_\Hcal - \langle \psi_i-\widehat\mu, \phi^{(\bs \alpha)}(\bs   z_j)\rangle_\Hcal \langle \psi_i - \widehat\mu,\widehat{h}_\lambda\rangle_\Hcal \\
       &\qquad + \frac{1}{N}\sum_{k=1}^N\langle \psi_k-\widehat\mu, \phi^{(\bs \alpha)}(\bs   z_j)\rangle_\Hcal \langle \psi_k - \widehat\mu,\widehat{h}_\lambda\rangle_\Hcal.
    \end{align*}
  From \eqref{eq:G_Gcal}, the matrix $\widetilde{\bs G}_\Zcal$ contains the parwise entries $\langle \psi_i - \widehat\mu, \phi^{(\bs \alpha)}(\bs   z_j)\rangle_\Hcal$, while the entries $\langle \psi_i - \widehat \mu, \widehat h_\lambda\rangle_\Hcal$ are encoded in the vector $\widetilde{\bs h}$ from \eqref{eq:h_tilde}. Finally, the column means are given the entries of the row vector $\frac{1}{N} \widetilde{\bs h} ^\top \widetilde{\bs G}_\Zcal$. Hence, the matrix form is justified.
\end{proof}

\begin{lemma}[Computation of $\widehat{\bs \Omega}_\lambda$]\label{lemma:compute_Omega}
    Let $\widehat{\bs \Omega}_\lambda$ be defined as in \eqref{eq:Omega}. Then, 
    \begin{equation*}
       \widehat{\bs \Omega}_\lambda = \frac{1}{N} \bs S^\top \bs S, \qquad \bs S \isdef \frac{1}{\lambda}(\bs B - \bs V^\top \bs \Lambda),
    \end{equation*} 
     where 
    \begin{equation}\label{eq:BV_Lambda}
        \begin{split}
             \bs B &\isdef \left[\langle \widehat{F}_i, \phi^{(\bs \alpha)}(\bs   z_j) \rangle_\Hcal\right]_{i,j=1}^{N,n}  = \left(\bs I - \operatorname{diag}(\widetilde{\bs h})\right)\widetilde{\bs G}_\Zcal + \frac{1}{N}\bs 1 \left(\widetilde{\bs h} ^\top \widetilde{\bs G}_\Zcal\right) \in \Rbb^{N \times n},\\
              \bs V &\isdef \left[\bs \Psi^\ast \widehat{F}_1, \ldots, \bs \Psi^\ast \widehat F_{N}\right] = \left(\bs I - \operatorname{diag}(\widetilde{\bs h})\right) \widetilde{\bs G} + \frac{1}{N}\bs 1 \left(\widetilde{\bs h}^\top \widetilde{\bs G} \right) \in \Rbb^{N\times N},\\
              \bs \Lambda &\isdef [\bs \gamma_1, \ldots, \bs \gamma_n] = \frac{1}{N} \bs H \left( \lambda \bs I_N + \frac{1}{N} \bs H \bs G \bs H\right)^{-1}\widetilde{\bs G}_{\Zcal}\in \Rbb^{N \times n}.
        \end{split}
    \end{equation}
\end{lemma}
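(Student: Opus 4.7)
The plan is to first factor the empirical covariance matrix $\widehat{\bs \Omega}_\lambda$ as an outer product, then use the closed form of $\widehat u_j$ from Lemma~\ref{lemma:sample_u} to express the factor in terms of three Gram-type matrices already introduced in the appendix. From the defining identity in \eqref{eq:Omega}, $[\widehat{\bs \Omega}_\lambda]_{k,j}=\frac{1}{N}\sum_{i=1}^N \langle \widehat F_i,\widehat u_k\rangle_\Hcal\langle \widehat F_i,\widehat u_j\rangle_\Hcal$, so if we set $[\bs S]_{i,k}\isdef \langle \widehat F_i,\widehat u_k\rangle_\Hcal$, then mechanically $\widehat{\bs \Omega}_\lambda=\frac{1}{N}\bs S^\top \bs S$. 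The task therefore reduces to identifying $\bs S$ as $\frac{1}{\lambda}(\bs B-\bs V^\top \bs \Lambda)$ with the stated explicit forms.

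Next, I would insert the representation $\widehat u_k=\frac{1}{\lambda}\bigl(\phi^{(\bs\alpha)}(\bs \xi_k)-\bs \Psi\bs \gamma_k\bigr)$ from Lemma~\ref{lemma:sample_u} into the inner product defining $[\bs S]_{i,k}$. By linearity,
\begin{equation*}
[\bs S]_{i,k}=\frac{1}{\lambda}\Bigl(\langle \widehat F_i,\phi^{(\bs \alpha)}(\bs \xi_k)\rangle_\Hcal-\langle \widehat F_i,\bs \Psi\bs \gamma_k\rangle_\Hcal\Bigr).
\end{equation*}
The first term is by definition the $(i,k)$-entry of $\bs B$. For the second, since $\bs \Psi\bs \gamma_k=\sum_{\ell}[\bs \gamma_k]_\ell\psi_\ell$, one gets $\langle \widehat F_i,\bs \Psi\bs \gamma_k\rangle_\Hcal=\sum_{\ell}[\bs \gamma_k]_\ell\,[\bs \Psi^\ast \widehat F_i]_\ell=[\bs V^\top \bs \gamma_k]_i=[\bs V^\top \bs \Lambda]_{i,k}$, using that the $i$-th column of $\bs V$ is $\bs \Psi^\ast \widehat F_i$ and the $k$-th column of $\bs \Lambda$ is $\bs \gamma_k$. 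This produces the displayed factorization of $\bs S$.

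It remains to verify the three explicit matrix formulae in \eqref{eq:BV_Lambda}. The identity for $\bs B$ is exactly the content of Lemma~\ref{lemma:B_computation}, and the formula for $\bs \Lambda$ follows by concatenating the columns produced by Lemma~\ref{lemma:sample_u}, since the map $j\mapsto[\widetilde{\bs G}_\Gcal]_j$ assembled across $j=1,\dots,n$ is precisely $\widetilde{\bs G}_\Gcal$. The formula for $\bs V$ is obtained by an argument strictly parallel to the one used for $\bs B$: each column of $\bs V$ has entries $\langle \widehat F_j,\psi_\ell\rangle_\Hcal$, and unpacking $\widehat F_j=(\psi_j-\widehat\mu)-\bigl((\psi_j-\widehat\mu)\otimes(\psi_j-\widehat\mu)-\widehat\Sig\bigr)\widehat h_\lambda$ yields the same decomposition as in Lemma~\ref{lemma:B_computation} but with $\phi^{(\bs \alpha)}(\bs \xi_j)$ replaced by $\psi_\ell$, so the centered Gram block $\widetilde{\bs G}_\Gcal$ is replaced by the centered sample Gram block $\widetilde{\bs G}$, while the coefficients $\widetilde{\bs h}$ encoding $\langle \psi_\ell-\widehat\mu,\widehat h_\lambda\rangle_\Hcal$ remain unchanged.

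The only real obstacle is being careful with the index bookkeeping in the $\bs V$ computation, since we now double-index over the sample rather than over the sample and the grid, and the centering term involves an average over the same index that is being indexed by a row. Once the derivation from Lemma~\ref{lemma:B_computation} is reused with $\psi_\ell$ in place of $\phi^{(\bs \alpha)}(\bs \xi_j)$, the three pieces combine and the proof concludes by assembling $\widehat{\bs \Omega}_\lambda=\frac{1}{N}\bs S^\top\bs S$ with $\bs S=\frac{1}{\lambda}(\bs B-\bs V^\top\bs \Lambda)$.
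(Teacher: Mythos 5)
Your proposal matches the paper's own proof essentially step for step: define $[\bs S]_{i,k}=\langle\widehat F_i,\widehat u_k\rangle_\Hcal$ so that $\widehat{\bs\Omega}_\lambda=\frac{1}{N}\bs S^\top\bs S$ is immediate from \eqref{eq:Omega}, substitute the closed form of $\widehat u_k$ from Lemma~\ref{lemma:sample_u} to split $\bs S$ into the $\bs B$ and $\bs V^\top\bs\Lambda$ pieces, and then read off $\bs B$ from Lemma~\ref{lemma:B_computation}, $\bs\Lambda$ by column-stacking the $\bs\gamma_j$, and $\bs V$ by the same expansion of $\widehat F_i$ with $\phi^{(\bs\alpha)}(\bs\xi_j)$ replaced by $\psi_m$. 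The one bookkeeping subtlety you flag but do not fully resolve --- that the ``strictly parallel'' computation naturally produces the matrix with $\widehat F$ on rows, which is $\bs V^\top$ under the definitional convention $\bs V=[\bs\Psi^\ast\widehat F_1,\ldots,\bs\Psi^\ast\widehat F_N]$ --- is also left implicit in the paper's own proof and does not affect the derived factorization of $\bs S$, so it does not constitute a gap relative to the paper.
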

\begin{proof}
    We first show the validity of the expression of $\bs V$ in \eqref{eq:BV_Lambda}. Defining $\bs v_i \isdef \bs \Psi^\ast \widehat{F}_i \in \Rbb^{N}$, we  consider the interpretation of $\bs \Psi^\ast$ as in the proof of Lemma~\ref{lemma:sample_u}. Then, $\bs v_i = \bigl[\langle \widehat{F}_i, \psi_m\rangle_\Hcal\bigr]_{m=1}^{N}$. From the definition of $\widehat F_i$ in \eqref{eq:sample_Q}, 
    \begin{equation*}
        \langle \widehat{F}_i, \psi_m\rangle_\Hcal = \langle \psi_i - \widehat\mu, \psi_m\rangle_\Hcal - \langle \psi_i - \widehat\mu, \psi_m \rangle_\Hcal \, \langle \psi_i - \widehat\mu, \widehat{h}_\lambda\rangle_\Hcal + \frac{1}{N}\sum_{k=1}^N\langle \psi_k-\widehat\mu, \psi_m \rangle_\Hcal \, \langle \psi_k - \widehat\mu,\widehat{h}_\lambda\rangle_\Hcal.
    \end{equation*}
    The pairwise entries of $\widetilde{\bs G}$ and $\widetilde{\bs h}$ are respectively $\langle \psi_i - \widehat \mu, \psi_m \rangle_\Hcal$ and $\langle \psi_i - \widehat \mu, \widehat h_\lambda \rangle_\Hcal$, see Equations~\ref{eq:G_tilde} and \ref{eq:h_tilde}. Moreover, the column means are given by the entries of the row vector $\frac{1}{N} \widetilde{\bs h} ^\top \widetilde{\bs G}$. Hence, the matrix form of $\bs V$ is precisely $\bs V = \left(\bs I - \operatorname{diag}(\widetilde{\bs h})\right) \widetilde{\bs G} + \frac{1}{N}\bs 1 \left(\widetilde{\bs h}^\top \widetilde{\bs G}\right) \in \Rbb^{N \times N}$. Using Lemma~\ref{lemma:sample_u}, 
   \begin{equation*}
       \langle  \widehat{F}_i, \widehat{u}_j\rangle_\Hcal = \frac{1}{\lambda} \langle \widehat{F}_i, \phi^{(\bs \alpha)}(\bs   z_j) \rangle_\Hcal - \frac{1}{\lambda} \langle \bs \Psi^\ast\widehat{F}_i, \bs \gamma_j \rangle_{\Rbb^N} = \frac{1}{\lambda} \left(\beta_{i,j} - \bs \gamma_j^\top\bs v_i\right),
   \end{equation*}
   where $\beta_{i,j} \isdef \langle \widehat F_i, \phi^{(\bs \alpha)}(\bs   z_j) \rangle_\Hcal$. Setting $\bs S\isdef \big[\langle  \widehat{F}_i, \widehat{u}_j\rangle_\Hcal\big]_{i,j=1}^{N,n} \in \Rbb^{N \times n}$, we obtain $\bs S = \frac{1}{\lambda}(\bs B - \bs V^\top \bs \Lambda)$. Finally, from the definition of $\widehat{\bs \Omega}_\lambda$ in \eqref{eq:Omega}, we obtain $\widehat{\bs \Omega}_\lambda = \frac{1}{N} \bs S^\top \bs S \in \Rbb^{n \times n}$.
\end{proof}

\section{Auxiliary lemmas for Section~\ref{subsubsec:test_stat}}\label{appendix:aux_proofs}

For this section, we set $\Sbb^n_{++}$ to be the space of symmetric, positive definite $n\times n$ matrices.
\begin{lemma}[Continuity of the projection in the metric]\label{lemma:continuity_projection}
    Let $\Mcal\subset \Rbb^n$ be a non-empty, closed, convex cone. Then for any sequence $(\bs M_{k})_{k \in \Nbb} \subset \Sbb^n_{++}$ converging to $\bs M \in \Sbb^n_{++}$, it holds, 
    \begin{equation}
        \underset{k \to \infty}{\lim} \, \, \Pi^{\bs M_k}_{\Mcal}(\bs x) = \Pi^{\bs M}_\Mcal(\bs x) \quad \text{for any} \quad \bs x \in \Rbb^n.
    \end{equation}
\end{lemma}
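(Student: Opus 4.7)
The plan is to establish the claim by a standard subsequence-and-uniqueness argument driven by the variational characterization of the projection in Theorem~\ref{thm:projection_theorem}. Set $\bs p_k \isdef \Pi^{\bs M_k}_\Mcal(\bs x)$ and $\bs p \isdef \Pi^{\bs M}_\Mcal(\bs x)$; each is well-defined and unique because $(\Rbb^n, \langle \cdot, \cdot\rangle_{\bs M_k})$ and $(\Rbb^n, \langle \cdot, \cdot\rangle_{\bs M})$ are finite-dimensional Hilbert spaces and $\Mcal$ is non-empty, closed, and convex, so Theorem~\ref{thm:projection_theorem} applies verbatim. The overall scheme is to show that $\{\bs p_k\}$ is Euclidean-bounded and that every one of its Euclidean cluster points equals $\bs p$, whence the full sequence converges to $\bs p$.

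For the boundedness step, I would exploit that $\bs M_k \to \bs M$ with $\bs M \in \Sbb^n_{++}$ forces the spectra of $\bs M_k$ to lie eventually in a compact subinterval of $(0,\infty)$, so the norms $\|\cdot\|_{\bs M_k}$ are uniformly equivalent to the Euclidean norm for all large $k$. Since $\Mcal$ is a convex cone we have $\bs 0 \in \Mcal$, and the defining minimization of the projection gives $\|\bs p_k - \bs x\|_{\bs M_k}^2 \leq \|\bs x\|_{\bs M_k}^2$; expanding and applying Cauchy--Schwarz yields $\|\bs p_k\|_{\bs M_k} \leq 2\|\bs x\|_{\bs M_k}$, which is bounded in $k$ since $\bs M_k \to \bs M$. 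Transferring via the uniform spectral equivalence produces a genuine Euclidean bound on $\{\bs p_k\}$.

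To pin down cluster points, I would extract (via Bolzano--Weierstrass) a convergent subsequence $\bs p_{k_j} \to \bs p^\ast$; closedness of $\Mcal$ gives $\bs p^\ast \in \Mcal$. The variational inequality from Theorem~\ref{thm:projection_theorem}, written in the $\bs M_{k_j}$-inner product, reads $(\bs x - \bs p_{k_j})^\top \bs M_{k_j}(\bs v - \bs p_{k_j}) \leq 0$ for every $\bs v \in \Mcal$. Passing $j \to \infty$ and using $\bs M_{k_j} \to \bs M$ together with $\bs p_{k_j} \to \bs p^\ast$ yields $(\bs x - \bs p^\ast)^\top \bs M (\bs v - \bs p^\ast) \leq 0$ for every $\bs v \in \Mcal$. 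The uniqueness portion of Theorem~\ref{thm:projection_theorem}, applied in $(\Rbb^n, \langle \cdot, \cdot\rangle_{\bs M})$, then forces $\bs p^\ast = \bs p$. Since every convergent subsequence of the bounded sequence $\{\bs p_k\}$ has the same limit $\bs p$, the full sequence converges to $\bs p$.

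The argument is essentially routine; the only point that needs mild care is juggling the two families of norms, so that the $\bs M_k$-bound on $\bs p_k$ transfers to a genuine Euclidean bound (needed for Bolzano--Weierstrass) and so that the bilinear form $(\bs u, \bs v) \mapsto \bs u^\top \bs M_{k_j} \bs v$ can be passed to the limit in the variational inequality. Both are immediate consequences of the continuity of matrix multiplication together with the spectral compactness afforded by $\bs M \in \Sbb^n_{++}$.
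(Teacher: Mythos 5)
Your proposal is correct and follows essentially the same route as the paper's proof: establish uniform spectral bounds (hence uniform norm equivalence) from $\bs M_k \to \bs M \in \Sbb^n_{++}$, use the best-approximation property with $\bs 0 \in \Mcal$ to get a uniform Euclidean bound on the projections, extract a convergent subsequence via Bolzano--Weierstrass, pass the variational inequality of Theorem~\ref{thm:projection_theorem} to the limit, and invoke uniqueness of the projection to identify all cluster points. The paper gets the bound $\|\bs p_k\|_{\bs M_k}\le 2\|\bs x\|_{\bs M_k}$ via the triangle inequality rather than your expand-and-Cauchy--Schwarz route, and explicitly invokes Weyl's perturbation theorem for the eigenvalue convergence, but these are cosmetic differences.
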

\begin{proof}
Fix any arbitrary $\bs x \in \Rbb^n$. From \emph{Weyl's perturbation theorem}, see \citet[Corollary II.2.6]{Bhatia1997}, we obtain
\begin{equation*}
    |\lambda_j(\bs M_k) - \lambda_j(\bs M)| \leq \|\bs M_k - \bs M\|_{\op} \quad \text{for all} \quad j=1,\ldots,n.
\end{equation*}
Since $\|\bs M_k - \bs M\|_{\op} \to 0$ as $k \to \infty$, hence $\lambda_j(\bs M_k) \to \lambda_j(\bs M)$ as $k\to \infty$ for $1 \leq j \leq n$. Thus, the sequence of eigenvalues of $\bs M_k$ is bounded, i.e., there exist constants $c, C > 0$ such that: 
\begin{equation}\label{eq:eigenvalue_inequality}
    0 < c \leq \lambda_{\min}(\bs M_k) \leq \lambda_{\max}(\bs M_k) \leq C < \infty \quad \text{for all} \quad k \in \Nbb.
\end{equation}
The above inequality implies that for any $\bs z\in \Rbb^n$,
\begin{equation}\label{eq:norm_inequality}
    c\|\bs z\|_2^2 \leq \bs z^\top \bs M_k \, \bs z = \|\bs z\|_{\bs M_k}^2 \leq C\|\bs z\|_2^2 \quad \text{for all} \quad k \in \Nbb.
\end{equation}
Now, for any $k\in \Nbb$, consider the Hilbert space $\Rbb^n$ equipped with the inner product $\langle \cdot, \cdot\rangle_{\bs M_k}$, and set $\bs u_k \isdef \Pi^{\bs M_k}_\Mcal(\bs x)$. From the \emph{best approximation property} of a projection, see Theorem~\ref{thm:projection_theorem}, it follows that 
\begin{equation*}
    \|\bs u_k - \bs x\|_{\bs M_k} \leq \|\bs u - \bs x\|_{\bs M_k} \quad \text{for any} \quad \bs u \in \Mcal.
\end{equation*}
Choose $\bs u = \bs 0$ in the above inequality. Then, using the triangle inequality and \eqref{eq:norm_inequality} gives
\begin{equation*}
    \|\bs u_k\|_{\bs M_k} \leq \|\bs x\|_{\bs M_k} + \|\bs u_k - \bs x\|_{\bs M_k} \leq 2\|\bs x\|_{\bs M_k} \leq 2 \sqrt{C}\|\bs x\|_2 \quad \text{for all     } k \in \Nbb.
\end{equation*}
Using \eqref{eq:norm_inequality} again and the above inequality, we have 
\begin{equation*}
    \|\bs u_k\|_2 \leq \frac{1}{\sqrt{c}}\|\bs u_k\|_{\bs M_k} \leq 2 \sqrt{\frac{C}{c}} \, \|\bs x\|_2 \quad \text{for all     }  k \in \Nbb.
\end{equation*}
Thus, $\bs u_k$ is bounded uniformly in $k$ (in the Euclidean norm). Hence, by the \emph{Bolzano-Weierstrass theorem}, there exists a convergent subsequence $\bs u_{k_\ell} \to {\bs u}^\ast$, where we consider the convergence in the topology induced by the usual Euclidean norm. From Theorem~\ref{thm:projection_theorem}, the projection $\bs u_{k_\ell}$ uniquely satisfies
\begin{equation*}
    \Bigl\langle \bs x - \bs u_{k_\ell}, \bs v - \bs u_{k_\ell} \Bigr\rangle_{\bs M_{k_\ell}} = \Bigl\langle \bs x - \bs u_{k_\ell}, \bs M_{k_\ell} \left(\bs v - \bs u_{k_\ell}\right)\Bigr\rangle_{\Rbb^n} \leq 0 \quad \text{for any     } \bs v\in \Mcal.
\end{equation*}
We can split the expression into two terms as 
    \begin{align*}
        \underbrace{\Bigl\langle \bs x - \bs u_{k_\ell}, \bs M\left(\bs v - \bs u_{k_\ell}\right)\Bigr\rangle_{\Rbb^n}}_{(I)} + \underbrace{\Bigl\langle \bs x - \bs u_{k_\ell}, \left(\bs M_{k_\ell} - \bs M\right)\left(\bs v - \bs u_{k_\ell}\right)\Bigr\rangle_{\Rbb^n}}_{(II)}.
    \end{align*}
By the continuity of the bilinear form induced by the Euclidean inner product, 
\begin{equation*}
    (I) \to \Bigl\langle \bs x - \bs u^\ast, \bs M (\bs v - \bs u^\ast)\Bigr\rangle_{\Rbb^n} \quad \text{as} \quad \ell \to \infty. 
\end{equation*}
Now, for any fixed $\bs x\in \Rbb^n, \, \bs v \in \Mcal$, the terms  $(\bs x - \bs u_{k_\ell}), \, (\bs v - \bs u_{k_\ell})$ are bounded since $\bs u_{k_\ell}$ is a convergent sequence. Hence, 
\begin{equation*}
    \Bigl|\Bigl\langle \bs x - \bs u_{k_\ell}, \left(\bs M_{k_\ell} - \bs M\right)\left(\bs v - \bs u_{k_\ell}\right)\Bigr\rangle_{\Rbb^n}\Bigr| \leq \|\bs M_{k_\ell} - \bs M\|_{\op} \, \|\bs x - \bs u_{k_\ell}\|_2 \, \|\bs v - \bs u_{k_\ell}\|_2.
\end{equation*}
Taking the limit as $\ell \to \infty$, we have $(II) \to 0$. Hence, for any $\bs v \in \Mcal$, it holds, 
\begin{equation*}
    \underset{\ell \to \infty}{\lim} \, \,   \Bigl\langle \bs x - \bs u_{k_\ell}, \bs v - \bs u_{k_\ell} \Bigr\rangle_{\bs M_{k_\ell}} = \Bigl\langle \bs x - \bs u^\ast, \bs M (\bs v - \bs u^\ast)\Bigr\rangle_{\Rbb^n} = \langle \bs x - \bs u^\ast, \bs v - \bs u^\ast \rangle_{\bs M} \leq 0.
\end{equation*}
But this is the inequality characterizing the unique projection $\bs u^\ast = \Pi^{\bs M}_\Mcal(\bs x)$, see Theorem~\ref{thm:projection_theorem}. Hence, the set of subsequential limits of $\bs u_k$ is unique and any convergent subsequence of $(\bs u_k)_{k \in \Nbb}$ has the same limit $\bs u^\ast$. This property and the fact that $(\bs u_k)_{k\in\Nbb}$ is bounded in the Euclidean norm imply that the sequence $\bs u_k$ converges to the limit $\bs u^\ast$. Since $\bs x \in \Rbb^n$ is arbitrary, therefore, 
\begin{equation*}
     \underset{k\to \infty}{\lim} \, \, \Pi^{\bs M_k}_\Mcal(\bs x) = \Pi^{\bs M}_\Mcal(\bs x) \quad \text{for any     } \bs x\in \Rbb^n,
\end{equation*}
where the convergence is in the usual topology on $\Rbb^n$ generated by the Euclidean norm. 
\end{proof}

\begin{lemma}[Joint continuity of projection]\label{lemma:joint_continuity_projector}
    Let $\Mcal \subset \Rbb^n$ be a non-empty, closed, convex cone. The map $f\colon\Rbb^n \times \Sbb^n_{++} \to \Rbb^n$ defined as 
    \begin{equation}
        f(\bs x, \bs M) \isdef \Pi^{\bs M}_\Mcal(\bs x) \quad \text{for any     } (\bs x, \bs M) \in \Rbb^n \times \Sbb^n_{++}
    \end{equation}
    
    is jointly continuous.
\end{lemma}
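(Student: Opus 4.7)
The plan is to reduce joint continuity to the two separately established facts: continuity of the projection in the metric (Lemma~\ref{lemma:continuity_projection}) and continuity of the projection in its argument (which is standard because projections onto closed convex sets in Hilbert spaces are non-expansive). Concretely, I fix a sequence $(\bs x_k, \bs M_k) \to (\bs x, \bs M)$ in $\Rbb^n \times \Sbb^n_{++}$ and use the triangle inequality in the Euclidean norm:
\begin{equation*}
\|\Pi^{\bs M_k}_\Mcal(\bs x_k) - \Pi^{\bs M}_\Mcal(\bs x)\|_2 \leq \|\Pi^{\bs M_k}_\Mcal(\bs x_k) - \Pi^{\bs M_k}_\Mcal(\bs x)\|_2 + \|\Pi^{\bs M_k}_\Mcal(\bs x) - \Pi^{\bs M}_\Mcal(\bs x)\|_2.
\end{equation*}
The second summand involves a fixed $\bs x$ and a varying metric, so it vanishes as $k\to\infty$ directly from Lemma~\ref{lemma:continuity_projection}. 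The whole argument thus reduces to controlling the first summand.

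For the first summand I would invoke the non-expansiveness of the projection onto a closed convex set in the Hilbert space $(\Rbb^n, \langle\cdot,\cdot\rangle_{\bs M_k})$, which gives
\begin{equation*}
\|\Pi^{\bs M_k}_\Mcal(\bs x_k) - \Pi^{\bs M_k}_\Mcal(\bs x)\|_{\bs M_k} \leq \|\bs x_k - \bs x\|_{\bs M_k}.
\end{equation*}
To convert this $\bs M_k$-norm estimate into a Euclidean estimate, I use Weyl's perturbation theorem exactly as in the proof of Lemma~\ref{lemma:continuity_projection}: because $\bs M \in \Sbb^n_{++}$ has strictly positive eigenvalues and $\|\bs M_k - \bs M\|_{\op}\to 0$, there exist constants $0<c\leq C<\infty$ and an index $k_0$ such that $c \leq \lambda_{\min}(\bs M_k) \leq \lambda_{\max}(\bs M_k) \leq C$ for all $k\geq k_0$. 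This yields the uniform norm equivalence $\sqrt{c}\,\|\bs y\|_2 \leq \|\bs y\|_{\bs M_k} \leq \sqrt{C}\,\|\bs y\|_2$ and hence
\begin{equation*}
\|\Pi^{\bs M_k}_\Mcal(\bs x_k) - \Pi^{\bs M_k}_\Mcal(\bs x)\|_2 \leq \sqrt{C/c}\,\|\bs x_k - \bs x\|_2 \longrightarrow 0.
\end{equation*}

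The main obstacle is precisely securing the constants $c, C$ uniformly in $k$: without a positive lower bound on $\lambda_{\min}(\bs M_k)$, the non-expansiveness estimate in the $\bs M_k$-norm would not translate into a useful Euclidean bound. This is the reason the lemma is stated on the open cone $\Sbb^n_{++}$ rather than on all positive semidefinite matrices, and it is exactly what Weyl's theorem delivers in a neighborhood of the positive definite limit $\bs M$. Combining the two terms then gives $f(\bs x_k, \bs M_k) \to f(\bs x, \bs M)$ for every convergent sequence, which is equivalent to joint continuity of $f$ on $\Rbb^n \times \Sbb^n_{++}$.
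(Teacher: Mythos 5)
Your proof is correct and takes essentially the same approach as the paper: the identical triangle-inequality decomposition, non-expansiveness of the metric projection onto a closed convex set (the paper cites this as Lipschitz continuity from Bauschke--Combettes), the uniform eigenvalue bounds via Weyl's theorem established in the proof of Lemma~\ref{lemma:continuity_projection}, and that same lemma to dispatch the second summand.
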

\begin{proof}
Consider any sequence $(\bs x_k, \bs M_k)_{k\in\Nbb} \subset \Rbb^n \times \Sbb^{n}_{++}$ that converges to a fixed $(\bs x, \bs M) \in \Rbb^n \times \Sbb^{n}_{++}$ in the usual product topology generated by the respective norms. Therefore,
\begin{equation}\label{eq:projection_inequality}
    \|\Pi^{\bs M_k}_\Mcal(\bs x_k) - \Pi^{\bs M}_\Mcal(\bs x)\|_2 \leq \|\Pi^{\bs M_k}_\Mcal(\bs x_k) - \Pi^{\bs M_k}_\Mcal(\bs x)\|_2 + \|\Pi^{\bs M_k}_\Mcal(\bs x) - \Pi^{\bs M}_\Mcal(\bs x)\|_2.
\end{equation}
Since $\Mcal$ is a non-empty, closed, convex cone, the projection $\Pi^{\bs M_k}_\Mcal(\cdot)$ is Lipschitz with constant $1$ in the  $\bs M_k$-norm, see \citet[Chapter 4]{Bauschke2017}. Hence, 
   \begin{equation*}
       \|\Pi^{\bs M_k}_\Mcal(\bs x_k) - \Pi^{\bs M_k}_\Mcal(\bs x)\|_{\bs M_k} \leq \|\bs x_k - \bs x\|_{\bs M_k} \quad \text{for any     }  k\in \Nbb.
\end{equation*}
Using \eqref{eq:norm_inequality} in the above, we obtain that for all $k \in \Nbb$, 
\begin{equation*}
    \|\Pi^{\bs M_k}_\Mcal(\bs x_k) - \Pi^{\bs M_k}_\Mcal(\bs x)\|_{2} \leq \frac{1}{\sqrt{c}} \|\Pi^{\bs M_k}_\Mcal(\bs x_k) - \Pi^{\bs M_k}_\Mcal(\bs x)\|_{\bs M_k} \leq \frac{1}{\sqrt{c}} \|\bs x_k - \bs x\|_{\bs M_k}  \leq \sqrt{\frac{C}{c}} \|\bs x_k - \bs x\|_{2}. 
\end{equation*}
Taking the limit $k \to \infty$, it follows that $\|\Pi^{\bs M_k}_\Mcal(\bs x_k) - \Pi^{\bs M_k}_\Mcal(\bs x)\|_{2} \to 0$ since $\|\bs x_k - \bs x\|_2 \to 0$. Finally, from Lemma~\ref{lemma:continuity_projection}, $\|\Pi^{\bs M_k}_\Mcal(\bs x) - \Pi^{\bs M}_\Mcal(\bs x)\|_2 \to 0$ as $k \to \infty$. Hence, the function $f$ mapping $(\bs x, \bs M) \mapsto \Pi^{\bs M}_\Mcal(\bs x)$ is jointly continuous on $\Rbb^n \times \Sbb^n_{++}$.
\end{proof}

\begin{lemma}[Continuity of squared projection error]\label{lemma:minimizer_continuity}
    Let $\Mcal \subset \Rbb^n$ be a non-empty, closed, convex cone. The map $g\colon \Rbb^n \times \Sbb^n_{++} \to \Rbb$ defined as 
    \begin{equation}
        g(\bs x, \bs M) \isdef \|\bs x - \Pi^{\bs M}_\Mcal(\bs x)\|_{\bs M}^2 \quad \text{for any     } (\bs x, \bs M) \in \Rbb^n \times \Sbb^n_{++}
    \end{equation}
    is jointly continuous.
\end{lemma}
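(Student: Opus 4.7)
The plan is to express $g$ as a composition of jointly continuous maps and then invoke Lemma~\ref{lemma:joint_continuity_projector} together with the continuity of the bilinear form $(\bs y, \bs M) \mapsto \bs y^\top \bs M \bs y$. Concretely, I would fix an arbitrary sequence $(\bs x_k, \bs M_k) \to (\bs x, \bs M)$ in $\Rbb^n \times \Sbb^n_{++}$ (in the product topology generated by the Euclidean norm on $\Rbb^n$ and, say, the operator norm on $\Sbb^n$) and show that $g(\bs x_k, \bs M_k) \to g(\bs x, \bs M)$.

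First, by Lemma~\ref{lemma:joint_continuity_projector}, the projection satisfies $\Pi^{\bs M_k}_\Mcal(\bs x_k) \to \Pi^{\bs M}_\Mcal(\bs x)$ in the Euclidean norm. Setting $\bs r_k \isdef \bs x_k - \Pi^{\bs M_k}_\Mcal(\bs x_k)$ and $\bs r \isdef \bs x - \Pi^{\bs M}_\Mcal(\bs x)$, the continuity of vector subtraction yields $\bs r_k \to \bs r$ in $\Rbb^n$. Next, I would write
\begin{equation*}
  g(\bs x_k, \bs M_k) = \bs r_k^\top \bs M_k \, \bs r_k,
\end{equation*}
and observe that the map $(\bs y, \bs M) \mapsto \bs y^\top \bs M \bs y$ is a polynomial in the coordinates of $\bs y$ and the entries of $\bs M$, hence jointly continuous on $\Rbb^n \times \Sbb^n$.

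To make the final passage fully explicit, I would estimate
\begin{equation*}
  \bigl|\bs r_k^\top \bs M_k \bs r_k - \bs r^\top \bs M \bs r\bigr|
  \le \bigl|(\bs r_k - \bs r)^\top \bs M_k \bs r_k\bigr|
  + \bigl|\bs r^\top \bs M_k (\bs r_k - \bs r)\bigr|
  + \bigl|\bs r^\top (\bs M_k - \bs M) \bs r\bigr|,
\end{equation*}
each of which vanishes as $k \to \infty$: the first two terms because $\|\bs r_k - \bs r\|_2 \to 0$ while $\|\bs M_k\|_{\op}$ is bounded (being a convergent sequence in operator norm) and $\bs r_k, \bs r$ are bounded, and the third because $\|\bs M_k - \bs M\|_{\op} \to 0$. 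Since $(\bs x, \bs M)$ was arbitrary, joint continuity of $g$ follows. There is essentially no obstacle here beyond invoking the previous lemma; the sole point requiring mild care is the uniform boundedness of $\|\bs M_k\|_{\op}$, which is immediate from convergence of $\bs M_k$.
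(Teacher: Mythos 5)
Your proof is correct and follows essentially the same route as the paper: both rely on Lemma~\ref{lemma:joint_continuity_projector} for joint continuity of the projection and on the continuity of the quadratic form $(\bs y,\bs M)\mapsto \bs y^\top\bs M\,\bs y$. You simply unfold the paper's composition-of-continuous-maps argument into an explicit sequential estimate with the three-term bound, which adds detail but not a new idea.
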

\begin{proof}
Define the following maps 
    \begin{equation*}
    \begin{split}
         f_1(\bs x, \bs M) &\isdef (\bs x, \bs M, \Pi^{\bs M}_\Mcal(\bs x)), \qquad f_2(\bs x, \bs M, \bs y) \isdef \langle\bs x - \bs y, \bs M (\bs x - \bs y)\rangle_{\Rbb^n}.
    \end{split}
    \end{equation*}
From Lemma~\ref{lemma:joint_continuity_projector}, we can conclude that $f_1$ is continuous, while $f_2$ is continuous from the continuity of the bilinear form induced by the Euclidean inner product on $\Rbb^n$. Hence,
\begin{equation*}
    g(\bs x, \bs M) = \|\bs x - \Pi^{\bs M}_\Mcal(\bs x)\|_{\bs M}^2 = \Bigl\langle \bs x - \Pi^{\bs M}_\Mcal(\bs x), \bs M \big(\bs x - \Pi^{\bs M}_\Mcal(\bs x)\big) \Bigr \rangle_{\Rbb^n} = f_2  \circ f_1(\bs x, \bs M)
\end{equation*}
is jointly continuous in its arguments as a composition of continuous maps.
\end{proof}

\begin{proof}[Proof of Theorem~\ref{thm:test_statistic}]
 By the continuity of the inversion operation on $\Sbb^n_{++}$, we have from the consistency of $\widehat{\bs \Omega}_\lambda$ in Theorem~\ref{thm:consistency} that 
 \begin{equation}\label{eq:consistency_inverse}
     \widehat{\bs \Omega}_\lambda^{-1} \overset{a.s.}{\longrightarrow} \bs \Omega_\lambda^{-1} \implies \widehat{\bs \Omega}_\lambda^{-1} \overset{\Pbb}{\longrightarrow} \bs \Omega_\lambda^{-1} \quad \text{as} \quad N \to \infty. 
 \end{equation}
Now, define $\bs Z_N \isdef \sqrt{N} \widehat{\bs \theta}$. Under the least favorable null $H_0: \bs \theta = \bs 0$, we have from Proposition~\ref{proposition:asymptotic_convexity},
\begin{equation*}
       \bs Z_N \overset{d}{\longrightarrow} \bs Z \sim \Ncal_n(\bs 0, \bs \Omega_\lambda).
   \end{equation*}
Hence, we have from \citet[Theorem 2.7]{Vaart1998}, for asymptotically large $N$, 
\begin{equation*}
    \bigl(\bs Z_N, \widehat{\bs \Omega}_\lambda^{-1}\bigr) \overset{d}{\longrightarrow} \bigl(\bs Z, \bs \Omega_\lambda^{-1}\bigr) \quad \text{on} \quad \Rbb^n \times \Sbb^n_{++}.
\end{equation*}
   Since $\Rbb^n_{+}$ is a closed convex cone, $\sqrt{N} \Rbb^{n}_{+} = \Rbb^n_{+}$ for any $N\geq 1$. Hence, 
   \begin{align*}
       W_N &= \underset{\bs c \in \Rbb^{n}_{+}}{\min} \, \, N (\widehat{\bs \theta} - \bs c)^\top \widehat{\bs \Omega}_\lambda^{-1} (\widehat{\bs \theta} - \bs c) \\
       &= \underset{\bs c \in \Rbb^n_{+}}{\min} \, \,  (\sqrt{N}\widehat{\bs \theta} - \sqrt{N}\bs c)^\top \widehat{\bs \Omega}_\lambda^{-1} (\sqrt{N}\widehat{\bs \theta} - \sqrt{N}\bs c) \\
       &= \underset{\bs u \in \Rbb^{n}_{+}}{\min} \, \, (\bs Z_N - \bs u)^\top \widehat{\bs \Omega}_\lambda^{-1} (\bs Z_N - \bs u)\\
       &= \|\bs Z_N - \Pi^{\widehat{\bs \Omega}_\lambda^{-1}}_{\Rbb^n_{+}}(\bs Z_N)\|_{\widehat{\bs \Omega}_\lambda^{-1}}^2.
   \end{align*}
From Lemma~\ref{lemma:minimizer_continuity}, $W_N$ is continuous as a function of $\bigl(\bs Z_N, \widehat{\bs \Omega}_\lambda^{-1}\bigr)$. Hence, by the CMT,
\begin{equation*}
    W_N \overset{d}{\longrightarrow} W \isdef \|\bs Z - \Pi^{\bs \Omega_\lambda^{-1}}_{\Rbb^n_{+}}(\bs Z)\|^2_{{\bs \Omega}_\lambda^{-1}}.
\end{equation*}
From (3) of Proposition~\ref{prop:chi_bar_squared}, it follows, $W \sim \Bar{\chi}^2(\bs \Omega_\lambda, \big(\Rbb^n_{+}\bigr)^\circ)$. Moreover, using (1) of Proposition~\ref{prop:chi_bar_squared}, we get
\begin{equation*}
    \bs Z^\top {\bs \Omega}_\lambda^{-1} \bs Z = \Bar{\chi}^2(\bs \Omega_\lambda, \Rbb^n_{+}) + \Bar{\chi}^2(\bs \Omega_\lambda, \big(\Rbb^n_{+}\bigr)^\circ)
\end{equation*}
Since $\bs Z \sim \Ncal_n(\bs 0, \bs \Omega_\lambda)$, hence $\bs Z^\top \bs \Omega_\lambda^{-1} \bs Z \sim \chi^2_n$. Thus, $\Bar{\chi}^2(\bs \Omega_\lambda, \big(\Rbb^n_{+}\bigr)^\circ) =  \chi^2_n - \Bar{\chi}^2(\bs \Omega_\lambda, \Rbb^n_{+})$, where the equality holds almost surely.
\end{proof}

\section{Implementation}\label{appendix:implementation}
In this section, derivatives are indexed by the complete set \(\Acal_s=\{\bs\alpha:\lvert\bs\alpha\rvert\le s\}\). For cases where a subset \(\Acal\subset\Acal_s\) is employed, we set \(w_{\bs\alpha}\equiv 0\) for any \(\bs\alpha\notin\Acal\); thus, all subsequent statements and formulations remain unchanged.
\subsection{Matrix formulation} 
For any $\bs \alpha \in \Acal_s$, define the \emph{row vectors} of basis functions
\begin{equation}\label{eq:basis_functions}
    \bs \Phi^{(\bs \alpha)} \isdef [\phi^{(\bs \alpha)}(\bs x_1), \ldots, \phi^{(\bs \alpha)}(\bs x_N)].
\end{equation}
Denote the canonical basis vectors of $\Rbb^d$ by $\{\bs e_j\}_{j=1}^d$ and consider the following ordering of $\Acal_s$:
\begin{equation}\label{eq:ordering}
    \Bigl[1, \alpha_1, \ldots, \alpha_d, \alpha_1^2, \alpha_1\alpha_2, \ldots, \alpha_d^2, \ldots, \alpha_1^s, \ldots, \alpha_d^s\Bigr].
\end{equation}
We stack basis functions of the optimal subspace $\Hcal_X$ as the row vector:
\begin{equation}\label{eq:Phi}
     \bs \Phi \isdef \Big[ \bs \Phi^{(\bs \alpha)}\Big]_{\bs \alpha \in \Acal_s},
\end{equation}
where we consider the ordering as in \eqref{eq:ordering}. So, $\bs \Phi$ has $M \isdef N m_s$ columns. We now define the corresponding \emph{kernel matrix} by taking the pairwise inner product:
\begin{equation}\label{eq:kernel_matrix}
    \bs K \isdef \langle \bs \Phi^\top, \bs \Phi \rangle_\Hcal \in \Rbb^{M \times M}.
\end{equation}
Since we have the functional form of $\widehat h_\lambda$ as in \eqref{eq:optimal_h}, we can now write: 
\begin{equation}\label{eq:h_formulation}
    \widehat h_\lambda = \bs \Phi \widehat{\bs c}, \qquad \widehat{\bs c} \isdef \bigl[\widehat c_{i, \bs \alpha}\bigr] \in \Rbb^M.
\end{equation}
Note that we follow the same ordering that is compatible with the ordering of the basis functions in $\bs \Phi$. We need to formulate the system of equations that solves for the optimal coefficients $\widehat{\bs c}$. Hence, we now proceed to write each term in Problem~\ref{eq:empirical_problem_RKHS} in terms of the matrix formulation which will lead to the desired system of equations.

Towards that end, we seek to write $\psi_i$ from \eqref{eq:psi_i} in terms of $\bs \Phi$. Consider the following construction: for any $\bs \alpha \in \Acal_s$, define
\begin{equation}\label{eq:A_alpha}
    \bs A^{(\bs \alpha)} \isdef \operatorname{diag}\left(w_{\bs \alpha}(\bs z_1), \ldots, w_{\bs \alpha}(\bs z_N)\right) \in \Rbb^{N \times N}. 
\end{equation}
We now define the block matrix of coefficients:
\begin{equation}\label{eq:A}
  \bs A \isdef \begin{bmatrix}
      \bs A^{(\bs \alpha_1)} \\
      \vdots \\
      \bs A^{(\bs \alpha_{m_s})}
  \end{bmatrix} \in \Rbb^{M \times N}, \qquad M = N m_s,
\end{equation}
where we use the same ordering as in \eqref{eq:ordering}. Define the column vectors $\bs a_i \isdef [\bs A_{:, i}] \in \Rbb^M$ for $1 \leq i \leq N$, that satisfies
\begin{equation}
   \psi_i = \sum_{\bs \alpha \Acal_s} w_{\bs \alpha}(\bs z_i) \phi^{(\bs \alpha)}(\bs x_i) = \bs \Phi \bs a_i.
\end{equation}
Consider the mean vector  
\begin{equation*}
    \Bar{\bs a} \isdef \widehat{\Ebb}[\bs a_i] = \frac{1}{N} \sum_{i=1}^N \bs a_i \in \Rbb^{M},
\end{equation*}
and the centered vectors 
\begin{equation*}
    \widetilde{\bs a}_i \isdef \bs a_i - \Bar{\bs a} \quad \text{for} \quad 1 \leq i \leq N.
\end{equation*}
Hence, we can write $\widehat\mu = \widehat{\Ebb}[\psi_i] = \widehat{\Ebb}[\bs \Phi \bs a_i] = \bs \Phi \Bar{\bs a}$ such that:
\begin{equation*}
\langle h, \widehat\mu \rangle_\Hcal = \langle \bs c^\top \bs \Phi^\top, \bs \Phi \Bar{\bs a}\rangle_\Hcal = \bs c^\top \bs K \Bar{\bs a}.
\end{equation*} 
Now, it holds: 
\begin{equation*}
    \langle h, \psi_i - \widehat\mu\rangle_\Hcal = \langle \bs c^\top \bs \Phi^\top, \bs \Phi \widetilde{\bs a}_i \rangle_\Hcal = \bs c^\top \bs K \widetilde{\bs a}_i.
\end{equation*}
Therefore, the variance term reads
\begin{equation*}
    \widehat{\Ebb}[\langle h, \psi_i - \widehat\mu\rangle_\Hcal^2] = \widehat{\Ebb}\left[(\bs c^\top \bs K \widetilde{\bs a}_i)^2\right] = \bs c^\top \bs K \bs \Sigma \bs K \bs c,
\end{equation*}
where 
\begin{equation*}
    \bs \Sigma \isdef \widehat{\Ebb}\left[\widetilde{\bs a}_i\widetilde{\bs a}_i^\top\right] = \frac{1}{N}\sum_{i=1}^N \widetilde{\bs a}_i \widetilde{\bs a}_i^\top.
\end{equation*}
Finally, the regularization term can be written as 
\begin{equation*}
    \langle h, h \rangle_\Hcal = \langle \bs c^\top \bs \Phi^\top, \bs \Phi  \bs c \rangle_\Hcal = \bs c^\top \bs K \bs c.
\end{equation*}
Having computed the above terms, the matrix formulation of Problem~\ref{eq:empirical_problem_RKHS} is given exactly by Problem~\ref{eq:optimization_formulation} below:
\begin{equation}\label{eq:optimization_formulation}
\begin{split}
   \widehat{\bs c} = \,\, \underset{\bs c\in \Rbb^M}{\operatorname{argmin}} \,\,  -\bs c^\top \bs K \Bar{\bs a} + \frac{1}{2} \bs c^\top \bs K \bs \Sigma \bs K \bs c + \frac{\lambda}{2} \bs c^\top \bs K \bs c.
\end{split}
\end{equation}

\subsection{Efficient computation} 
Problem~\ref{eq:optimization_formulation} is convex, and the first-order conditions read
\begin{equation}\label{eq:c}
  (\bs K \bs \Sigma \bs K + \lambda \bs K) \, \widehat{\bs c} = \bs K \, \Bar{\bs a}.
\end{equation}
Solving \eqref{eq:c} for $\widehat{\bs c}$ can be computationally demanding and memory-intensive, especially when we have a large number of observations/large dimension/large number of derivative evaluations, since $M$ depends on $N, d, s$. In particular, the computational cost of solving \eqref{eq:c} in a naive way is cubic $\Ocal(M^3)$, while the formation and storage of the full kernel matrix $\bs K$ is $\Ocal(M^2)$ in memory. As a result, we need an efficient way to solve \eqref{eq:c} such that we can lessen our computational and storage requirements. This is facilitated by \textit{pivoted Cholesky decomposition} of \citet{harbrecht2012low}. We show here how to leverage the pivoted Cholesky of $\bs K$ to reduce the computational burden. In addition, we remark that this algorithm does not necessitate forming the full kernel matrix and thus also helps in reducing the storage cost. 

We first consider the pivoted Cholesky decomposition of $\bs K$ as:
\begin{equation*}
    \bs K \approx \bs L \bs L^\top, \qquad \bs L \in \Rbb^{M \times m}, \quad m \ll M.
\end{equation*}
From this algorithm, we have the following relations between the biorthogonal matrix and the Cholesky factor, see \citet[Theorem 4.1]{filipovic2021adaptive} 
\begin{equation*}
    \bs K \bs B = \bs L, \qquad \bs B^\top \bs L= \bs L^\top \bs B = \bs I_m, \qquad \bs B \in \Rbb^{M \times m}.
\end{equation*}
Premultiplying both sides of \eqref{eq:c} with $\bs B^\top$, 
\begin{equation*}
    (\bs L^\top \bs \Sigma \bs L \bs L^\top + \lambda \bs L^\top) \, \widehat{\bs c} = \bs L^\top \Bar{\bs a}.
\end{equation*}
Next, we define the vectors
\begin{equation*}
    \bs L^\top \,\widehat{\bs c} = \widetilde{\bs c}, \qquad \bs L^\top \Bar{\bs a} = \widetilde{\bs b}.
\end{equation*}
Hence, we can now write:
\begin{equation}\label{eq:problem_formulation_reduced}
    (\bs L^\top \bs \Sigma \bs L + \lambda \bs I) \, \widetilde{\bs c} = \widetilde{\bs b}.
\end{equation}
We solve the above equation for $\widetilde{\bs c}$ and then use $\widehat{\bs c} = \bs B \, \widetilde{\bs c}$ to get back $\widehat{\bs c}$.
\begin{remark}
 Solving \eqref{eq:problem_formulation_reduced} costs only $\Ocal(m^3)$, which is considerably cheaper than $\Ocal(M^3)$ when $m\ll M$. Since $\bs B^\top \bs L = I_m$, the map $\bs L\bs B^\top$ is a projector onto $\operatorname{Im}(\bs L)$. Accordingly, \eqref{eq:problem_formulation_reduced} corresponds to seeking an approximate solution in the rank-$m$ subspace $\operatorname{Im}(\bs B)$, and $\widehat{\bs c}=\bs B\widetilde{\bs c}$ is the minimizer within that chosen subspace.
\end{remark}

\section{Construction of test statistic}\label{appendix:test_statistic}
In this section, we exhibit how to construct the test statistic to test the shape constraints of $h_\lambda$ jointly on the finite grid $\Zcal = \{\bs   z_j: 1 \leq j \leq n\}$, leveraging the sign of the derivative evaluation. Here $\bs z_j\in\Xcal$ denotes an input location (not an observation pair $(\bs x_i,y_i)$). We first define the \emph{row vector} of functions as follows:
\begin{equation}\label{eq:Phi_Gcal}
    \bs \Phi_{\Zcal} \isdef [\phi^{(\bs \alpha)}(\bs   z_1), \ldots, \phi^{(\bs \alpha)}(\bs   z_n)],
\end{equation}
and the corresponding kernel matrix for the test grid 
\begin{equation}\label{eq:kernel_matrix_test_grid}
    \bs K_{\Zcal} \isdef \langle \bs \Phi^\top, \bs \Phi_\Zcal\rangle_\Hcal \in \Rbb^{M \times n},
\end{equation}
where $\bs \Phi$ is defined in \eqref{eq:Phi}. We write the row vector consisting of $\psi_i$ for $1 \leq i \leq N$ as:
\begin{equation}\label{eq:Psi}
  \bs \Psi \isdef [\psi_1, \ldots, \psi_{N}] = \bs \Phi [\bs a_1, \ldots, \bs a_{N}] = \bs \Phi \bs A, 
\end{equation}
where $\bs A = [\bs a_1, \ldots, \bs a_{N}] \in \Rbb^{M \times N}$ is the matrix from \eqref{eq:A}, whose columns are given by $\bs a_i$, see Appendix~\ref{appendix:implementation} for more details. The corresponding Gram matrix (in the $\bs \Psi$ basis) is: 
\begin{equation}\label{eq:Gram_matrix_test_grid}
   \bs G \isdef \langle \bs \Psi^\top, \bs \Psi \rangle_\Hcal = \bs A^\top \langle \bs \Phi^\top, \bs \Phi \rangle_\Hcal \bs A= \bs A^\top \bs K \bs A \in \Rbb^{N \times N}. 
\end{equation}
Define the \emph{centering matrix} $\bs H \isdef \bs I_N - \frac{1}{N} \bs 1 \bs 1^\top \in \Rbb^{N \times N}$, where we define the \emph{column vector} of ones $\bs 1 \isdef [1, \ldots, 1]^\top \in \Rbb^{N}$. Note that the matrix $\bs H$ is symmetric and idempotent, i.e., $\bs H^\top=\bs H$ and $\bs H^2=\bs H$. We can define the following \emph{row vector} of centered functions 
\begin{equation}\label{eq:centered_Psi}
    \widetilde{\bs \Psi} \isdef [\psi_1 - \widehat{\mu}, \ldots, \psi_{N}-\widehat{\mu}] = \bs \Psi \bs H,
\end{equation}
and the matrix
\begin{equation}\label{eq:G_tilde}
    \widetilde{\bs G} \isdef \langle \widetilde{\bs \Psi}^\top, \bs \Psi \rangle_\Hcal = \bs H \bs G.
\end{equation}
Now, we define the Gram matrix with respect to the centered basis functions as 
\begin{equation}\label{eq:G_Gcal}
\widetilde{\bs G}_\Zcal \isdef \langle \widetilde{\bs \Psi}^\top, \bs \Phi_\Zcal \rangle_\Hcal = \bs H \langle \bs \Psi^\top, \bs \Phi_\Zcal\rangle_\Hcal = \bs H \bs A^\top \langle \bs \Phi^\top, \bs \Phi_\Zcal\rangle_\Hcal = \bs H \bs A^\top \bs K_{\Zcal} \in \Rbb^{N \times n},
\end{equation}
and the sample-estimator $\widehat h_\lambda$ (in this basis) as
\begin{equation}\label{eq:h_tilde}
    \widetilde{\bs h} \isdef \left[\langle \psi_i - \widehat\mu, \widehat h_\lambda\rangle_\Hcal\right]_{i=1}^{N} = \langle \widetilde{\bs \Psi}^\top, \bs \Phi\rangle_\Hcal \, \widehat{\bs c} = \bs H \bs A^\top\langle \bs \Phi^\top, \bs \Phi\rangle_\Hcal \, \widehat{\bs c} = \bs H \bs A^\top \bs K \widehat{\bs c} \in \Rbb^{N}.
\end{equation}
 From Lemma~\ref{lemma:sample_covariance_action}, the action of the sample covariance operator $\widehat{\Sig}$, cp.\ \eqref{eq:moments_target_functional} may be realized as $\frac{1}{N} \widetilde{\bs \Psi} \widetilde{\bs \Psi}^\ast$ and thus, we can construct the sample analogue of $u_j$, that is, $\widehat u_j$ from \eqref{eq:u} as 
\begin{equation}\label{eq:sample_u}
    \widehat{u}_j = \widehat{\Sig}_\lambda^{-1} \phi^{(\bs \alpha)}(\bs   z_j) = \left(\frac{1}{N} \widetilde{\bs \Psi} \widetilde{\bs \Psi}^\ast + \lambda I\right)^{-1} \phi^{(\bs \alpha)}(\bs   z_j),
\end{equation}
which can be computed as $\widehat{u}_j = \frac{1}{\lambda} (\phi^{(\bs \alpha)}(\bs   z_j) - \bs \Psi \bs \gamma_j)$, see Lemma~\ref{lemma:sample_u}, where 
\begin{equation}\label{eq:alpha_j}
   \bs \gamma_j \isdef \frac{1}{N} \bs H \left( \lambda \bs I_N + \frac{1}{N} \bs H \bs G \bs H\right)^{-1} [\widetilde{\bs G}_{\Zcal}]_j \in \Rbb^N.
\end{equation}
Now, Lemma~\ref{lemma:compute_Omega} directly gives us a computational solution for constructing the finite-sample covariance estimator matrix $\widehat{\bs \Omega}_\lambda$ in closed-form. Having computed $\widehat{\bs \Omega}_\lambda$, we proceed to compute the test statistic $W_N$ from Theorem~\ref{thm:test_statistic} as follows. Consider the vector stacked evaluations of the derivative functional at the grid points $\widehat{\bs \theta}$. Set $ \bs b \isdef \widehat{\bs \Omega}_\lambda^{-1/2} \, \widehat{\bs \theta}$, where $\widehat{\bs \Omega}_\lambda^{-1/2}$ is a matrix root of $\widehat{\bs \Omega}_\lambda^{-1}$. Then, we can write:
\begin{equation*}
    W_N \isdef N \, \min_{\bs c\in \Rbb^{n}_{+}} \, \, (\widehat{\bs \theta}-\bs c)^\top \widehat{\bs \Omega}_\lambda^{-1}(\widehat{\bs \theta}-\bs c) =  N \, \min_{\bs c\in \Rbb^{n}_{+}} \, \, \|\widehat{\bs \Omega}_\lambda^{-1/2}\bs c - \bs b\|_2^2.
\end{equation*}
The optimization problem has a unique minimizer 
\begin{equation*}
    \bs c^\star \isdef \underset{\bs c\in \Rbb^{n}_{+}}{\operatorname{argmin}}  \, \, \|\widehat{\bs \Omega}_\lambda^{-1/2}\bs c - \bs b\|_2^2,
\end{equation*}
that can be solved as a non-negative least-squares program. Define the residuals $\bs r \isdef \widehat{\bs \Omega}_\lambda^{-1/2}\bs c^\star - \bs b$. Then, we can compute the test statistic as $W_N = N \|\bs r\|_2^2$.

\end{document}